\definecolor{increment}{rgb}{0.4627450980392157,0.5333333333333333,0.3568627450980392}
\definecolor{decrement}{rgb}{0.6078431372549019,0.2235294117647059,0.13333333333333333}
\newtheorem{prop}{Proposition}
\newtheorem{assumption}{Assumption}
\newtheorem{corollary}{Corollary}
\begin{document}

\title{A Unified Gradient-based Framework for Task-agnostic Continual Learning-Unlearning}

\author{Zhehao Huang, Xinwen Cheng, Zhang Jie, JingHao Zheng, Haoran Wang, Zhengbao He, Tao Li \\ Xiaolin Huang, ~\IEEEmembership{Senior Member,~IEEE}
\thanks{Z. Huang, X. Cheng, Z. Jie, J. Zheng, H. Wang, Z. He, T. Li and X. Huang are with Institute of Image Processing and Pattern Recognition, Shanghai Jiao Tong University, 200240 Shanghai, P.R. China. Email: $\{$kinght\_h, xinwencheng, zhangjie20000411, zjh20030406, haoran\_whynot, lstefanie, li.tao, xiaolinhuang$\}$@sjtu.edu.cn
}
}

\markboth{submitted to IEEE Transactions on Pattern Analysis and Machine Intelligence}%
{Huang \MakeLowercase{\textit{et al.}}: A Unified Gradient-based Framework for Task-agnostic Continual Learning-Unlearning}


\IEEEtitleabstractindextext{%
\begin{abstract}
Recent advancements in deep models have highlighted the need for intelligent systems that combine continual learning (CL) for knowledge acquisition with machine unlearning (MU) for data removal, forming the Continual Learning-Unlearning (CLU) paradigm.
While existing work treats CL and MU as separate processes, we reveal their intrinsic connection through a unified optimization framework based on Kullback-Leibler divergence minimization. This framework decomposes gradient updates for approximate CLU into four components: learning new knowledge, unlearning targeted data, preserving existing knowledge, and modulation via weight saliency. A critical challenge lies in balancing knowledge update and retention during sequential learning-unlearning cycles. To resolve this stability-plasticity dilemma, we introduce a remain-preserved manifold constraint to induce a remaining Hessian compensation for CLU iterations. A fast-slow weight adaptation mechanism is designed to efficiently approximate the second-order optimization direction, combined with adaptive weighting coefficients and a balanced weight saliency mask, proposing a unified implementation framework for gradient-based CLU. Furthermore, we pioneer task-agnostic CLU scenarios that support fine-grained unlearning at the cross-task category and random sample levels beyond the traditional task-aware setups. Experiments demonstrate that the proposed UG-CLU framework effectively coordinates incremental learning, precise unlearning, and knowledge stability across multiple datasets and model architectures, providing a theoretical foundation and methodological support for dynamic, compliant intelligent systems. 
\end{abstract}

\begin{IEEEkeywords}
machine learning, continual learning, machine unlearning, steepest descent
\end{IEEEkeywords}}

\maketitle

\IEEEdisplaynontitleabstractindextext

%
\IEEEpeerreviewmaketitle

\IEEEraisesectionheading{\section{Introduction}\label{sec: introduction}}
Recent advancements in large-scale deep models~\cite{Kaplan2020ScalingLF} have driven a new phase of capability transformation in artificial intelligence. 
Future intelligent systems are anticipated to follow a dual paradigm: on the one hand, developing personalized systems for long-term companionship and lifelong learning in dynamic environments; on the other hand, addressing growing demands for data privacy protection and ethical constraints. This dual necessity has spurred the emergence of a new research direction, \textbf{Continual Learning-Unlearning (CLU)}~\cite{liu2022continual,chatterjee2024unifiedframeworkcontinuallearning}. In this context, intelligent agents must not only achieve capability evolution over time through continual learning but also possess reverse learning abilities, such as removing specific data traces or revising model cognition on demand.

A CLU system can be viewed as an integration of continual learning (CL)~\cite{DeLange2019ACL,Wang2023ACS,Zhou2023DeepCL} and machine unlearning (MU)~\cite{Bourtoule2019MachineU,shaik2023exploring,xu2024machine}, inheriting the characteristics of both settings. Previous research often regards CL as an incremental process of knowledge acquisition, while MU emerges as its reverse process, aiming to reduce knowledge. We hypothesize that both processes inherently share similar dynamics. Actually, CL and MU can be described by their oracle models, which serve as their upper bounds and are typically approximated by minimizing empirical loss according to their respective objectives. This insight demonstrates that the seemingly distinct two settings can both be framed as problems of empirical loss minimization transfer. Consequently, we are able to model CL and MU within a unified optimization framework. By modeling the oracle model in the CLU system, i.e., the optimal model trained on the joint dataset after removing all unlearning data, and establishing the corresponding Kullback-Leibler (KL) divergence minimization problem, we derive the vanilla gradient descent direction in general CLU scenarios. By linking this optimization process to the mechanisms of CL and MU, we find that this direction can be decomposed into four components:  \ding{172} a weighted gradient descent direction for learning new knowledge; \ding{173} a weighted gradient ascent direction to eliminate the influence of unlearned samples; \ding{174} a gradient descent direction for retaining existing knowledge; and \ding{175} a weight saliency matrix modulating the optimization direction. This decomposition provides a novel perspective for integrating CL and MU methods, offering a unified framework for constructing general CLU systems.


A core challenge in CLU systems lies in the severe degradation of previously acquired knowledge when the model assimilates new knowledge or actively unlearns specific data. 
This phenomenon, known as \textbf{catastrophic forgetting}, has been widely studied in CL through approaches such as regulization~\cite{Kirkpatrick2016OvercomingCF,Ritter2018OnlineSL,Li2016LearningWF,Hou2019LearningAU}, replay~\cite{Chaudhry2019ContinualLW,Buzzega2020DarkEF,Rebuffi2016iCaRLIC,lopez2017gradient}, and parameter isolation~\cite{chen2024mofomomentumfilteredoptimizermitigating,hui2024hfthalffinetuninglarge}, but its solution within CLU scenarios remains underexplored.

Motivated by recent studies demonstrating that deep neural networks' parameter space and training dynamics inherently reside within low-dimensional manifold structures~\cite{Li2022LowDT,Hu2021LoRALA}, a key question arises: \textit{Can we estimate the optimization trajectory of CLU models within a low-dimensional manifold space to enable effective learning of new knowledge, targeted unlearning of specific data, and simultaneous retention of existing knowledge}?

To achieve this goal, we propose to discover the descent direction under the KL divergence on the remaining output distribution. Using such a manifold metric, the optimization direction can be amended by a second-order remaining Hessian. This geometric constraint can effectively prevent interference from learning new knowledge and the operation of targeted unlearning on the general old knowledge. To address the high computational complexity of the Hessian matrix~\cite{elsayed2022hesscale}, we design a fast-slow weight method to implicitly approximate the salient directions with Hessian modulation, adding only controllable computational overhead. Furthermore, through theoretical derivation, we develop an adaptive loss weighting strategy and a balanced weight saliency selection, forming our \textbf{U}nified \textbf{G}radient-based \textbf{C}ontinual \textbf{L}earning-\textbf{U}nlearning (\textbf{UG-CLU}) method.

\begin{figure*}[tb]
    \centering
    \includegraphics[width=\textwidth]{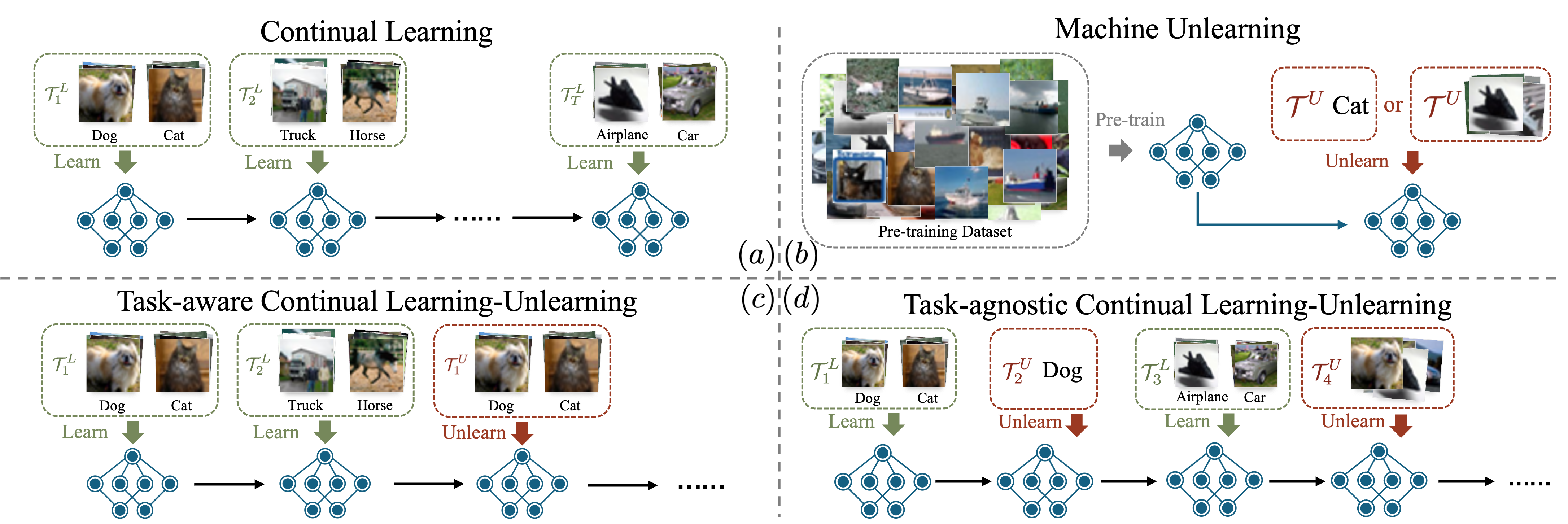}
    \caption{A visual comparison of the paradigms for CL, MU, and CLU systems, where $\mathcal T^L_t$ and $\mathcal T^U_t$ represent the learning and unlearning tasks, respectively. ($a$) Traditional \textbf{CL} systems~\cite{DeLange2019ACL,Wang2023ACS,Zhou2023DeepCL} adopt a sequential incremental learning paradigm, gradually expanding the model's capabilities through a stream of temporal task data. ($b$) Typical \textbf{MU} systems~\cite{Bourtoule2019MachineU,shaik2023exploring,xu2024machine} require removing the associated influence of specific classes or data from a well-trained model on pre-trained dataset. ($c$) Existing CLU systems~\cite{liu2022continual,chatterjee2024unifiedframeworkcontinuallearning}, while integrating CL and MU, not only need to learn from data but may also require unlearning. However, they only support task-level unlearning updates, i.e., \textbf{task-aware CLU}. ($d$) We further propose a \textbf{task-agnostic CLU} system that supports fine-grained knowledge units (classes or data samples) for targeted unlearning operations, enabling more precise control over cognitive evolution.}
    \label{fig: system overview}
\end{figure*}

Furthermore, existing research preliminarily establishes the basic paradigm of CLU~\cite{liu2022continual,chatterjee2024unifiedframeworkcontinuallearning}, but limited to task-aware CLU, restricting unlearning operations to complete tasks at a time. This constraint fails to align with practical requirements for granular knowledge management, which often demands the selective removal of specific cross-task categories or even individual data samples. To bridge this gap, we pioneer a task-agnostic CLU setup and construct two novel benchmarks for cross-task category unlearning and random sample unlearning, providing protocols for evaluating fine-grained knowledge control capabilities. The key contributions of this paper include:

\begin{itemize}
    \item[$\bullet$] We provide a novel perspective that integrates CL and MU methods to derive the optimization direction for the general CLU problem.  
    \item[$\bullet$] We obtain the optimization direction for the CLU problem within the remain-preserved manifold, preventing damage to the general retention capability.  
    \item[$\bullet$] We propose a fast-slow weight method to implicitly approximate the salient CLU update directions modulated by online Hessian and introduce the UG-CLU method by combining adaptive loss weighting and a balanced weight saliency selection.  
    \item[$\bullet$] We propose a task-agnostic realistic CLU setting, design two benchmarks for unlearning inter-task categories and random samples, and validate the effectiveness of our method across multiple datasets and architectures.
\end{itemize}

\textbf{Comparison with our conference work}. This paper builds upon the previous NeurIPS 2024 conference version~\cite{Huang2024UnifiedGM} and incorporates the following significant enhancements: \ding{172} We extend the conference version's focus on the MU scenario to the more complex CLU scenario. \ding{173} We re-model the approximate MU analysis based on the steepest descent in the conference version within the approximate CLU scenario. \ding{174} We discover that gradient-based CL and MU methods inherently share the same theoretical framework and can be modularly designed using our proposed UG-CLU framework. \ding{175} We expand and adapt the implementation method from the conference version to the CLU scenario. \ding{176} We introduce a new task-agnostic CLU setting for the CLU scenario and design two benchmarks for cross-task category and interclass confusion unlearning. Extensive experiments in the CLU scenario validate the effectiveness of the UG-CLU framework.

\section{Related Work}

\subsection{Continual Learning}
CL~\cite{DeLange2019ACL,Wang2023ACS,Zhou2023DeepCL} aims to address the critical challenge of intelligent models continuously absorbing new knowledge in sequential task streams. Its core dilemma lies in balancing the model's plasticity (the ability to integrate new knowledge) and stability (the ability to retain old knowledge) to mitigate the phenomenon of catastrophic forgetting~\cite{McCloskey1989CatastrophicII, Ratcliff1990ConnectionistMO,Zenke2017ContinualLT}. Current mainstream gradient optimization methods can be systematically categorized into the following three paradigms: 
Regularization-based methods~\cite{Kirkpatrick2016OvercomingCF,Ritter2018OnlineSL,Li2016LearningWF,Hou2019LearningAU}: These maintain model stability by constraining the evolution path in parameter space or function space. Representative approaches include: Stability constraints on parameter distributions, which use the Fisher information matrix to characterize parameter importance and limit shifts in critical weights~\cite{Kirkpatrick2016OvercomingCF,Ritter2018OnlineSL}. Knowledge distillation-guided representation alignment, which retains historical task representations through output distillation losses between old and new models~\cite{Li2016LearningWF,Hou2019LearningAU}.
Data replay-based methods~\cite{Chaudhry2019ContinualLW,Buzzega2020DarkEF,Rebuffi2016iCaRLIC,lopez2017gradient}: These explicitly store historical samples or generate pseudo-samples to reconstruct an approximate complete data distribution, enabling joint optimization of old and new tasks under the empirical risk minimization framework. While effective in mitigating forgetting, these methods face dual constraints of increasing storage overhead and privacy compliance risks.
Selective parameter update methods~\cite{chen2024mofomomentumfilteredoptimizermitigating,hui2024hfthalffinetuninglarge}: These achieve local plasticity control by dynamically identifying task-sensitive parameters. 
Notably, existing research predominantly focuses on the independent optimization of specific technical routes, lacking a unified theoretical explanation of gradient optimization mechanisms. We reveal that existing gradient-oriented methods can be regarded as special cases of this framework under different constraints, thereby uncovering the intrinsic consistency of various CL methods in terms of loss surface geometric properties. This advancement offers a novel analytical foundation for designing theoretically stable continual learning algorithms.

\subsection{Machine Unlearning}
The objective of MU is to eliminate the memory effect of pre-trained models on specific training samples~\cite{shaik2023exploring,xu2024machine,Neel2020DescenttoDeleteGM,Sekhari2021RememberWY}, thereby implementing a data privacy protection at the model~\cite{Neel2020DescenttoDeleteGM,Sekhari2021RememberWY,Ginart2019MakingAF,Guo2019CertifiedDR}. Existing theoretical research has established a mathematical framework for exact MU in convex optimization scenarios from a parameter probability perspective, which has been validated in linear models~\cite{Neel2020DescenttoDeleteGM,Guo2019CertifiedDR,Koh2017UnderstandingBP}. However, when dealing with complex models such as deep neural networks, which are non-convex and high-dimensional, approximate unlearning methods based on gradient optimization have gradually become a mainstream research direction. In recent years, a variety of approximate unlearning methods based on loss function reconstruction have emerged in academia~\cite{Golatkar2019EternalSO,Warnecke2021MachineUO,Jia2023ModelSC,Graves2020AmnesiacML,Thudi2021UnrollingSU,Chundawat2022CanBT}. These methods effectively reduce the risk of privacy leakage caused by target data points by designing regularization terms or constraints with targeted unlearning capabilities. Notably, such unlearning operations can trigger a phenomenon similar to catastrophic forgetting in CL. While ensuring the unlearning of target samples, the model's overall performance on the remaining dataset significantly degrades. Although existing research attempts to maintain the model's generalization ability through fine-tuning on the remaining set, these methods often overlook the analysis of the geometric properties of the optimization path~\cite{Amari1998NaturalGW,Martens2014NewIA,shrestha2023natural}. Differing from traditional gradient update strategies, this study introduces the curvature information of the remaining dataset into the unlearning process for the first time. By systematically analyzing the second-order optimization characteristics in the parameter space, our proposed method can more precisely preserve the model's generalization performance on the original tasks while performing unlearning operations.

\subsection{Coninual Learning-Unlearning}
CLU~\cite{liu2022continual,chatterjee2024unifiedframeworkcontinuallearning} integrates the complementary mechanisms of continual learning and machine unlearning, providing a theoretical framework for knowledge management systems in dynamic environments. This field has gradually gained academic attention in recent years, but existing research still faces significant limitations in system design and unlearning granularity. \cite{Shibata2021LearningWS} first explored the CLU-like system, defining the unlearning process as an adaptive strategy to enhance model plasticity through selective parameter degradation. However, the unlearning behavior in this framework is essentially a byproduct of the CL process, lacking the ability to respond to explicit unlearning instructions. ~\cite{liu2022continual} establishes the first complete formal model of CLU, but to achieve precise unlearning, it requires complete tasks as the minimum unlearning unit and necessitates independent storage of training snapshots for each potentially unlearnable task, leading to exponential storage overhead when faced with large-scale sequential tasks. ~\cite{chatterjee2024unifiedframeworkcontinuallearning} attempts to unify the CL and MU processes by adopting a dual-teacher knowledge distillation architecture to handle knowledge accumulation and unlearning requests separately. However, this design still follows the task-level unlearning assumption as in \cite{Shibata2021LearningWS}, and the alternating iteration mechanism of parallel models significantly increases computational and storage complexity. Notably, unlearning requirements in real-world scenarios are often fine-grained, targeting specific task subsets or individual data points. We pioneer a task-agnostic unified CLU framework, enabling precise responses to unlearning requests of arbitrary scales.
\section{Preliminary}\label{sec: preliminary}

\subsection{Problem Formulation}

In continual learning-unlearning, an agent parameterized by $\theta$ dynamically responds to $T$ sequential task requests $\{\mathcal T_t\}_{t=1}^T$. Each task request $\mathcal T_t=(\mathcal D_t, \mathcal Q_t)$ consists of a tuple of the request type and corresponding dataset: the dataset $\mathcal D_t=\{z_i\}_{i=1}^{N_t}$ contains $N_t$ data points to be either learned or unlearned, including features and labels $z_i=(x_i,y_i)$ in supervised learning. The request type is selected from either learning or unlearning tasks, denoted by $\mathcal Q_t \in\{L, U\}$. We abbreviate learning tasks as $\mathcal T_t^L$, with the corresponding dataset denoted as $\mathcal D_t^L$, and unlearning tasks and datasets as $\mathcal T_t^U$ and $\mathcal D_t^U$, respectively. For learning tasks, we focus on class-incremental learning~\cite{Hsu2018ReevaluatingCL,vandeVen2019ThreeSF} in CL, where the classes between learning tasks do not overlap, and the model must respond to all previously seen classes during prediction. For MU tasks, we focus on removing the impact of a specific class or specific data points. To achieve a dynamic balance between knowledge update and retention, we introduce a replay-based constrained memory buffer, which stores a sampled subset of historical learning task data to approximate the remaining data for replay during tasks. Apart from the explicitly stored subset of historical data in the constrained memory buffer~\cite{Chaudhry2019ContinualLW}, the agent cannot actively access data from previous tasks.

\subsection{Approximate Continual Learning-Unlearning}

Both CL and MU constitute specialized instances within the CLU framework, respectively. Each has corresponding Oracle models that define performance upper bounds for performance pursuit.
In the CL, the joint training model serves as the optimal solution~\cite{Wang2023ACS}, achieving global empirical risk minimization by violating the data isolation principle and accessing the full historical dataset. In the MU, the model retrained on the remaining dataset, excluding all unlearning data, which is ideal for exact unlearning but incurs significant computational costs~\cite{Izzo2020ApproximateDD}. Similarly, in the CLU system, a counterpart exists, namely the optimal model trained on the joint learning data, excluding all unlearning request data. This is formalized as the minimum of empirical loss on all remaining data $\mathcal D^R= (\bigcup_{\{t|\mathcal Q_t=L\}} \mathcal D_t) \backslash ( \bigcup_{\{t|\mathcal Q_t=U\}} \mathcal D_t)$:
\begin{equation}
    \theta_{*}=\mathop{\arg\min}\limits_\theta \sum_{i \in \mathcal{D}^R} \ell\left(\theta ; z_i^R\right)=\mathop{\arg\min}\limits_\theta \mathcal{L}^R(\theta),
\end{equation}
Directly pursuing a model with a parameter distribution identical to the optimal model is intractable. Specifically, joint retraining (Joint-RT) requires storing historical data in the memory buffer and performing full retraining, which incurs unbearable computational resource consumption. Therefore, this study treats such idealized models as gold standards for approximation rather than as comparative baselines. A more practice-oriented optimization paradigm is \textbf{approximate CLU}. When a new task request arrives, we dynamically guide the output distribution of the current model to align with that of the oracle model. Thus, a direct evaluation metric is to quantify the divergence between the model and the optimal model. Additionally, we construct a multi-dimensional evaluation covering CL efficacy, MU effectiveness, knowledge forgetting metrics, and privacy protection strength. For specific benchmark testing schemes and evaluation details, please refer to \textbf{Section.\,\ref{subsec: setting and evaluation}} and \textbf{Appendix.\,C}.

\subsection{Assumptions}

To establish a rigorous theoretical analysis framework, we propose four fundamental assumptions based on common characteristics of practical application scenarios:
\begin{assumption}\label{assump: iid}
    The training and testing data of the model are independently and identically distributed (i.i.d.).
\end{assumption}
This is a fundamental assumption in machine learning~\cite{Ruppert2004TheEO}, ensuring the model's generalization ability during continual learning and the reliability of evaluation on the final test set.
\begin{assumption}\label{assump: local convex}
    The loss function satisfies local convexity conditions within the neighborhood where the model parameters converge to a locally optimal solution.
\end{assumption}
This assumption is rooted in the local approximation theory of non-convex optimization in deep learning~\cite{Choromaska2014TheLS}, providing a theoretical basis for analyzing parameter update directions using second-order Taylor expansions.
\begin{assumption}\label{assump: unlearn history}
    All data samples to unlearn belong to the historical training dataset, i.e., $\mathcal D^U_t \subseteq \bigcup_{i<t} \mathcal{D}_i^{L}$.
\end{assumption}
This assumption avoids the engineering complexity of dynamically verifying data ownership in practical systems, allowing this research to focus on the core analysis of unlearning efficacy at the MU algorithm level.
\begin{assumption}\label{assump: theta k argmin request}
Given the remaining dataset $\mathcal D_t^R$ and the dynamically adjusted dataset $\mathcal D^{\mathcal Q}_t$ (containing incremental or decremental samples) for the $t$-th task, for each iterative parameter $\theta_k$ during the task optimization process, there exists a set of non-negative coefficients $\boldsymbol{\varepsilon}^{\mathcal Q}_k=\{\varepsilon^{\mathcal Q}_{k,i} \}_{i=1}^{N_t},\ \varepsilon^{\mathcal Q}_{k,i}$ that weight the current request data, such that
\begin{equation}
\begin{aligned}
\theta_k &=\mathop{\arg\min}\limits_\theta \sum_{i \in \mathcal{D}^R_t} \ell\left(\theta ; z_i^R\right) + \sum_{j \in \mathcal{D}^{\mathcal Q}_t} \varepsilon^{\mathcal Q}_{k,j}\ell\left(\theta ; z^{\mathcal Q}_j\right)\\
&=\mathop{\arg\min}\limits_\theta \mathcal{L}^R(\theta) + \mathcal L^{\mathcal Q}(\theta;\boldsymbol{\varepsilon}^{\mathcal Q}_k)
\end{aligned}
\end{equation}
\end{assumption}
The theoretical feasibility of this assumption is based on the Gaussian distribution characteristics of gradient noise in deep learning~\cite{Mandt2017StochasticGD}. Through the gradient balance condition $\nabla \mathcal{L}^R(\theta) + \nabla \mathcal L^{\mathcal Q}(\theta;\boldsymbol{\varepsilon}^{\mathcal Q}_k)=0$, an explicit solution for the weighting coefficients $\boldsymbol{\varepsilon}^{\mathcal Q}_k$ satisfying the non-negativity constraint can be constructed. This allows us to infer the next step's direction based on the parameters' optimization properties during the process.

\subsection{Steepest Descent}
Existing CL and MU methods are generally built upon the theoretical framework of gradient updates in parameter space~\cite{Wang2024AUA,Neel2020DescenttoDeleteGM,Thudi2021UnrollingSU}. To establish a unified analytical perspective, we reconstruct the geometric implications of gradient updates based on the \textbf{steepest descent}~\cite{Abrudan2008SteepestDA,Kim2022FisherSI,Martens2014NewIA}~\cite{Kovalev2019StochasticNA}. The goal of the steepest descent is to find the direction $\delta\theta=\theta_{k+1}-\theta_k$ that drives the objective function $F(\theta)$ to descend the fastest within a $\xi$-neighborhood of the current parameters $\theta_k$. This can be formulated as the following optimization problem. (See \textbf{Appendix.\,A.1} for proof.)
\begin{equation}
\begin{aligned}
\delta \theta &:=\mathop{\arg\min}_{\rho(\theta_k,\theta_k+\delta \theta) \leq \xi} F(\theta_k+\delta \theta) \\ 
\Rightarrow \theta_{k+1} &:=\mathop{\arg\min}_{\theta_{k+1}} F(\theta_{k+1}) + \frac{1}{\alpha_k(\xi,\theta_k)}\rho(\theta_{k},\theta_{k+1}),
\end{aligned}
\end{equation}
where $\rho(\cdot,\cdot)$ represents the manifold metric that defines the geometry of the parameter’s neighborhood. To simplify the derivation, we rewrite it to the form on the right, where $\alpha_t(\xi,\theta_t)$ represents the learning rate required to move the distance $\xi$. In the following, we fix a small learning rate to approximate a search within a local neighborhood. The characterization $\rho$ of the underlying coordinate space of the neighborhood will determine update directions, optimization paths, and the flatness of the minimum. Vanilla gradient descent is obtained using the Euclidean metric, Newton’s direction is measured by the second-order expansion of the objective function~\cite{Kovalev2019StochasticNA}, and the KL divergence in the output space induces a natural gradient~\cite{Martens2014NewIA,Calin2014GeometricMI}. Steepest descent has inspired continual learning in previous works. Next, we will explore approximate CLU through vanilla gradient descent and attempt to benefit from improved manifold metrics.
\section{A Unified Gradient-based Framework for Continual Learning-Unlearning}\label{sec: theory}

\subsection{Optimization Problem}

The optimization objective of approximate CLU aims to minimize the discrepancy between the current model's output distribution and that of the Joint-RT model. To achieve this goal, we unify knowledge incremental learning and decremental unlearning within the same framework, associating each task request with its corresponding ideal reference model. Precisely, without loss of generality, \textit{we can formalize the problem by omitting the task subscript $t$ and just considering a scenario involving two consecutive task requests}: let the optimal model parameters $\theta_0$ for the preceding task be defined as the solution obtained from joint training on the remaining data $\mathcal D^R$ and the data to be unlearned $\mathcal D^U$, i.e., $\theta_0=\mathop{\arg\min}_{\theta}\mathcal L^R(\theta) + \mathcal L^U(\theta)$. Meanwhile, the optimal reference parameters $\theta_*$ for the current task are the joint optimization solution corresponding to the remaining data $\mathcal D^R$ and the newly learned data $\mathcal D^L$ after removing $\mathcal D^U$, i.e., $\theta_*=\mathop{\arg\min}_{\theta}\mathcal L^R(\theta) + \mathcal L^L(\theta)$. We employ the KL divergence $D_{\mathrm{KL}}(\cdot,\cdot)$ to quantify the difference in model output distributions. Thus, solving approximate CLU is equivalent to iteratively updating the model parameters $\theta_k$ starting from the initial parameters $\theta_0$ along the steepest descent direction, gradually reducing the output distribution discrepancy with the ideal model $\theta_*$. The formulation of this optimization problem can be expressed as follows:
\begin{equation}\label{eq: optimization problem}
\begin{aligned}
\theta_{k+1}
=&\mathop{\arg\min}\limits_{\theta_{k+1}} D_{\text{KL}}\left(p_z(\theta_*)||p_z(\theta_{k+1})\right)+\frac{1}{\alpha_k}\rho(\theta_k,\theta_{k+1})\\
=&\mathop{\arg\min}\limits_{\theta_{k+1}}
\underbrace{D_{\text{KL}}\left(p_{z^R}(\theta_*)||p_{z^R}(\theta_{k+1})\right)}_{(a)}p^R\\ 
&+\underbrace{D_{\text{KL}}\left(p_{z^L}(\theta_*)||p_{z^L}(\theta_{k+1})\right)}_{(b)}p^L\\
&+\underbrace{D_{\text{KL}}\left(p_{z^U}(\theta_*)||p_{z^U}(\theta_{k+1})\right)}_{(c)}p^U\\ 
&+\frac{1}{\alpha_k}\underbrace{\rho(\theta_k,\theta_{k+1})}_{(d)}, 
\end{aligned}
\end{equation}
where $p_z(\theta) = p(z; \theta)$ represents the model output probability, and $p^R= p(\mathcal D^R)$, $p^L= p(\mathcal D^L)$, $p^U= p(\mathcal D^U)$ denote the partitions of the remaining, learning, and unlearning data, respectively. In class incremental classification problems, models output the class posterior probability $p(z ; \theta) = p(y | x; \theta)$, and the empirical risk for each sample is the cross-entropy loss. Analyzing the terms in \eqref{eq: optimization problem}:
\begin{itemize}
    \item[$\bullet$] ($a$) ensures the model's foundational capability by maintaining its performance on the remaining data $\mathcal D^R$;
    \item[$\bullet$] ($b$) and ($c$) correspond to the task requests for incremental learning $\mathcal D^L$ and unlearning $\mathcal D^U$, respectively;
    \item[$\bullet$] ($d$) employs the metric $\rho$ to constrain the magnitude of each update, thereby identifying the direction of the steepest descent on the manifold.
\end{itemize}
This modular design offers two theoretical advantages: First, although the initial modeling considers only adjacent task pairs, the actual optimization process can be recursively extended to task sequences of arbitrary length from it. Second, the framework does not presuppose specific implementations of learning or unlearning, demonstrating strong modality independence and application flexibility.

\begin{figure}[tb]
    \centering
    \includegraphics[width=0.48\textwidth]{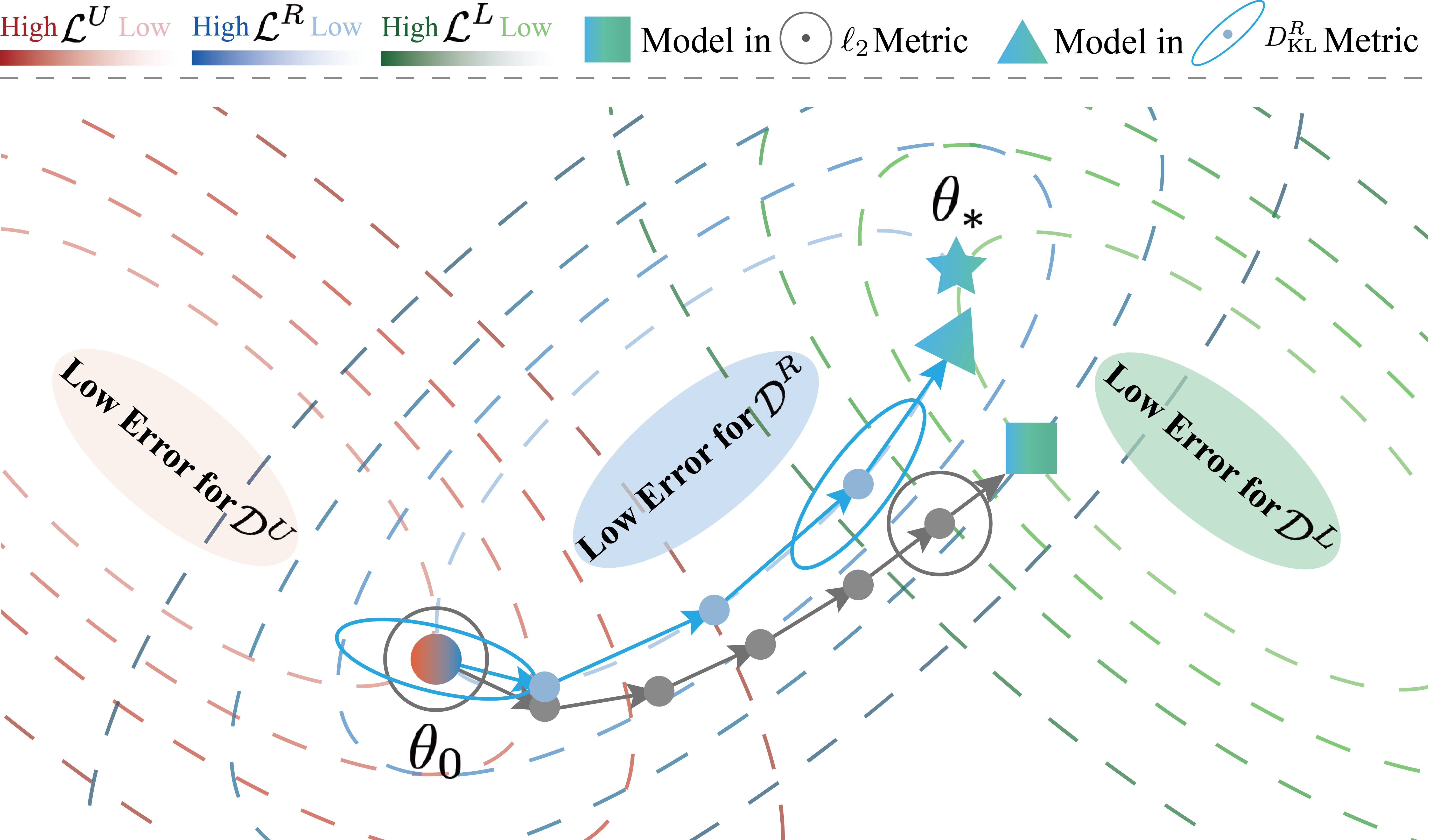}
    \caption{The visualization diagram for the optimization process of the proposed approximate CLU. We model the trajectory starting from the initial model $\theta_0$, which is well-trained on both the unlearning data $\mathcal D^U$ and the remaining data $\mathcal D^R$, guiding the model's output distribution toward the optimal model $\theta_*$ that achieves low error on both the newly learning data $\mathcal D^L$ and the remaining data $\mathcal D^R$. For the solution, we not only derive the vanilla gradient descent optimization trajectory (gray arrow trajectory in the figure, \textbf{Proposition.\,\ref{prop: clu vanilla}}) based on the Euclidean $\ell_2$ metric, but also further consider the optimization trajectory corrected by the remaining Hessian under the remain-preserved manifold $D^R_{\mathrm{KL}}$ metric (blue arrow trajectory in the figure, \textbf{Proposition.\,\ref{prop: clu remain}}). This effectively balances the efficacy of knowledge updates with the preservation of model generalization performance, significantly reducing performance degradation on the remaining data $\mathcal D^R$.}
    \label{fig: manifold}
\end{figure}

\subsection{Vanilla Gradient Descent}

Although \eqref{eq: optimization problem} has an intuitive theoretical form, its direct solution faces a fundamental obstacle—we cannot directly access the parameters $\theta_*$ of the ideal reference model (oracle model). Therefore, we turn to optimality condition analysis and Taylor expansion at $\theta_*$, attempting to derive feasible optimization directions for approximate CLU. Based on the theoretical insight from \textbf{Assumption\,\ref{assump: theta k argmin request}}, the current model parameters $\theta_k$ can be analytically interpreted as specific sampling points on the continuous optimization trajectory connecting the initial parameters $\theta_0$ and the ideal parameters $\theta_*$. This geometric perspective can be formally expressed as:
\begin{equation}\label{eq: theta k argmin lu}
\theta_{k} =\mathop{\arg\min}\limits_\theta \mathcal{L}^R(\theta) + \mathcal L^{L}(\theta;\boldsymbol{\varepsilon}^{L}_{k})+ \mathcal L^{U}(\theta;\boldsymbol{\varepsilon}^{U}_{k})
\end{equation}
where the boundary conditions for the weighting coefficients are as follows: when $\theta_{k}=\theta_0$, $\boldsymbol \varepsilon^L_{k}=\boldsymbol 0,\boldsymbol \varepsilon^U_{k}=\boldsymbol 1$; and when $\theta_{k}=\theta_*$, $\boldsymbol \varepsilon^L_{k}=\boldsymbol 1,\boldsymbol \varepsilon^U_{k}=\boldsymbol 0$, with $\boldsymbol 0$ and $\boldsymbol 1$ representing all-zero and all-one sets, respectively. Leveraging the local convexity of the loss function at the optimal parameter point (as stated in \textbf{Assumption\,\ref{assump: local convex}}), we perform a Taylor expansion of the parameters and utilize the property that the optimality gradient is zero. This allows us to determine the vanilla gradient descent for approximate CLU by using the Euclidean distance $\ell_2$ as the manifold metric, as stated in \textbf{Proposition.\,\ref{prop: clu vanilla}}:
\begin{prop}\label{prop: clu vanilla}
Under the Euclidean manifold metric, $\rho(\theta_k,\theta_{k+1})=\frac12\lVert \theta_k-\theta_{k+1} \rVert^2$. Assuming that the current model satisfies \eqref{eq: theta k argmin lu}. Let $H_*^R=\nabla^2\mathcal L^R(\theta_*)$ denote the Hessian of the oracle model on the remaining set and $H_{k}^L=\nabla^2\mathcal L^L(\theta_{k}), H_{k}^U=\nabla^2\mathcal L^U(\theta_{k})$ denote the Hessian of the current model on the request sets, respectively. Then, the steepest descent direction that minimizes \eqref{eq: optimization problem} is approximately:
\begin{equation}\label{eq: clu vanilla}
\begin{aligned}
    \theta_{k+1}-\theta_k:\approx-\alpha_k
    [
    &\underbrace{\nabla \mathcal L^R(\theta_k)}_{(R)}p^R\\
    +\underbrace{\frac12(H^L_kp^L + H^U_kp^U)(H_*^R)^{-1}}_{(S)}[&\underbrace{\nabla \mathcal L^L(\theta_k;\boldsymbol{1}-\boldsymbol{\varepsilon}^L_k)}_{(L)}] \\
    + \underbrace{\frac12(H^L_kp^L + H^U_kp^U)(H_*^R)^{-1}}_{(S)}[&\underbrace{\nabla \mathcal L^U(\theta_k;-\boldsymbol{\varepsilon}^U_k)}_{(U)}]
    ].
\end{aligned}
\end{equation}
\end{prop}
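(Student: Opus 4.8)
The plan is to start from the right-hand optimality form of steepest descent, reduce it to an explicit gradient step, and then evaluate that gradient by Taylor-expanding each Kullback--Leibler term around the inaccessible oracle $\theta_*$ while eliminating $\theta_*$ through the stationarity conditions supplied by Assumption~\ref{assump: theta k argmin request}. Concretely, with $\rho(\theta_k,\theta_{k+1})=\frac12\lVert\theta_k-\theta_{k+1}\rVert^2$, stationarity of \eqref{eq: optimization problem} in $\theta_{k+1}$ reads $\nabla_{\theta_{k+1}} F(\theta_{k+1})+\tfrac1{\alpha_k}(\theta_{k+1}-\theta_k)=0$, where $F(\theta)=p^R D^R_{\mathrm{KL}}(\theta)+p^L D^L_{\mathrm{KL}}(\theta)+p^U D^U_{\mathrm{KL}}(\theta)$ is the weighted divergence. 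Approximating the gradient at $\theta_{k+1}$ by its value at $\theta_k$ for a small step gives $\theta_{k+1}-\theta_k\approx-\alpha_k\nabla F(\theta_k)$, so the whole proof reduces to rewriting the three weighted blocks $p^R\nabla D^R_{\mathrm{KL}}(\theta_k)$, $p^L\nabla D^L_{\mathrm{KL}}(\theta_k)$, $p^U\nabla D^U_{\mathrm{KL}}(\theta_k)$ as the $(R)$, $(S)(L)$, $(S)(U)$ terms of \eqref{eq: clu vanilla}.

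The central analytic tool is the identity, valid under Assumption~\ref{assump: local convex}, that each $D^Z_{\mathrm{KL}}(\theta)=D_{\mathrm{KL}}(p_{z^Z}(\theta_*)\Vert p_{z^Z}(\theta))$ attains its minimum at $\theta_*$ with vanishing gradient and Hessian equal to the Fisher information, which coincides with the loss Hessian $H_*^Z$ at the optimum. A first-order expansion then yields $\nabla D^Z_{\mathrm{KL}}(\theta_k)\approx H_*^Z(\theta_k-\theta_*)$ for $Z\in\{R,L,U\}$. For the remaining block I would combine this with a Taylor expansion of $\nabla\mathcal L^R$ about $\theta_*$ and the overparameterised approximation $\nabla\mathcal L^R(\theta_*)\approx 0$ to identify $H_*^R(\theta_k-\theta_*)\approx\nabla\mathcal L^R(\theta_k)$, which is exactly term $(R)$ and, read backwards, furnishes the substitution $\theta_k-\theta_*\approx(H_*^R)^{-1}\nabla\mathcal L^R(\theta_k)$ that injects the inverse remaining Hessian into the other two blocks.

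For the learning and unlearning blocks I would approximate the request-data Hessians along the short trajectory, $H_*^L\approx H_k^L$ and $H_*^U\approx H_k^U$, turning $p^L\nabla D^L_{\mathrm{KL}}(\theta_k)+p^U\nabla D^U_{\mathrm{KL}}(\theta_k)$ into $(H_k^Lp^L+H_k^Up^U)(\theta_k-\theta_*)$, and then substitute the remaining-block relation to expose the prefactor $(H_k^Lp^L+H_k^Up^U)(H_*^R)^{-1}$ acting on $\nabla\mathcal L^R(\theta_k)$. It then remains to rewrite $\nabla\mathcal L^R(\theta_k)$ as the bracket $(L)+(U)$: using the stationarity of \eqref{eq: theta k argmin lu} at $\theta_k$, one checks exactly that $\nabla\mathcal L^L(\theta_k;\boldsymbol 1-\boldsymbol\varepsilon^L_k)+\nabla\mathcal L^U(\theta_k;-\boldsymbol\varepsilon^U_k)=\nabla\mathcal L^R(\theta_k)+\nabla\mathcal L^L(\theta_k;\boldsymbol 1)$, while the weight boundary conditions ($\boldsymbol\varepsilon^L_k=\boldsymbol 0,\boldsymbol\varepsilon^U_k=\boldsymbol 1$ at $\theta_0$ and $\boldsymbol\varepsilon^L_k=\boldsymbol 1,\boldsymbol\varepsilon^U_k=\boldsymbol 0$ at $\theta_*$) fix the descent-on-$\mathcal D^L$/ascent-on-$\mathcal D^U$ signs and force the update to vanish at the oracle. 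Replacing $\nabla\mathcal L^R(\theta_k)$ by half of this bracket collapses the prefactor into the saliency matrix $(S)=\tfrac12(H_k^Lp^L+H_k^Up^U)(H_*^R)^{-1}$ and completes the match.

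The step I expect to be the main obstacle is justifying the replacement of $\nabla\mathcal L^R(\theta_k)$ by half the bracket, i.e.\ the approximation $\nabla\mathcal L^L(\theta_k;\boldsymbol 1)\approx\nabla\mathcal L^R(\theta_k)$ that produces the factor $\tfrac12$. This is exact only at $\theta_*$, where both gradients vanish by the definition of the oracle, and must be argued to hold to leading order along the trajectory, equivalently that the learning- and remaining-set curvatures are comparable ($H_*^R+H_*^L\approx 2H_*^R$). More broadly, controlling the compounded approximations --- Fisher information $\approx$ loss Hessian at the optimum, constant Hessian along the short step ($H_k\approx H_*$), and the overparameterised condition $\nabla\mathcal L^R(\theta_*)\approx 0$ --- and verifying that the discarded remainders are genuinely higher order in $\theta_k-\theta_*$ is the delicate bookkeeping that carries the weight of the argument; the surrounding expansions are routine.
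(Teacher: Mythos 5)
Your proposal is correct and lands exactly on \eqref{eq: clu vanilla}, and its skeleton---linearize the steepest-descent stationarity condition, exploit the vanishing score and Fisher/Hessian structure of each KL term at $\theta_*$, and eliminate the inaccessible $\theta_*$ via the stationarity supplied by Assumption~\ref{assump: theta k argmin request}---is the same as the paper's. Where you genuinely diverge is in how $\Delta_k=\theta_k-\theta_*$ is eliminated. The paper solves the stationarity equation of \eqref{eq: theta k argmin lu} for $\Delta_k$ in one shot: it substitutes $\nabla\mathcal L^R(\theta_*)=-\nabla\mathcal L^L(\theta_*;\boldsymbol 1)$ (oracle optimality), Taylor-expands that gradient back to $\theta_k$ while replacing the learning-set Hessian by $H_*^R$, and reads off $\Delta_k\approx\frac12(H_*^R)^{-1}[\nabla\mathcal L^L(\theta_k;\boldsymbol 1-\boldsymbol\varepsilon^L_k)+\nabla\mathcal L^U(\theta_k;-\boldsymbol\varepsilon^U_k)]$; term $(R)$ is obtained directly as $\nabla\mathcal L^R(\theta_k)$ from the KL-gradient identity, with no expansion and no interpolation hypothesis. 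You instead pivot everything through $\nabla\mathcal L^R(\theta_k)$: you invoke $\nabla\mathcal L^R(\theta_*)\approx 0$ once to identify $(R)$ and once to write $\Delta_k\approx(H_*^R)^{-1}\nabla\mathcal L^R(\theta_k)$, and then your identity $\nabla\mathcal L^L(\theta_k;\boldsymbol 1-\boldsymbol\varepsilon^L_k)+\nabla\mathcal L^U(\theta_k;-\boldsymbol\varepsilon^U_k)=\nabla\mathcal L^R(\theta_k)+\nabla\mathcal L^L(\theta_k;\boldsymbol 1)$ (which is exact, by linearity in the weights plus stationarity) together with $\nabla\mathcal L^L(\theta_k;\boldsymbol 1)\approx\nabla\mathcal L^R(\theta_k)$ produces the factor $\frac12$. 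The two factor-$\frac12$ mechanisms are equivalent: expanding both gradients at $\theta_*$ shows your gradient-level approximation decomposes into interpolation ($\nabla\mathcal L^R(\theta_*)\approx 0$) plus the curvature matching $H^L\approx H_*^R$, and the latter is precisely the substitution the paper performs silently when it writes $\nabla^2\mathcal L^R(\theta_*)\Delta_k$ in place of the learning-set Hessian term inside its Taylor step. The net comparison: your route costs one extra hypothesis---the interpolation condition, which the paper's proof of this proposition avoids and only invokes later for Proposition~\ref{prop: clu remain}---but buys transparency, since it isolates and names the curvature-matching approximation that the paper's own derivation uses without comment; you correctly flagged that step as the one carrying the real weight of the argument.
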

The proof can be found in \textbf{Appendix.\,A.2}. To elucidate the effectiveness of the gradient-based methods, we decompose this vanilla gradient descent direction in \eqref{eq: clu vanilla} into four components: ($R$), ($L$), ($U$), and ($S$).
\begin{itemize}
    \item[$\bullet$] (\textbf{R}emaining): Theoretically, this corresponds to the gradient descent direction for the entire historical remaining data $\mathcal D^R$. However, constrained by the data access limitations of CLU, we approximate this by maintaining a subset of historical data through a dynamic memory buffer, thereby mitigating catastrophic forgetting during task iterations.
    \item[$\bullet$] (\textbf{L}earning) and (\textbf{U}nlearning): These represent the gradient directions for the current learning task and the task to be unlearned, respectively. Notably, the loss weighting coefficient for the unlearning term is negative, effectively constituting a gradient ascent update for $\mathcal D^U$, i.e., eliminating the influence of these specific samples.
    \item[$\bullet$] (\textbf{S}aliency): An adaptive adjustment term composed of three parts of the Hessian matrix, where $H_k^L$ and $H_k^U$ amplify parameter updates critical to the current task, while $(H_*^R)^{-1}$ suppresses update components that may harm the retention of historical knowledge. This is similar to the parameter importance components increasingly emphasized in recent CL and MU works~\cite{chen2024mofomomentumfilteredoptimizermitigating,Fan2023SalUnEM}, serving as a dynamic arbitrator between knowledge updating and retention.
\end{itemize}
Thus, we present the vanilla gradient descent iteration method for general CLU. This structured decomposition not only reveals the intrinsic connections between the CLU framework and classical CL and MU methods but, more importantly, provides clear guiding principles for designing theoretically guaranteed universal CLU algorithms. By appropriately configuring the functional implementation of each component, it can flexibly adapt to the requirements of different application scenarios.

\subsection{Improvement in Remain-preserving Manifold}

In fact, employing Euclidean distance as the manifold metric for parameter updates is arbitrary. It treats all coordinates as equally important because the local second-order expansion is identical, $\nabla^2_{\theta_t}(\frac12\lVert \theta_t-\theta_{t+1} \rVert^2)=I$. This homogeneous metric overlooks the critical differences in parameters' importance for task knowledge retention, particularly in addressing the pervasive issue of catastrophic forgetting in CLU, where an urgent need is to construct a specialized manifold structure in the parameter space. The core characteristic of this manifold lies in ensuring that, even when parameters undergo significant displacements in Euclidean space, the model's performance on previously acquired knowledge remains stable~\cite{Hoang2023LearnTU}. Since directly predicting the impact of new tasks on existing knowledge is theoretically infeasible, this study proposes achieving knowledge stability by constructing a remain-preserving manifold. Specifically, we require that the parameter update process always resides within the manifold space that minimally affects the output distribution of the remaining set. An empirical characterization of such a manifold could be \textit{the KL divergence on the output distribution of the remaining set}, $D_{\mathrm{KL}}^R$. Given that the models before and after the task are both well-trained on the remaining data, their outputs are very close to the ground-truth remaining distribution. By starting with $\theta_0$ and limiting updates to this manifold, the maintained output distribution keeps almost consistent throughout the update iterations, $\nabla\mathcal L^R(\theta_0)\approx\nabla\mathcal L^R(\theta_*)\approx0$. This consistency allows for a second-order Taylor expansion at $\theta_k$ for terms ($a$) and ($d$) in \eqref{eq: optimization problem}, providing crucial curvature information for task updates to prevent deviations in the model output on the remaining set, leading to \textbf{Proposition.\,\ref{prop: clu remain}}:
\begin{prop}\label{prop: clu remain}
    Using the model output KL divergence on the remaining set as the manifold metric, $\rho(\theta_k,\theta_{k+1})=D^R_{\text{KL}}\left(p_{z^R}(\theta_k)||p_{z^R}(\theta_{k+1}))\right)$. Assuming that the current model satisfies \eqref{eq: theta k argmin lu}. Let $H_k^R=\nabla^2\mathcal L^R(\theta_k)$ represent the Hessian w.r.t. $\theta_k$ on the remaining set, then the steepest descent direction that minimizes \eqref{eq: optimization problem} is approximately:
\begin{equation}\label{eq: clu remain}
\begin{aligned}    
\Rightarrow\theta_{k+1}-\theta_k:\approx 
&-\frac{\alpha_k}{p^R+1}        
\underbrace{(H_k^R)^{-1}}_{(R)} \cdot\\          
&\underbrace{\frac12(H^L_kp^L + H^U_kp^U)(H_*^R)^{-1}}_{(S)}\cdot\\    
&[
    \underbrace{\nabla \mathcal L^L(\theta_k;\boldsymbol{1}-\boldsymbol{\varepsilon}^L_k)}_{(L)} + 
    \underbrace{\nabla \mathcal L^U(\theta_k;-\boldsymbol{\varepsilon}^U_k)}_{(U)}      
].
\end{aligned}
\end{equation}
\end{prop}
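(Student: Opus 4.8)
The plan is to reuse the optimality-condition-plus-Taylor-expansion strategy that established Proposition~\ref{prop: clu vanilla}, altering only the two terms through which the new metric $\rho=D^R_{\mathrm{KL}}$ enters, namely the metric term $(d)$ and the remaining term $(a)$. Writing $\delta\theta=\theta_{k+1}-\theta_k$, I would first impose the stationarity condition $\nabla_{\theta_{k+1}}\big[(a)p^R+(b)p^L+(c)p^U+\tfrac{1}{\alpha_k}(d)\big]=0$ and collect the four gradient contributions separately, so that the effect of changing the metric is isolated in $(a)$ and $(d)$.

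For the learning and unlearning terms $(b)$ and $(c)$ I would argue that nothing changes relative to Proposition~\ref{prop: clu vanilla}: these KL terms compare the oracle outputs $p_{z^L}(\theta_*),p_{z^U}(\theta_*)$ against $p(\theta_{k+1})$ and are independent of $\rho$, so their combined gradient is the same saliency-modulated direction
\[
G:=\tfrac12\big(H^L_k p^L+H^U_k p^U\big)(H^R_*)^{-1}\big[\nabla\mathcal L^L(\theta_k;\boldsymbol 1-\boldsymbol\varepsilon^L_k)+\nabla\mathcal L^U(\theta_k;-\boldsymbol\varepsilon^U_k)\big]
\]
derived there from the oracle optimality $\nabla\mathcal L^R(\theta_*)+\nabla\mathcal L^L(\theta_*)=0$, Assumption~\ref{assump: theta k argmin request}, and the second-order expansions at $\theta_*$ (which are what produce the $(H^R_*)^{-1}$ preconditioner).

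The genuinely new computation is in $(a)$ and $(d)$, where I would invoke the standard fact that the second-order expansion of an output-space KL divergence is governed by the Fisher information matrix, which for the cross-entropy objective coincides with the loss Hessian on the remaining set. Expanding $(d)=D^R_{\mathrm{KL}}(p_{z^R}(\theta_k)\,\|\,p_{z^R}(\theta_{k+1}))$ about $\theta_{k+1}=\theta_k$ gives $\tfrac12\delta\theta^\top H^R_k\delta\theta$, so its gradient is $H^R_k\delta\theta$: the Euclidean identity that appeared in Proposition~\ref{prop: clu vanilla} is replaced by $H^R_k$. Expanding $(a)=D^R_{\mathrm{KL}}(p_{z^R}(\theta_*)\,\|\,p_{z^R}(\theta_{k+1}))$ about $\theta_k$, the first-order coefficient is $\nabla_\theta D^R_{\mathrm{KL}}(p_{z^R}(\theta_*)\,\|\,p_{z^R}(\theta))\big|_{\theta_k}\approx\nabla\mathcal L^R(\theta_k)$ and the curvature is again $H^R_k$; here the remain-preserving hypothesis $\nabla\mathcal L^R(\theta_0)\approx\nabla\mathcal L^R(\theta_*)\approx\nabla\mathcal L^R(\theta_k)\approx 0$ annihilates the first-order part, so $(a)$ contributes $p^R H^R_k\delta\theta$ instead of the first-order term $\nabla\mathcal L^R(\theta_k)p^R$ of the Euclidean case.

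Assembling, the stationarity condition becomes $\big(p^R+\tfrac1{\alpha_k}\big)H^R_k\delta\theta+G\approx 0$; factoring the shared curvature and inverting gives $\delta\theta\approx -c\,(H^R_k)^{-1}G$ for a scalar $c$ determined by the $p^R$-weighted curvature of $(a)$, the $\alpha_k^{-1}$-weighted curvature of $(d)$, and the step-size normalization, which after constant-tracking yields the $\tfrac{\alpha_k}{p^R+1}$ prefactor and hence exactly \eqref{eq: clu remain}. I expect the crux to be the $(a)$/$(d)$ step: one must justify (i) truncating both KL divergences at second order inside the $\xi$-neighborhood (via the local convexity of Assumption~\ref{assump: local convex}) and identifying their curvature with $H^R_k$ through the Fisher-equals-Hessian identity, and (ii) the vanishing of the first-order remaining term, which is precisely what the remain-preserving manifold provides and what lets $H^R_k$ emerge as a clean left-multiplying preconditioner rather than an additive gradient. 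The final reconciliation of scalar constants is routine by comparison with Proposition~\ref{prop: clu vanilla}.
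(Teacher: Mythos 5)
Your proposal follows essentially the same route as the paper's proof: the learning/unlearning gradients are carried over unchanged from Proposition~\ref{prop: clu vanilla}, the remaining term $(a)$ and the constraint $(d)$ are expanded to second order with curvature $F^R_k \approx H^R_k$, the remain-preserving hypothesis $\nabla\mathcal L^R(\theta_k)\approx 0$ annihilates the first-order remaining contribution, and the stationarity condition $\bigl(p^R+\tfrac{1}{\alpha_k}\bigr)H^R_k\,\delta\theta + G \approx 0$ is solved for $\delta\theta$. The only cosmetic differences are that the paper obtains the curvature of $(a)$ directly as the loss Hessian (an expectation under the oracle output distribution) and invokes the Fisher-equals-Hessian identification only for $(d)$, and that it makes the same gloss you do in reporting the prefactor as $\alpha_k/(p^R+1)$ when the algebra actually yields $\alpha_k/(\alpha_k p^R+1)$.
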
 
The proof is deferred to \textbf{Appendix.\,A.3}. The task update in \eqref{eq: clu remain} incorporates second-order Hessian information related to the remaining data to guide the optimization direction. Specifically, the large curvature directions of $H_k^R$ correspond to the weights that encapsulate retained knowledge, while the small curvature directions encourage model updates for effective knowledge adaptation.

\begin{table}[ht]
        \centering
         \caption{\footnotesize{Comparison of CL, MU, and CLU methods. We decompose the steepest descent direction for CLU into four parts: the remaining part ($R$), the learning part ($L$), the unlearning part ($U$), and the weight saliency mask ($S$) as in \eqref{eq: clu vanilla} and \eqref{eq: clu remain}. Our method consider all the components within the remain-preserving manifold.
         }}
        \label{tab: revisit_CL_MU}
        \resizebox{0.49\textwidth}{!}{
        \begin{tabular}{l|cc|cccc|c}
        \toprule
        \multirow{2}{*}{\textbf{Methods}} & \multicolumn{2}{c|}{\textbf{Task}} & \multicolumn{4}{c|}{\textbf{CLU components}} & \textbf{Manifold} \\
         & CL & MU & ($R$) & ($L$) & ($U$) & ($S$) & \textbf{Metric} \\
         \midrule
        FT~\cite{Warnecke2021MachineUO} & \checkmark & \checkmark & & \checkmark & \checkmark & & ${\ell_2}$  \\
        \midrule
        LwF~\cite{Li2016LearningWF} & \checkmark & & \checkmark & \checkmark & & & ${\ell_2}$  \\
        ER~\cite{Chaudhry2019ContinualLW} & \checkmark & & \checkmark & \checkmark & & & ${\ell_2}$  \\
        DER++~\cite{Buzzega2020DarkEF} & \checkmark & & \checkmark & \checkmark & & & ${\ell_2}$  \\
        EWC~\cite{Kirkpatrick2016OvercomingCF} & \checkmark & & \checkmark & \checkmark & & & $D^R_{\text{KL}}$  \\
        MoFO~\cite{chen2024mofomomentumfilteredoptimizermitigating} & \checkmark & & & \checkmark & &\checkmark & ${\ell_2}$ \\
        \midrule
        GA~\cite{Graves2020AmnesiacML,Thudi2021UnrollingSU} & & \checkmark & & & \checkmark & & ${\ell_2}$  \\
        NegGrad+~\cite{Kurmanji2023MachineUI} & & \checkmark & \checkmark & & \checkmark & & ${\ell_2}$  \\
        SCRUB~\cite{Kurmanji2023TowardsUM} & & \checkmark & \checkmark & & \checkmark & & ${\ell_2}$ \\
        L2UL~\cite{Cha2023LearningTU} & & \checkmark &  \checkmark & & \checkmark & & $D^R_{\text{KL}}$ \\
        SalUn~\cite{Fan2023SalUnEM} & & \checkmark & \checkmark & &\checkmark &\checkmark & ${\ell_2}$  \\
        \midrule
        UniCLUN~\cite{chatterjee2024unifiedframeworkcontinuallearning} & \checkmark & \checkmark & \checkmark & \checkmark &\checkmark & & ${\ell_2}$ \\
        \rowcolor{gray!20}
        UG-CLU & \checkmark & \checkmark & \checkmark & \checkmark &\checkmark & \checkmark& $D^R_{\text{KL}}$ \\
        \bottomrule
        \end{tabular}}
\vspace*{-3mm}
\end{table}
\subsection{Revisit Gradient-based CL and MU methods in a Unified Framework}

In practical applications, we can allow the model to handle continual learning and unlearning task requests sequentially. The aforementioned composite CLU problem can be decomposed into two fundamental scenarios: pure learning tasks ($\mathcal D^U=\emptyset$) and pure unlearning tasks ($\mathcal D^L=\emptyset$). By setting the corresponding datasets to the empty set $\emptyset$, the dynamic iterations of the CLU system will reduce to the descriptions of standard CL and MU, respectively. From \textbf{Proposition.\,\ref{prop: clu remain}}, we derive the following two simplified corollaries:
\begin{corollary}\label{coro: cl remain} (CL)
    Using the model output KL divergence on the remaining set as the manifold metric, $\rho(\theta_k,\theta_{k+1})=D^R_{\text{KL}}\left(p_{z^R}(\theta_k)||p_{z^R}(\theta_{k+1}))\right)$. Assuming that the current model satisfies \eqref{eq: theta k argmin lu}. Let $\tilde\alpha^L_k=\alpha_k p^L/(\alpha_kp^R+1)$, then the steepest descent direction that minimizes \eqref{eq: optimization problem} is approximately:
\vspace{-1mm}
\begin{equation}\label{eq: cl remain}
\theta_{k+1}-\theta_k:\approx 
-\tilde\alpha^L_k        
\underbrace{(H_k^R)^{-1}}_{(R)}            
[
\underbrace{H^L_k(H_*^R)^{-1}}_{(S)}
\underbrace{\nabla \mathcal L^L(\theta_k;\boldsymbol{1}-\boldsymbol{\varepsilon}^L_k)}_{(L)}     
].
\end{equation}
\end{corollary}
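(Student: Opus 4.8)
The plan is to obtain Corollary~\ref{coro: cl remain} as the pure continual-learning specialization of Proposition~\ref{prop: clu remain}, namely the regime in which no unlearning request is present. Concretely, I would set $\mathcal{D}^U=\emptyset$, so that its probability mass $p^U=p(\mathcal{D}^U)$ vanishes and the KL term $(c)$ drops out of the objective in \eqref{eq: optimization problem}. Under this substitution the reweighted unlearning gradient $\nabla\mathcal{L}^U(\theta_k;-\boldsymbol{\varepsilon}^U_k)$ is identically zero, and the Hessian contribution $H^U_k p^U$ disappears from the saliency factor, leaving $\tfrac12 H^L_k p^L (H_*^R)^{-1}$. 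Only the learning weights $\boldsymbol{\varepsilon}^L_k$ remain active, with the boundary conditions of \eqref{eq: theta k argmin lu} collapsing to $\boldsymbol{\varepsilon}^L_k=\boldsymbol 0$ at $\theta_0$ and $\boldsymbol{\varepsilon}^L_k=\boldsymbol 1$ at $\theta_*$.

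Rather than substitute blindly into the final formula of Proposition~\ref{prop: clu remain}, I would re-trace its second-order expansion with the metric weight $1/\alpha_k$ kept explicit, since this is what produces the denominator $\alpha_k p^R+1$ appearing in $\tilde\alpha^L_k$. The key observation is that the remain-preserving assumption gives $\nabla\mathcal{L}^R(\theta_k)\approx 0$, so the remaining KL term $(a)$, weighted by $p^R$, and the metric term $(d)=D^R_{\mathrm{KL}}$, weighted by $1/\alpha_k$, both expand around $\theta_{k+1}=\theta_k$ to the \emph{same} quadratic form governed by $H^R_k$: the two KL functionals differ only in their first argument ($\theta_*$ versus $\theta_k$), which under remain-preservation yields nearly identical remaining output distributions, and in each case the Fisher information matrix coincides with $H^R_k$ at the expansion point. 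Their weights therefore add, and setting the gradient of the resulting quadratic to zero introduces the inverse factor $(p^R+1/\alpha_k)^{-1}=\alpha_k/(\alpha_k p^R+1)$.

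It then remains to collect scalars. The $p^L$ weight of the learning term $(b)$, the inverse factor above, and the numerical constant from the expansion all absorb into the single effective step size $\tilde\alpha^L_k=\alpha_k p^L/(\alpha_k p^R+1)$, which is exactly the definition in the statement. The structural factors carry over unchanged: the leading $(H^R_k)^{-1}$ implements the remain-preserving correction by damping updates along the high-curvature, knowledge-bearing directions of the remaining set; the factor $H^L_k(H_*^R)^{-1}$ is the saliency modulation; and the reweighted gradient $\nabla\mathcal{L}^L(\theta_k;\boldsymbol 1-\boldsymbol{\varepsilon}^L_k)$ is the learning direction. Assembling these gives precisely \eqref{eq: cl remain}.

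The main obstacle I anticipate is scalar bookkeeping rather than any conceptual difficulty. I must verify carefully that terms $(a)$ and $(d)$ genuinely contribute the \emph{same} matrix $H^R_k$ to the quadratic form—this rests on the Fisher--Hessian identity together with $\nabla\mathcal{L}^R\approx 0$—so that their weights combine additively into $p^R+1/\alpha_k$ and reconcile the denominator (and the absent $\tfrac12$) with the closed form of $\tilde\alpha^L_k$. I must also confirm that $\mathcal{D}^U=\emptyset$ is applied consistently throughout the expansion, so that no spurious cross term between the (vanished) unlearning direction and the surviving learning direction is left behind. Once these checks pass, the reduction to \eqref{eq: cl remain} is immediate.
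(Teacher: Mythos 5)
Your proposal is correct and takes essentially the same route as the paper: the paper obtains Corollary~\ref{coro: cl remain} simply by setting $\mathcal D^U=\emptyset$ (so $p^U=0$ and the unlearning gradient vanishes) in Proposition~\ref{prop: clu remain}, whose appendix proof is exactly the second-order expansion you re-trace, with terms ($a$) and ($d$) each contributing an $H^R_k$ quadratic whose weights add to $p^R+1/\alpha_k$. Your extra care over the scalar prefactor is well placed rather than redundant, since the denominator $\alpha_k p^R+1$ in $\tilde\alpha^L_k$ matches the appendix derivation (not the main-text prefactor $\alpha_k/(p^R+1)$ of Proposition~\ref{prop: clu remain}), and the leftover factor $\tfrac12$ is indeed silently absorbed by the paper as well.
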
 
\begin{corollary}\label{coro: mu remain} (MU)
    Using the model output KL divergence on the remaining set as the manifold metric, $\rho(\theta_k,\theta_{k+1})=D^R_{\text{KL}}\left(p_{z^R}(\theta_k)||p_{z^R}(\theta_{k+1}))\right)$. Assuming that the current model satisfies \eqref{eq: theta k argmin lu}. Let $\tilde\alpha^U_k=\alpha_k p^U/(\alpha_k p^R+1)$, then the steepest descent direction that minimizes \eqref{eq: optimization problem} is approximately:
\vspace{-1mm}
\begin{equation}\label{eq: mu remain}
\theta_{k+1}-\theta_k:\approx 
-\tilde\alpha^U_k       
\underbrace{(H_k^R)^{-1}}_{(R)}            
[
    \underbrace{H^U_k(H_*^R)^{-1}}_{(S)}   
    \underbrace{\nabla \mathcal L^U(\theta_k;-\boldsymbol{\varepsilon}^U_k)}_{(U)}      
].
\end{equation}
\end{corollary}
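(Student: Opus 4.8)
The plan is to obtain Corollary~\ref{coro: mu remain} as the pure-unlearning specialization of Proposition~\ref{prop: clu remain}, i.e.\ by setting $\mathcal D^L=\emptyset$ so that $p^L=0$, the term $(b)$ in \eqref{eq: optimization problem} disappears, and the learning gradient $\nabla\mathcal L^L$ drops out of \eqref{eq: clu remain}. The only genuine work is to re-track the learning-rate and data-proportion constants that were lumped together in Proposition~\ref{prop: clu remain}, so that the scalar prefactor emerges cleanly as $\tilde\alpha^U_k=\alpha_k p^U/(\alpha_k p^R+1)$. I would therefore restate the reduced objective $D_{\text{KL}}(p_{z^R}(\theta_*)\|p_{z^R}(\theta_{k+1}))p^R+D_{\text{KL}}(p_{z^U}(\theta_*)\|p_{z^U}(\theta_{k+1}))p^U+\tfrac{1}{\alpha_k}\rho(\theta_k,\theta_{k+1})$ with $\rho=D^R_{\text{KL}}(p_{z^R}(\theta_k)\|p_{z^R}(\theta_{k+1}))$, and write its first-order stationarity condition in $\theta_{k+1}$.

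First I would handle the two remaining-set terms $(a)$ and $(d)$. Using Assumption~\ref{assump: local convex} and the Fisher--Hessian identification for the output-space KL, the gradient of each w.r.t.\ $\theta_{k+1}$, expanded around $\theta_k$, reduces to $H_k^R(\theta_{k+1}-\theta_k)$: for $(a)$ because the remain-preserving manifold enforces $\nabla\mathcal L^R(\theta_k)\approx 0$ so only the curvature term survives, and for $(d)$ because the metric's Hessian at $\theta_k$ is the remaining Fisher $\approx H_k^R$. Summing with weights $p^R$ and $1/\alpha_k$ gives a combined coefficient $(p^R+1/\alpha_k)H_k^R$ on $\theta_{k+1}-\theta_k$; factoring out $1/\alpha_k$ is precisely what produces the denominator $\alpha_k p^R+1$ in $\tilde\alpha^U_k$.

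The crux is the gradient of the unlearning KL term $(c)$, which I expect to be the main obstacle and which is already the core of the Proposition~\ref{prop: clu remain} derivation. I would expand $\nabla_{\theta_{k+1}}D_{\text{KL}}(p_{z^U}(\theta_*)\|p_{z^U}(\theta_{k+1}))$ around $\theta_*$, where it vanishes by the zero-mean-score property, obtaining $\approx H_k^U(\theta_k-\theta_*)$ after the Fisher--Hessian approximation. To evaluate $\theta_k-\theta_*$ I would combine the oracle stationarity $\nabla\mathcal L^R(\theta_*)\approx 0$ with the weighted stationarity of $\theta_k$ from \eqref{eq: theta k argmin lu} (Assumption~\ref{assump: theta k argmin request}), namely $\nabla\mathcal L^R(\theta_k)+\nabla\mathcal L^U(\theta_k;\boldsymbol\varepsilon^U_k)\approx 0$, together with a Taylor expansion $\nabla\mathcal L^R(\theta_k)\approx H_*^R(\theta_k-\theta_*)$; this yields $\theta_k-\theta_*\approx-(H_*^R)^{-1}\nabla\mathcal L^U(\theta_k;\boldsymbol\varepsilon^U_k)$. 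Substituting and absorbing the sign into the weighting via $-\nabla\mathcal L^U(\cdot;\boldsymbol\varepsilon^U_k)=\nabla\mathcal L^U(\cdot;-\boldsymbol\varepsilon^U_k)$ gives $\nabla_{\theta_{k+1}}(c)|_{\theta_k}\approx H_k^U(H_*^R)^{-1}\nabla\mathcal L^U(\theta_k;-\boldsymbol\varepsilon^U_k)$, which is exactly the saliency-modulated unlearning direction (a negatively weighted, i.e.\ ascent, update on $\mathcal D^U$).

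Finally I would solve the stationarity condition $(p^R+1/\alpha_k)H_k^R(\theta_{k+1}-\theta_k)+p^U\,\nabla_{\theta_{k+1}}(c)|_{\theta_k}\approx 0$ for the step, pre-multiplying by $(H_k^R)^{-1}$ and collecting constants into $\tilde\alpha^U_k$, which reproduces \eqref{eq: mu remain}. The delicate points, all concentrated in the $(c)$ computation, are the validity of the Fisher--Hessian identification at $\theta_k$ versus $\theta_*$, the consistent retention of only first-order terms in the small-step regime where $H_k^R(\theta_{k+1}-\theta_k)$ in $(a)$ is dominated by the metric contribution, and the sign bookkeeping that turns the KL-matching gradient into a gradient-\emph{ascent} update; the remaining algebra is routine.
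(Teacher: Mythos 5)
Your proposal is correct and follows essentially the same route as the paper: the paper obtains this corollary by specializing Proposition~\ref{prop: clu remain} to the pure-unlearning case $\mathcal D^L=\emptyset$, and your stationarity computation for terms ($a$), ($c$), ($d$) is exactly the appendix's proof of Proposition~\ref{prop: clu remain} restricted to that case. One point in your favor: your re-tracking of the constants is more precise than the paper's one-line reduction, since literally substituting $p^L=0$ into \eqref{eq: clu remain} would retain the factor $\frac12$ and the denominator $p^R+1$, whereas your from-scratch derivation — where the oracle optimality degenerates to $\nabla\mathcal L^R(\theta_*)=0$, so the $H_*^R(\theta_k-\theta_*)$ term is no longer doubled — correctly gives $\theta_k-\theta_*\approx-(H_*^R)^{-1}\nabla\mathcal L^U(\theta_k;\boldsymbol\varepsilon^U_k)$ without the $\frac12$, and the combined coefficient $(p^R+1/\alpha_k)H_k^R$ from terms ($a$) and ($d$) yields precisely the stated $\tilde\alpha^U_k=\alpha_k p^U/(\alpha_k p^R+1)$.
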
 

By systematically integrating \textbf{Proposition.\,\ref{prop: clu vanilla},\,\ref{prop: clu remain}} and \textbf{Corollary.\,\ref{coro: cl remain},\,\ref{coro: mu remain}}, we encapsulate existing gradient-based CL and MU methods within a unified theoretical framework, as summarized in \textbf{Table.\,\ref{tab: revisit_CL_MU}}. Specifically, replay-based methods explicitly optimize to retain preserved knowledge (e.g., ER~\cite{Chaudhry2019ContinualLW}, DER++~\cite{Buzzega2020DarkEF}, NegGrad+~\cite{Kurmanji2023MachineUI}); classic regularization-based CL/MU methods (e.g., EWC~\cite{Kirkpatrick2016OvercomingCF}, L2UL~\cite{Cha2023LearningTU}) maintain the model's general capability by approximating the second-order Hessian matrix $H_k^R$. Recent parameter selection methods (e.g., MoFO~\cite{chen2024mofomomentumfilteredoptimizermitigating}, SalUn~\cite{Fan2023SalUnEM}) enhance learning/unlearning efficiency while effectively mitigating knowledge degradation through salient parameter identification mechanisms. Although the existing CLU method UniCLUN~\cite{chatterjee2024unifiedframeworkcontinuallearning} proposed a unified solution for learning and unlearning, it lacks parameter saliency modeling and its optimization path is suboptimal. However, few approaches can fully cover our decomposition of approximate CLU. We propose a salient parameter approximation update mechanism based on the proposed remain-preserving manifold, ensuring continuous knowledge evolution while achieving efficient optimization.

 

\section{Our Method: UG-CLU}\label{sec: method}

Based on the optimization iteration derived from the theoretical analysis in \textbf{Section.\,\ref{sec: theory}}, this section systematically elaborates on the implementation framework of the method. \textit{To simplify the presentation, this section primarily focuses on the single-task optimization process, omitting the task subscript $t$}. Considering the discrepancies between practical application scenarios and theoretical assumptions, we construct a feasible implementation scheme through four key modules: First, a dynamic reservoir sampling buffer is employed to achieve an approximation of the historical data distribution. Second, the proposed fast-slow parameter update method is introduced to implicitly approximate the optimization direction modulated by the online Hessian, effectively avoiding the computational overhead associated with explicit Hessian matrix calculations. Simultaneously, a sample-wise adaptive weighting coefficient is designed to dynamically adjust the optimization weights of the task loss function. Finally, the balanced weight saliency method is utilized to select and update critical parameters.

\begin{figure*}[tb]
    \centering
    \includegraphics[width=\textwidth]{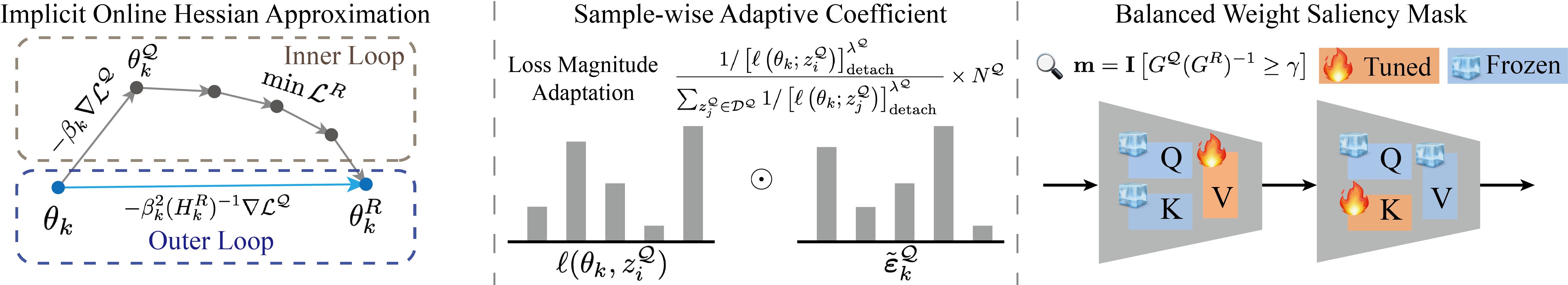}
    \caption{Three core modules of our proposed \textbf{UG-CLU}. The \textbf{implicit online Hessian approximation} derives the task optimization direction modulated by the Hessian through a fast-slow weight update mechanism, reducing computational overhead. The \textbf{sample-wise adaptive coefficient} leverages task properties and loss magnitude adaptation to re-weight the sample loss, effectively approximating the optimality conditions of the iterative process. The \textbf{balanced weight saliency mask} simultaneously considers the efficiency of task knowledge updates and the effectiveness of knowledge retention, selecting the most important parameters for updates. The combination of these three modules aligns with the results derived from our theoretical analysis, achieving approximate CLU.}
    \label{fig: method}
\end{figure*}

\subsection{Reservoir Memory Buffer}

Replay-based CL methods~\cite{Chaudhry2019OnTE,Buzzega2020DarkEF,Rebuffi2016iCaRLIC,lopez2017gradient} typically retain samples from historical tasks through a memory buffer. The core principle lies in utilizing the buffered data to maintain the statistical characteristics of the historical distribution, $\mathcal B^R\subset\mathcal D^R$, and in this section, $\mathcal L^R(\theta)=\sum_{i \in \mathcal{B}^R} \ell\left(\theta ; z_i^R\right)$. By jointly optimizing the current task data and replay samples, knowledge consolidation is achieved, thereby mitigating catastrophic forgetting. In the CLU system, we inherit this foundational framework and adapt it to different task types: (1) During the learning task phase, a dynamic reservoir sampling strategy~\cite{Vitter1985RandomSW,Chaudhry2019OnTE,Riemer2018LearningTL} is employed to update the buffer, ensuring a smooth transition of data distribution; (2) During the unlearning task phase, data samples related to the unlearning target are promptly erased from the buffer, preventing their gradient updates in subsequent training. It is important to note that although existing research has proposed various optimization methods for memory buffers, such as generative data augmentation~\cite{Shin2017ContinualLW} and importance sampling mechanisms~\cite{lopez2017gradient}, given that CLU systems are still in their infancy, this paper primarily focuses on validating the effectiveness of the basic architecture. Therefore, a minimally designed storage strategy is adopted. Efficient buffer optimization for CLU scenarios, including storage efficiency improvement and privacy-utility trade-offs, will be explored in future work.

\subsection{Implicit Online Hessian Approximation}\label{subsec: implicit hessian}

\textbf{Challenges in Hessian Approximation}.
To exploit the benefits of general CLU updates within the remain-preserving manifold, the key point lies in $(H_k^R)^{-1}$.
However, calculating the Hessian and its inverse for large-scale models is computationally demanding~\cite{elsayed2022hesscale}. 
Consequently, many methods have been developed to estimate the Hessian, such as Fisher information~\cite{Kirkpatrick2016OvercomingCF}, Fisher diagonals~\cite{Husz_r_2018}, and Kronecker-factored Laplace approximation~\cite{Ritter2018OnlineSL}. 
However, most of these methods compute a fixed Hessian before the task begins, which leads to progressively increasing estimation biases. These biases are exacerbated by cumulative errors in Taylor expansion, ultimately harming the retained performance. 

\textbf{Fast-slow weight update}. Given that computing the inverse of even well-approximated Hessian demands substantial computational resources, it is more practical to estimate the optimization direction post-Hessian inversion modulation directly. Inspired by recent insights into the connection between Meta-Continual Learning and Hessian approximation~\cite{wu2024meta}, we propose a fast-slow weight method~\cite{Zhang2019LookaheadOK,Nichol2018OnFM} for implicitly approximating the desired updates. The optimization problem for fast weight updates is formulated as follows:
\begin{equation}\label{eq: fast update}
\mathop{\min}_{\theta^{\mathcal Q}_{k}}\mathcal L^R(\theta_k^{\mathcal Q})\quad \text{s.t.}\quad \theta_k^{\mathcal Q}=\theta_k-\beta_k\nabla \mathcal L^{\mathcal Q}(\theta_k),
\end{equation}
where $\nabla \mathcal L^{\mathcal Q}(\theta_k)$ represents the gradient update for the learning task $\nabla \mathcal L^L(\theta_k;\boldsymbol{1}-\boldsymbol{\varepsilon}^L_k)$ or the unlearning task $\nabla \mathcal L^U(\theta_k;-\boldsymbol{\varepsilon}^U_k)$, and $\beta_k$ denotes its learning rate. The iterative process is depicted in \textbf{Figure.\,\ref{fig: method}}. A step of task update is taken at the current model, resulting in $\theta_k^{\mathcal Q}$. Several gradient descent updates on the remaining set follow to obtain the minimum point $\theta_k^R$. This fine-tuning ensures that the updated model adheres to the remain-preserving manifold. The slow weight updates leverage the underlying connection between $\theta_k$ and $\theta_k^R$, as stated in \textbf{Proposition.\,\ref{prop: implicit hessian approx}}:
\begin{prop}\label{prop: implicit hessian approx} 
For implicit online Hessian approximation in \eqref{eq: fast update}, suppose $\beta_k,\delta_k$ is small, $\beta_k<\sqrt{\delta_k/|\nabla \mathcal L^R(\theta_k)-[\nabla \mathcal L^R(\theta_k)]^2|}$, $\mathcal L^R$ is $\mu$-smooth, i.e., $\lVert \nabla \mathcal L^R(\theta) - \nabla \mathcal L^R(\theta') \rVert_2\leq \mu\lVert\theta-\theta' \rVert_2$, and there exist an $\zeta_k$-neighborhood $\mathcal N(\theta_k^R,\zeta_k)$ of the optimal model parameter $\theta_k^R=\mathop{\arg\min}_{\theta^{\mathcal Q}_{k}}\mathcal L^R(\theta^{\mathcal Q}_k)$, which includes $\theta_k$ and $\theta_k^{\mathcal Q}$. Then, the iterative update term approximately is,
\begin{equation}\label{eq: implicit hessian update}
\begin{aligned}
    \theta_k-\theta_k^R :\approx &\beta_k^2 \left[ \nabla^2\mathcal L^R(\theta_k)\right]^{-1}\nabla \mathcal L^{\mathcal Q}(\theta_k)\\
    =&\beta_k^2 (H_k^R)^{-1}\nabla \mathcal L^{\mathcal Q}(\theta_k).
\end{aligned}
\end{equation}
\end{prop}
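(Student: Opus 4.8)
The plan is to express the slow-weight displacement $\theta_k-\theta_k^R$ through the optimality condition at the inner minimizer $\theta_k^R$ together with a pair of Taylor expansions of $\mathcal L^R$, and then let the explicit fast-update relation $\theta_k^{\mathcal Q}=\theta_k-\beta_k\nabla\mathcal L^{\mathcal Q}(\theta_k)$ supply the $\beta_k$-dependence. Since $\theta_k$, $\theta_k^{\mathcal Q}$ and $\theta_k^R$ all lie in the neighborhood $\mathcal N(\theta_k^R,\zeta_k)$, and \textbf{Assumption\,\ref{assump: local convex}} makes $\mathcal L^R$ locally convex there so that $H_k^R$ is invertible, these expansions are legitimate and the curvature may be evaluated at $\theta_k$ up to a controlled error.

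First I would split the displacement as $\theta_k-\theta_k^R=(\theta_k-\theta_k^{\mathcal Q})+(\theta_k^{\mathcal Q}-\theta_k^R)=\beta_k\nabla\mathcal L^{\mathcal Q}(\theta_k)+(\theta_k^{\mathcal Q}-\theta_k^R)$. Next, using the stationarity $\nabla\mathcal L^R(\theta_k^R)=0$ and Taylor-expanding $\nabla\mathcal L^R$ about $\theta_k^R$, I would obtain $\theta_k^{\mathcal Q}-\theta_k^R\approx(H_k^R)^{-1}\nabla\mathcal L^R(\theta_k^{\mathcal Q})$. Then I would substitute the fast update and expand $\nabla\mathcal L^R(\theta_k^{\mathcal Q})$ around $\theta_k$, invoking $\mu$-smoothness to write $\nabla\mathcal L^R(\theta_k^{\mathcal Q})\approx\nabla\mathcal L^R(\theta_k)-\beta_k H_k^R\nabla\mathcal L^{\mathcal Q}(\theta_k)$. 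Combining the three relations, the first-order terms $\beta_k\nabla\mathcal L^{\mathcal Q}(\theta_k)$ cancel, leaving $(H_k^R)^{-1}\nabla\mathcal L^R(\theta_k)$ together with a second-order-in-$\beta_k$ remainder.

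The final step is to account for this remainder and the residual $\nabla\mathcal L^R(\theta_k)$. Here I would use the remain-preserving premise underlying the whole framework: because the iterates are constrained to the $D^R_{\mathrm{KL}}$ manifold on which the remaining output distribution is held essentially fixed, $\theta_k$ is a near-stationary point of $\mathcal L^R$, so $\nabla\mathcal L^R(\theta_k)$ is negligible at leading order, and the step-size bound $\beta_k<\sqrt{\delta_k/|\nabla\mathcal L^R(\theta_k)-[\nabla\mathcal L^R(\theta_k)]^2|}$ quantifies exactly how small it is relative to the tolerance $\delta_k$. Discarding the $O(\delta_k)$ residuals then isolates the surviving curvature-modulated contribution and yields $\theta_k-\theta_k^R\approx\beta_k^2(H_k^R)^{-1}\nabla\mathcal L^{\mathcal Q}(\theta_k)$.

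The hard part will be the bookkeeping around the first-order cancellation: because the leading $\beta_k$ contributions annihilate, the entire claimed update lives inside the second-order remainder, so the proof is only as strong as the bounds controlling the neglected pieces. Concretely, I expect the main obstacle to be justifying, via $\mu$-smoothness and the explicit $\beta_k$ bound, that (i) the Hessian may be frozen at $\theta_k$ across the three nearby points with the variation absorbed into $\delta_k$, and (ii) the residual gradient $\nabla\mathcal L^R(\theta_k)$ together with the higher-order Taylor terms combine to leave precisely $\beta_k^2(H_k^R)^{-1}\nabla\mathcal L^{\mathcal Q}(\theta_k)$ rather than a spurious term of a different order. This is exactly where the neighborhood $\mathcal N(\theta_k^R,\zeta_k)$ and the remain-preserving constraint must be invoked most carefully.
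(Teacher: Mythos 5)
There is a genuine gap in your final step, and it sits exactly where you predicted it would. Your decomposition and expansions are fine as far as they go: splitting $\theta_k-\theta_k^R=\beta_k\nabla\mathcal L^{\mathcal Q}(\theta_k)+(\theta_k^{\mathcal Q}-\theta_k^R)$, using stationarity $\nabla\mathcal L^R(\theta_k^R)=0$, and expanding $\nabla\mathcal L^R(\theta_k^{\mathcal Q})$ about $\theta_k$ indeed cancels the first-order terms and leaves
\begin{equation*}
\theta_k-\theta_k^R\;\approx\;(H_k^R)^{-1}\nabla\mathcal L^R(\theta_k)\;+\;\text{(higher-order remainder)}.
\end{equation*}
But if, as you propose, you now discard $\nabla\mathcal L^R(\theta_k)$ as negligible \emph{and} discard the remainder as a small residual, you are left with $\theta_k-\theta_k^R\approx 0$ --- a trivial statement --- not with $\beta_k^2(H_k^R)^{-1}\nabla\mathcal L^{\mathcal Q}(\theta_k)$. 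The claimed update cannot be "isolated" by throwing residuals away, because it \emph{is} the residual: at the order to which you expanded (loss to second order, gradient to first order), no term proportional to $\beta_k^2\nabla\mathcal L^{\mathcal Q}(\theta_k)$ exists anywhere in your computation. Carried out honestly, your second-order-in-$\beta_k$ remainder is $\tfrac{\beta_k^2}{2}(H_k^R)^{-1}\,\mathbf{T}\!\left[\nabla\mathcal L^{\mathcal Q}(\theta_k),\nabla\mathcal L^{\mathcal Q}(\theta_k)\right]$, where $\mathbf{T}$ is the \emph{third}-order derivative tensor of $\mathcal L^R$ --- an object you never introduce.

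This is precisely where the paper's proof does its real (if admittedly heuristic) work. The paper expands the stationarity condition $0=\nabla\mathcal L^R(\theta_k^R)$ around $\theta_k$ \emph{including the third-order tensor term}, drops $\nabla\mathcal L^R(\theta_k)$ via $\mu$-smoothness (it is bounded by $\mu\zeta_k$ since the gradient vanishes at $\theta_k^R$), then handles the tensor term by (i) replacing $\theta_k^R-\theta_k$ with $\theta_k^{\mathcal Q}-\theta_k=-\beta_k\nabla\mathcal L^{\mathcal Q}(\theta_k)$ inside it, (ii) treating the contraction elementwise so it becomes $\beta_k^2[\nabla\mathcal L^{\mathcal Q}(\theta_k)]^2$, and (iii) invoking the bound $\beta_k<\sqrt{\delta_k/|\nabla\mathcal L(\theta_k)-[\nabla\mathcal L(\theta_k)]^2|}$ to replace the elementwise square $\beta_k^2[\nabla\mathcal L^{\mathcal Q}]^2$ by $\beta_k^2\nabla\mathcal L^{\mathcal Q}$ up to $o(\delta_k)$. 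That square-to-linear conversion is the sole purpose of the odd-looking $\beta_k$ hypothesis in the proposition; you instead read it as quantifying the smallness of $\nabla\mathcal L^R(\theta_k)$, which is a misinterpretation (smallness of that gradient comes from $\mu$-smoothness and the $\zeta_k$-neighborhood, not from the $\beta_k$ bound). Without the third-order term and this conversion step, your argument has no mechanism to produce \eqref{eq: implicit hessian update}, so the proof as proposed does not close.
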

The proof is in \textbf{Appendix.\,A.4}. \textbf{Proposition.\,\ref{prop: implicit hessian approx}} indicates that the model $\theta_k^R$, obtained after fine-tuning using the process described in \eqref{eq: fast update}, is approximately equivalent to updating the current model $\theta_k$ by one step in the Hessian-adjusted task optimization direction. We use this direction to update the outer loop.

\textbf{Comparison to joint optimization loss}. To provide an intuitive explanation of the advantages of implicit online Hessian approximation, we investigate the differences in the updates between our optimization in \eqref{eq: fast update} and the joint optimization of task update and remaining losses, as seen in traditional replay-based CL and MU methods. We take the checkpoint after the first step of fine-tuning the remaining set as an example and ignore the step size.
\begin{equation}
\mathcal L^{\text{Replay}}(\theta_k)=\mathcal L^{\mathcal Q}(\theta_k)+\mathcal    L^{R}(\theta_k),
\end{equation}
\begin{equation}\label{eq: delta replay}
\Delta^{\text{Replay}}=\nabla\mathcal L^{\mathcal Q}(\theta_k)+\nabla\mathcal L^R(\theta_k),
\end{equation}
\begin{equation}
\begin{aligned}
\Delta^{\text{our}}\label{eq: delta our} 
\approx&\nabla\mathcal L^{\mathcal Q}(\theta_k)+\nabla\mathcal L^R(\theta_k)+\nabla^2\mathcal L^R(\theta_k)(\theta_k^{\mathcal Q}-\theta_k)    \\
=&(I-H^R_k)\nabla\mathcal L^{\mathcal Q}(\theta_k)+\nabla\mathcal L^R(\theta_k),
\end{aligned}
\end{equation}
Comparison of the updates in \eqref{eq: delta replay} and \eqref{eq: delta our} reveals that the remaining gradient is the same. Our task update in fast weight is adjusted by an additional term $-H^R_k$, which is absent in joint optimization. This modification weakens the directions that significantly impact the remain, thereby mitigating the damage of task loss on the retained performance. 

It is noteworthy that some existing CL and MU methods are intrinsically related to our approach at the optimization level. Specifically, the bi-level optimization framework proposed in CL models sequential task updates as an alternating inner-outer optimization problem~\cite{shaker2020bilevelcontinuallearning}. Meta-continual learning methods achieve rapid model adaptation through inner-loop hyper-gradient updates~\cite{Nichol2018OnFM,wu2024meta}. The typical two-phase unlearning mechanism in MU suggest first impairing the model and then repairing its performance~\cite{Tarun2021FastYE}, paralleling a single fast-slow weight update mechanism in our method.

\subsection{Sample-wise Adaptive Coefficient}\label{subsec: adaptive coef}

Based on the theoretical derivation, this paper proposes a parameter-adaptive weighted gradient update strategy. As shown in \eqref{eq: cl remain} for learning and \eqref{eq: mu remain} for unlearning tasks, the determination of the gradient update direction relies on the precise calculation of the weighting coefficients $\boldsymbol{\varepsilon}^L_k$ and $\boldsymbol{\varepsilon}^U_k$, which are derived from the minimization problem of $\theta_k$ in \eqref{eq: theta k argmin lu}. However, due to the computational intractability of solving the $\arg\min$ problem inversely, we constructs a feasible path between theoretical derivation and practical feasibility by analyzing the coefficient characteristics and the optimization objectives, thereby proposing a heuristic approximation strategy. It is important to emphasize that the learning and unlearning processes mathematically correspond to exploration and rectification behaviors in the parameter space, leading to significantly distinct characteristics in the two types of weighting coefficients.

\textbf{CL} For the weighting coefficients $\boldsymbol{\varepsilon}^L_k$ in learning tasks, the following conditions are satisfied: \ding{172} For the initial model, $\boldsymbol{\varepsilon}^L_0=\boldsymbol{0}$. \ding{173} For the final optimal model, $\boldsymbol{\varepsilon}^L_*=\boldsymbol{1}$. \ding{174} Assuming homogeneity among samples~\cite{Rozemberczki2022TheSV}, if $\ell(\theta ; z_i^L)>\ell(\theta ; z_j^L)$, then $\varepsilon_{k,i}<\varepsilon_{k,j}$. Moreover, considering the continuity in the model parameter space implies the continuity of $\boldsymbol{\varepsilon}^L_k$ in the function space. Therefore, our heuristic estimation of the weighting coefficients for learning tasks includes an increment with iteration steps and sample-wise adaptation based on loss magnitude.
\begin{equation}\label{eq: cl coefficient}
\tilde\varepsilon^L_{k,i}=\min\left(1, \frac{k}{K}\frac{1/\mathcal [\ell(\theta_k;z_i^L)]_{\text{detach}}^{\lambda^L}}{\sum_{z^L_j\in\mathcal D^L}1/[\ell(\theta_k;z^L_j)]_{\text{detach}}^{\lambda^L}}\times N^L\right)
\end{equation}
where $1\leq i \leq N^L$, $K$ represents the outer loop iteration, $\lambda^L$ is the temperature scalar to control the smoothness of the coefficients, and $[\cdot]_{\text{detach}}$ denotes the operation to detach a tensor from the computational graph, meaning that $\ell(\cdot)$ in \eqref{eq: cl coefficient} only serves for weighting coefficients and does not contribute to the gradient. During actual training, the parameter updates are weighted by $\boldsymbol{1}-\tilde{\boldsymbol{\varepsilon}}^L_k$. To avoid training instability caused by negative coefficients, we imposes an upper bound constraint of $1$ on the coefficients. This design ensures the monotonic decreasing property of the loss weighting coefficients similar to the effect of a learning rate decay mechanism, while dynamically adjusting the contribution of samples: gradually reducing the gradient update weights for samples that have been sufficiently fitted, while emphasizing attention on samples with significant residuals that have not yet been effectively represented by the model. 

\textbf{MU} For the weighting coefficients $\boldsymbol{\varepsilon}^U_k$ in unlearning tasks, the conditions are opposite to those of $\boldsymbol{\varepsilon}^L_k$: \ding{172} For the initial model, $\boldsymbol{\varepsilon}^U_0=\boldsymbol{1}$. \ding{173} For the final optimal model, $\boldsymbol{\varepsilon}^U_*=\boldsymbol{0}$. \ding{174} Similar to $\boldsymbol{\varepsilon}^L_k$, assuming homogeneity among samples, if $\ell(\theta ; z_i^U)>\ell(\theta ; z_j^U)$, then $\varepsilon_{k,i}<\varepsilon_{k,j}$. Similarly, based on the continuity in the function space, our heuristic estimation for the weighting coefficients in unlearning tasks includes a decrement with iteration steps and sample-wise adaptation based on loss magnitude.
\begin{equation}\label{eq: mu coefficient}
\tilde\varepsilon^U_{k,i}=(1-\frac{k}{K})\frac{1/\mathcal [\ell(\theta_k;z_i^U)]_{\text{detach}}^{\lambda^U}}{\sum_{z^U_j\in\mathcal D^U}1/[\ell(\theta_k;z^U_j)]_{\text{detach}}^{\lambda^U}}\times N^U,
\end{equation}
where $1\leq i \leq N^U$, $\lambda^U$ is the temperature scalar. $\tilde{\boldsymbol{\varepsilon}}^U_{k}$ is employed to modulate the gradient ascent loss for forgetting samples, thereby preventing model explosion and reducing the contribution to updates from samples whose influence has already been ablated, while prioritizing those whose losses remain minimal and are not yet adequately forgotten.

However, relying solely on empirical loss as an evaluation metric for sample contribution is limited and potentially biased. We believe that enhanced designs for coefficient estimation in future research could yield more accurate learning and unlearning results.

\subsection{Balanced Weight Saliency Mask}\label{subsec: saliency mask}

Recalling the saliency component in \eqref{eq: clu remain} of the theoretical framework, the optimization gradient is adjusted by the additional task-remain balancing Hessian $H^{\mathcal Q}_k(H_*^R)^{-1}$. However, online computation of the Hessian is extremely resource-intensive, and calculating the Hessian at the optimal model $\theta_*$ is also impractical. Therefore, an efficient approximation of both Hessians is necessary. For the latter, considering the remaining data, $H_*^R\approx H_0^R$. To align with our theoretical insight, we approximate the Hessian using the absolute value of the gradient and apply a hard thresholding operation, obtaining the weight saliency mask:
\begin{equation}\label{eq: saliency mask}
\mathbf{m}=\mathbf{I}\left[G_k^{\mathcal Q}(G^R_0)^{-1} \geq \gamma\right],
\end{equation}
where $G^{\mathcal Q}_k=|\nabla\mathcal L^{\mathcal Q}(\theta_k)|,G^R_0=|\nabla\mathcal L^R(\theta_0)|$, $\mathbf I[\cdot]$ is an element-wise indicator function, and $\gamma > 0$ is a hard threshold used to control the task-remain balance in selecting parameters. Previous CL and MU methods typically only consider task-related parameter saliency~\cite{chen2024mofomomentumfilteredoptimizermitigating,Fan2023SalUnEM}, whereas our balanced weight saliency mask, similar to balanced saliency methods\cite{Huang2024LearnFD,Foster2023FastMU}, simultaneously accounts for parameter saliency under the task-remain trade-off.

The saliency mask can enhance the task optimization process by directing updates to focus on the parameters that are crucial for knowledge updates. More advanced weight saliency estimation is expected to improve outcomes in future work.

\subsection{Integrated Fast-slow Weight Update}

By integrating the design of the four implementation components mentioned above with the task iteration formula from the theoretical framework, we obtain the proposed \textbf{U}nified \textbf{G}radient-based \textbf{C}ontinual \textbf{L}earning-\textbf{U}nlearning (\textbf{UG-CLU}). Specifically, in the inner loop for fast weights, we use adaptive coefficients from \eqref{eq: cl coefficient} and \eqref{eq: mu coefficient} to weight the task gradient update with the weight saliency mask from \eqref{eq: saliency mask}. Slow weights in the outer loop are updated by linearly interpolating the fine-tuned $\theta_k^R$ and $\theta_k$ in the weight space, achieving an estimated steepest descent for approximate CLU under the remaining output constraint in \eqref{eq: clu remain}. Thus, we derive the overall fast-slow weight update rule:
\begin{equation}\label{eq: inner loop}
\begin{aligned}
\text{Inner Loop : }&\mathop{\min}_{\theta^{\mathcal Q}_{k}}\mathcal L^R(\theta_k^R)\\
\text{s.t. }\theta_k^{\mathcal Q}=\theta_k-\beta_k &[\mathbf m\odot(-\nabla\mathcal L^{\mathcal Q}(\theta_k;\tilde{\boldsymbol \varepsilon}_{k}))],
\end{aligned}
\end{equation}
\begin{equation}\label{eq: outer loop}
\begin{aligned}
\text{Outer Loop : }\theta_{k+1}=&\,\theta_k-\alpha_k(\theta_k-\theta_k^R)\\\approx\theta_t-\alpha_k\beta_k^2\underbrace{(H_k^R)^{-1}}_{(R)}&[\underbrace{\mathbf m}_{(S)}\odot\underbrace{\nabla\mathcal L^{\mathcal Q}(\theta_k;\tilde{\boldsymbol \varepsilon}^{\mathcal Q}_{k})}_{(L)\&(U)}],
\end{aligned}
\end{equation}
where $\alpha_k$ denotes the learning rate of the slow weights. Consequently, we have derived an implementation from our theoretical results that can encompass both CL and MU within a unified framework. Even when switching between learning and unlearning tasks, our UG-CLU method only requires altering the computation of sample loss weighting, significantly enhancing system simplicity and component reusability, while effectively reducing the complexity of engineering implementation and maintenance. The complete algorithm is detailed in \textbf{Appendix.\,B}.
\section{Experiment}

\begin{figure}[tb]
    \centering
    \includegraphics[width=0.49\textwidth]{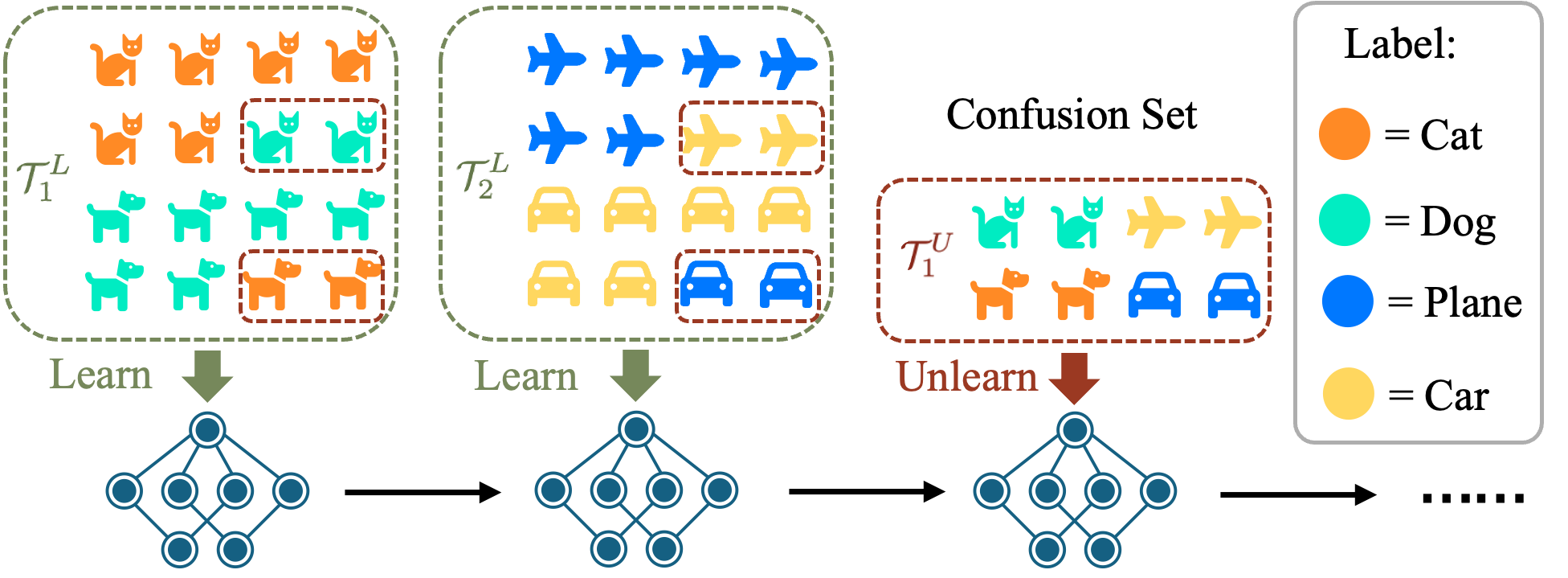}
    \caption{We adapt the interclass confusion unlearning setup from \cite{Goel2022TowardsAE} and extend it to the task-agnostic CLU system. Shapes and colors represent the actual and labeled classes, respectively. In each learning task, a portion of the data labels is shuffled and misclassified as noisy data, which needs to be addressed in the unlearning task to eliminate the harmful effects of these confusion sets on the model.}
    \label{fig: confusion}
\end{figure}

\subsection{CLU Settings and Evaluation}\label{subsec: setting and evaluation}

In terms of experimental setup, we extend from the class-incremental learning to task-aware and task-agnostic CLU setups, and introduce evaluation metrics for each setting.

\textbf{Class-Incremental Learning}~\cite{Hsu2018ReevaluatingCL,vandeVen2019ThreeSF}: For CL, we adopt the typical class-incremental learning setup, partitioning the dataset's class space into mutually exclusive and uniform subsets. The model gradually expands its classifier through sequential task training and is required to generate predictive responses for all seen classes during inference. \textbf{Evaluation} includes two core metrics: the average class accuracy of the final task, reflecting the model's learning accuracy (\textbf{LA}), and the measure of knowledge forgetting (\textbf{FM}), also known as Backward Transfer.

\textbf{Task-Aware CLU}: In previous literature~\cite{liu2022continual}, the class-incremental learning setup has been preliminarily extended to task-aware CLU, where requests to unlearn entire previously encountered tasks are inserted between training tasks. We use $(\mathcal Q_t, c_{t,1}, ..., c_{t,M_t})$ to denote the $t$-th task, with $+$ and $-$ as shorthand for learning and unlearning tasks, respectively, and $c_{t,m_t}$ representing the $M_t$ classes involved in the current task. \textbf{Evaluation}: In addition to the two evaluation metrics for CL methods, we also introduce MU evaluation metrics to assess the system's unlearning efficacy, including the average classification accuracy of the unlearned classes (\textbf{UA}) and the privacy leakage risk measured by membership inference attacks (\textbf{MIA})~\cite{Song2020SystematicEO,carlini2022membership,Jia2023ModelSC}.

\textbf{Task-Agnostic CLU}: We innovatively propose task-agnostic CLU setups that request fine-grained unlearning operations, including class-wise unlearning and random subset unlearning.
\begin{itemize}
    \item[$\bullet$] For \textbf{class-wise unlearning}, building upon the task-aware CLU system, we break the constraints of task boundaries, allowing selective unlearning of specific classes from historical learning tasks. \textbf{Evaluation} is the same as for task-aware CLU.
    \item[$\bullet$] For \textbf{random subset unlearning}, we adopt the interclass confusion setup proposed in \cite{Goel2022TowardsAE}. Specifically, based on the class-incremental learning setup, a subset of data is extracted from the training data of each task, and their labels are randomly shuffled so that they no longer belong to their original classes, forming a confusion set. These samples become noisy data in the learning task, and in subsequent unlearning tasks, the model is required to eliminate the impact of noise without knowing the correct labels, as illustrated in \textbf{Figure.\,\ref{fig: confusion}}. \textbf{Evaluation}: For the CL aspect, we still use \textbf{LA} and \textbf{FM}. For the MU aspect, we use unlearning accuracy (\textbf{UA}), i.e., the accuracy with which the confusion set is classified into labeled classes (the lower, the better), and the clean accuracy (\textbf{CA}), i.e., the accuracy with which the confusion set is classified into their original correct labels (the higher, the better). These metrics assess whether the unlearned model can generalize correctly on these data while discarding the unstable MIA on random subset unlearning.
\end{itemize}

\textbf{General Evaluation}: In addition to task-specific metrics, we also compute the output KL divergence metric naturally derived from approximate CLU (\textbf{KL}), as well as run-time efficiency (\textbf{RTE}). Detailed calculations for all evaluation metrics can be found in the \textbf{Appendix.\,C}.

\begin{table}[tb]
  \centering
  \caption{Performance summary of \textbf{task-aware CLU} on CIFAR-10 using ResNet-18. This table includes evaluations of Joint and ER-FT baselines with 5 MU methods paired with optimal CL methods combinations, CLU approaches CLPU-DER++ and UniCLUN, and our proposed UG-CLU. The result format is given by $a_{\pm b}$ with mean $a$ and standard deviation $b$ over 5 independent trials. Top-performing values are \textbf{boldfaced} with secondary leaders \underline{underlined}. RTE is reported in minutes.}
  \label{tab: cifar10-unlearn-task}
  \resizebox{0.49\textwidth}{!}{
  \begin{tabular}{cccccccc}
    \toprule
\multicolumn{2}{c}{\textbf{Method}} 
& \multicolumn{6}{c}{\textbf{(+0,1),(+2,3),(-0,1),(+4,5),(+6,7),(-4,5),(+8,9)}}  \\

\cmidrule(r){3-8} 

\textbf{CL} & \textbf{MU} & \textbf{LA} $\uparrow$ & \textbf{FM} $\uparrow$ & \textbf{UA} $\downarrow$ & \textbf{MIA} $\downarrow$ & \textbf{KL} $\downarrow$ & \textbf{RTE} \\

\midrule

\multicolumn{2}{c}{Joint-FT}
& 95.77\textsubscript{\textpm0.0}
& -
& \textbf{0.00}\textsubscript{\textpm0.0}
& \textbf{0.00}\textsubscript{\textpm0.0}
& 0.2196\textsubscript{\textpm0.0}
& -

\\

ER & FT
& 87.93\textsubscript{\textpm0.3}
& -3.97\textsubscript{\textpm0.5}
& 10.04\textsubscript{\textpm0.2}
& 12.50\textsubscript{\textpm0.2}
& 0.8605\textsubscript{\textpm0.1}
& 39.29

\\

\midrule

ER-EWC & GA
& 89.43\textsubscript{\textpm0.3}
& -3.62\textsubscript{\textpm0.3}
& \textbf{0.00}\textsubscript{\textpm0.0}
& 0.14\textsubscript{\textpm0.0}
& 0.8501\textsubscript{\textpm0.1}
& 43.76

\\

ER-ACE & NegGrad+
& 89.68\textsubscript{\textpm0.3}
& -6.50\textsubscript{\textpm0.1}
& \textbf{0.00}\textsubscript{\textpm0.0}
& 0.03\textsubscript{\textpm0.0}
& 0.8757\textsubscript{\textpm0.1}
& 40.57

\\

ER-ACE & SalUn
& 90.05\textsubscript{\textpm0.4}
& -6.25\textsubscript{\textpm0.2}
& \textbf{0.00}\textsubscript{\textpm0.0}
& \textbf{0.00}\textsubscript{\textpm0.0}
& 0.8320\textsubscript{\textpm0.1}
& 41.55

\\

DER++ & SCRUB
& 89.31\textsubscript{\textpm0.3}
& -3.33\textsubscript{\textpm0.3}
& \textbf{0.00}\textsubscript{\textpm0.0}
& 6.40\textsubscript{\textpm0.5}
& 0.9007\textsubscript{\textpm0.2}
& 55.31

\\

ER-ACE & L2UL
& 86,43\textsubscript{\textpm0.3}
& \underline{-2.37}\textsubscript{\textpm0.1}
& \textbf{0.00}\textsubscript{\textpm0.0}
& \textbf{0.00}\textsubscript{\textpm0.0}
& 1.2310\textsubscript{\textpm0.1}
& 60.89

\\

\midrule
\multicolumn{2}{c}{CLPU-DER++}
& \underline{90.75}\textsubscript{\textpm0.5}
& \textbf{-2.25}\textsubscript{\textpm0.2}
& \textbf{0.00}\textsubscript{\textpm0.0}
& \textbf{0.00}\textsubscript{\textpm0.0}
& \underline{0.8152}\textsubscript{\textpm0.4}
& 65.23

\\
\multicolumn{2}{c}{UniCLUN}
& 87.28\textsubscript{\textpm0.3}
& -5.00\textsubscript{\textpm0.5}
& \textbf{0.00}\textsubscript{\textpm0.0}
& \textbf{0.00}\textsubscript{\textpm0.0}
& 0.9972\textsubscript{\textpm0.1}
& 42.50

\\
\rowcolor{gray!20}
\multicolumn{2}{c}{UG-CLU}
& \textbf{91.72}\textsubscript{\textpm0.2}
& -6.10\textsubscript{\textpm0.1}
& \textbf{0.00}\textsubscript{\textpm0.0}
& \textbf{0.00}\textsubscript{\textpm0.0}
& \textbf{0.7084}\textsubscript{\textpm0.1}
& 39.19

\\ 
\bottomrule
\end{tabular}}
\end{table}

\subsection{Baselines and Implementation}

We select multiple baselines to compare and validate the effectiveness of our proposed method in CLU. First, we established the \textbf{Joint-RT} model as the theoretical performance upper bound for approximate CLU. Next, we included methods specifically designed for CLU, such as \textbf{CLPU-DER++}~\cite{liu2022continual} for task-aware scenarios and \textbf{UniCLUN}~\cite{chatterjee2024unifiedframeworkcontinuallearning}, which can be applied to task-agnostic CLU. Finally, we integrate combinations of CL and MU baselines. The CL component encompasses three primary gradient-based techniques: regularization methods (\textbf{EWC}~\cite{Kirkpatrick2016OvercomingCF}, \textbf{LwF}~\cite{Li2016LearningWF}), replay methods (\textbf{ER}~\cite{Chaudhry2019ContinualLW}, \textbf{ER-ACE}~\cite{Caccia2021NewIO}, \textbf{DER++}~\cite{Buzzega2020DarkEF}), and parameter selection methods (\textbf{MoFO}~\cite{chen2024mofomomentumfilteredoptimizermitigating}), all equipped with the ER standard memory buffer to ensure comparability. The MU component includes six strategies: fine-tuning only on retained data (\textbf{FT}~\cite{Warnecke2021MachineUO}), gradient ascent only on unlearning data (\textbf{GA}~\cite{Graves2020AmnesiacML,Thudi2021UnrollingSU}) with its improved version NegGrad+~\cite{Kurmanji2023MachineUI}, \textbf{SalUn}~\cite{Fan2023SalUnEM} using random labels with weight saliency for unlearning, \textbf{SCRUB}~\cite{Kurmanji2023TowardsUM} employing a bad teacher for two-stage unlearning via knowledge distillation, and \textbf{L2UL}~\cite{Cha2023LearningTU} using adversarial samples to guide unlearning and parameter regularization to maintain performance. The basic combination used ER and FT to construct the minimum system, while for the remaining five MU methods (GA/NegGrad+/SalUn/SCRUB/L2UL), a grid search is conducted across five CL methods (EWC/LwF/ER-ACE/DER++/MoFO) to select the combination that best approximates the KL divergence of the oracle model.

The experiments include two typical scenarios: CIFAR-10~\cite{Krizhevsky2009LearningML} using ResNet-18~\cite{He2015DeepRL}, and TinyImageNet~\cite{Le2015TinyIV} using Swin-T~\cite{Liu2021SwinTH}. The buffer size is 5,000 samples. Other hyperparameters are detailed in \textbf{Appendix\,D}.

\begin{table*}[ht]
  \centering
  \caption{Performance summary of \textbf{task-agnostic CLU} on CIFAR-10 with 2 task sequential orders using ResNet-18. This table includes evaluations of Joint and ER-FT baselines with 5 MU methods paired with optimal CL methods combinations, CLU approach UniCLUN, and our proposed UG-CLU. The result format is given by $a_{\pm b}$ with mean $a$ and standard deviation $b$ over 5 independent trials. Top-performing values are \textbf{boldfaced} with secondary leaders \underline{underlined}. RTE is reported in minutes.}
  \label{tab: cifar10-unlearn-class}
  \resizebox{0.95\textwidth}{!}{
  \begin{tabular}{cccccccccccccc}
    \toprule
\multicolumn{2}{c}{\textbf{Method}} 
& \multicolumn{6}{c}{\textbf{(+0,1),(+2,3),(-0),(+4,5),(+6,7),(-5),(+8,9),(-3)}} 
& \multicolumn{6}{c}{\textbf{(+0,1),(+2,3),(+4,5),(-1),(+6,7),(-2,3),(+8,9),(-4,7)}} \\

\cmidrule(r){3-8}  \cmidrule(r){9-14}

\textbf{CL} & \textbf{MU} & \textbf{LA} $\uparrow$ & \textbf{FM} $\uparrow$ & \textbf{UA} $\downarrow$ & \textbf{MIA} $\downarrow$ & \textbf{KL} $\downarrow$ & \textbf{RTE} 
& \textbf{LA} $\uparrow$ & \textbf{FM} $\uparrow$ & \textbf{UA} $\downarrow$ & \textbf{MIA} $\downarrow$ & \textbf{KL} $\downarrow$ & \textbf{RTE} \\

\midrule

\multicolumn{2}{c}{Joint-FT}
& 96.41\textsubscript{\textpm0.0}
& -
& 0.00\textsubscript{\textpm0.0}
& 0.00\textsubscript{\textpm0.0}
& 0.2366\textsubscript{\textpm0.0}
& -

& 96.96\textsubscript{\textpm0.0}
& -
& 0.00\textsubscript{\textpm0.0}
& 0.00\textsubscript{\textpm0.0}
& 0.1858\textsubscript{\textpm0.0}
& -

\\

ER & FT
& 89.01\textsubscript{\textpm0.4}
& -2.80\textsubscript{\textpm0.1}
& 20.20\textsubscript{\textpm0.2}
& 27.97\textsubscript{\textpm0.2}
& 1.7203\textsubscript{\textpm0.1}
& 41.41

& 92.64\textsubscript{\textpm0.3}
& -1.46\textsubscript{\textpm0.1}
& 17.26\textsubscript{\textpm0.2}
& 44.77\textsubscript{\textpm0.2}
& 2.1298\textsubscript{\textpm0.1}
& 43.69

\\

\midrule

ER-ACE & GA
& 88.38\textsubscript{\textpm0.5}
& -4.16\textsubscript{\textpm0.1}
& 2.40\textsubscript{\textpm0.2}
& 19.75\textsubscript{\textpm0.6}
& 1.4293\textsubscript{\textpm0.8}
& 42.47

& 85.18\textsubscript{\textpm0.1}
& -3.10\textsubscript{\textpm0.0}
& 7.92\textsubscript{\textpm0.0}
& 7.46\textsubscript{\textpm0.0}
& 2.3113\textsubscript{\textpm0.3}
& 42.83

\\

ER-ACE & NegGrad+
& 89.05\textsubscript{\textpm0.4}
& \underline{-3.59}\textsubscript{\textpm0.1}
& 1.67\textsubscript{\textpm0.1}
& 4.15\textsubscript{\textpm0.1}
& 1.2660\textsubscript{\textpm0.0}
& 43.29

& \underline{92.26}\textsubscript{\textpm0.0}
& \underline{-3.04}\textsubscript{\textpm0.1}
& \textbf{0.00}\textsubscript{\textpm0.0}
& 0.05\textsubscript{\textpm0.0}
& 1.5522\textsubscript{\textpm0.1}
& 43.45

\\

DER++ & SalUn
& 88.38\textsubscript{\textpm0.5}
& -4.47\textsubscript{\textpm0.1}
& \textbf{0.00}\textsubscript{\textpm0.0}
& \textbf{0.00}\textsubscript{\textpm0.0}
& 1.5800\textsubscript{\textpm0.1}
& 58.17

& 90.18\textsubscript{\textpm0.1}
& -3.34\textsubscript{\textpm0.0}
& \textbf{0.00}\textsubscript{\textpm0.0}
& \textbf{0.00}\textsubscript{\textpm0.0}
& 1.9552\textsubscript{\textpm0.2}
& 58.64

\\

DER++ & SCRUB
& \underline{89.10}\textsubscript{\textpm0.1}
& -3.87\textsubscript{\textpm0.1}
& 0.40\textsubscript{\textpm0.2}
& 7.39\textsubscript{\textpm0.3}
& \underline{1.1962}\textsubscript{\textpm0.1}
& 57.70

& 90.36\textsubscript{\textpm0.1}
& -4.00\textsubscript{\textpm0.0}
& 0.02\textsubscript{\textpm0.0}
& 0.02\textsubscript{\textpm0.0}
& \underline{1.2726}\textsubscript{\textpm0.3}
& 58.75

\\

ER-EWC & L2UL
& 88.77\textsubscript{\textpm0.1}
& -3.77\textsubscript{\textpm0.1}
& 16.23\textsubscript{\textpm0.3}
& 13.97\textsubscript{\textpm0.3}
& 1.7492\textsubscript{\textpm0.1}
& 64.28

& 84.04\textsubscript{\textpm0.1}
& -8.30\textsubscript{\textpm0.1}
& 7.36\textsubscript{\textpm0.1}
& 7.61\textsubscript{\textpm0.2}
& 2.0940\textsubscript{\textpm0.1}
& 73.58

\\

\midrule

\multicolumn{2}{c}{UniCLUN}
& 86.53\textsubscript{\textpm0.1}
& -6.19\textsubscript{\textpm0.1}
& \textbf{0.00}\textsubscript{\textpm0.0}
& \textbf{0.00}\textsubscript{\textpm0.0}
& 1.3605\textsubscript{\textpm0.1}
& 47.20

& 89.30\textsubscript{\textpm0.2}
& -5.04\textsubscript{\textpm0.2}
& \textbf{0.00}\textsubscript{\textpm0.0}
& \textbf{0.00}\textsubscript{\textpm0.0}
& 1.7872\textsubscript{\textpm0.1}
& 43.69

\\
\rowcolor{gray!20}
\multicolumn{2}{c}{UG-CLU}
& \textbf{92.33}\textsubscript{\textpm0.1}
& \textbf{-3.11}\textsubscript{\textpm0.1}
& \textbf{0.00}\textsubscript{\textpm0.0}
& \textbf{0.00}\textsubscript{\textpm0.0}
& \textbf{1.0247}\textsubscript{\textpm0.1}
& 41.34

& \textbf{92.86}\textsubscript{\textpm0.2}
& \textbf{-3.02}\textsubscript{\textpm0.2}
& \textbf{0.00}\textsubscript{\textpm0.0}
& \textbf{0.00}\textsubscript{\textpm0.0}
& \textbf{1.1282}\textsubscript{\textpm0.1}
& 41.44

\\ 
\bottomrule
\end{tabular}}
\end{table*}

\begin{table*}[ht]
  \centering
  \caption{Performance summary of \textbf{task-agnostic CLU} on TinyImageNet with 2 task sequential orders using Swin-T. This table includes evaluations of Joint and ER-FT baselines with 5 MU methods paired with optimal CL methods combinations, CLU approach UniCLUN, and our proposed UG-CLU. The result format is given by $a_{\pm b}$ with mean $a$ and standard deviation $b$ over 5 independent trials. Top-performing values are \textbf{boldfaced} with secondary leaders \underline{underlined}. RTE is reported in minutes. The task sequential order 1 is '(+0\dots19), (+20\dots39), (-20), (+40\dots59), (+60\dots79), (-60), (+80\dots99), (+100\dots119), (-100), (+120\dots139), (+140\dots159), (-140), (+160\dots179), (+180\dots199), (-180)'. The order 2 is '(+0\dots19), (+20\dots39), (+40\dots59), (+60\dots79), (-2,23), (+80\dots99), (+100\dots119),(+120\dots139), (-7,34,58,83), (+140\dots159), (+160\dots179), (+180\dots199), (-17,48,101,138)'.}
  \label{tab: tinyimagenet-unlearn-class}
  \resizebox{0.95\textwidth}{!}{
  \begin{tabular}{cccccccccccccc}
    \toprule
\multicolumn{2}{c}{\textbf{Method}} 
& \multicolumn{6}{c}{\textbf{Order 1}} 
& \multicolumn{6}{c}{\textbf{Order 2}} \\

\cmidrule(r){3-8}  \cmidrule(r){9-14}

\textbf{CL} & \textbf{MU} & \textbf{LA} $\uparrow$ & \textbf{FM} $\uparrow$ & \textbf{UA} $\downarrow$ & \textbf{MIA} $\downarrow$ & \textbf{KL} $\downarrow$ & \textbf{RTE} 
& \textbf{LA} $\uparrow$ & \textbf{FM} $\uparrow$ & \textbf{UA} $\downarrow$ & \textbf{MIA} $\downarrow$ & \textbf{KL} $\downarrow$ & \textbf{RTE} \\

\midrule

\multicolumn{2}{c}{Joint-FT}
& 85.48\textsubscript{\textpm0.1}
& -
& 0.00\textsubscript{\textpm0.0}
& 0.00\textsubscript{\textpm0.0}
& 0.0580\textsubscript{\textpm0.0}
& -

& 85.39\textsubscript{\textpm0.0}
& -
& 0.00\textsubscript{\textpm0.0}
& 0.00\textsubscript{\textpm0.0}
& 0.0760\textsubscript{\textpm0.0}
& -

\\

ER & FT
& 72.55\textsubscript{\textpm0.3}
& -18.67\textsubscript{\textpm0.5}
& 21.6\textsubscript{\textpm0.2}
& 99.92\textsubscript{\textpm0.0}
& 1.8205\textsubscript{\textpm0.4}
& 105.46

& 72.42\textsubscript{\textpm0.1}
& -18.23\textsubscript{\textpm0.1}
& 40.80\textsubscript{\textpm0.2}
& 84.92\textsubscript{\textpm0.2}
& 2.0839\textsubscript{\textpm0.1}
& 102.87

\\

\midrule

ER-ACE & GA
& 74.03\textsubscript{\textpm0.7}
& -13.49\textsubscript{\textpm0.7}
& 12.00\textsubscript{\textpm0.4}
& 31.04\textsubscript{\textpm0.3}
& 1.4468\textsubscript{\textpm0.2}
& 205.45

& 69.78\textsubscript{\textpm1.9}
& -17.66\textsubscript{\textpm1.3}
& 12.80\textsubscript{\textpm0.5}
& 35.02\textsubscript{\textpm0.6}
& 2.2960\textsubscript{\textpm0.2}
& 188.79

\\

DER++ & NegGrad+
& 72.67\textsubscript{\textpm0.6}
& -15.54\textsubscript{\textpm0.7}
& 3.20\textsubscript{\textpm0.1}
& 6.48\textsubscript{\textpm0.2}
& 1.7120\textsubscript{\textpm0.3}
& 347.34

& 73.57\textsubscript{\textpm0.4}
& -17.17\textsubscript{\textpm0.7}
& \textbf{2.40}\textsubscript{\textpm0.1}
& \underline{4.08}\textsubscript{\textpm0.2}
& 1.8192\textsubscript{\textpm0.3}
& 243.26

\\

DER++ & SalUn
& 74.21\textsubscript{\textpm0.5}
& -16.61\textsubscript{\textpm0.5}
& \textbf{0.00}\textsubscript{\textpm0.0}
& \underline{0.44}\textsubscript{\textpm0.0}
& 1.7773\textsubscript{\textpm0.1}
& 244.39

& \underline{74.61}\textsubscript{\textpm0.5}
& \underline{-14.16}\textsubscript{\textpm0.5}
& 22.40\textsubscript{\textpm0.2}
& 6.46\textsubscript{\textpm0.1}
& 2.2206\textsubscript{\textpm0.1}
& 213.63

\\

ER-LwF & SCRUB
& 74.03\textsubscript{\textpm0.2}
& -12.96\textsubscript{\textpm0.4}
& 12.80\textsubscript{\textpm0.3}
& 18.60\textsubscript{\textpm0.9}
& \underline{1.4397}\textsubscript{\textpm0.1}
& 196.71

& 73.36\textsubscript{\textpm0.4}
& -17.30\textsubscript{\textpm0.5}
& 11.60\textsubscript{\textpm0.2}
& 20.52\textsubscript{\textpm0.1}
& \underline{1.7606}\textsubscript{\textpm0.2}
& 168.91

\\

ER-MoFO & L2UL
& \underline{74.45}\textsubscript{\textpm0.6}
& \textbf{-12.35}\textsubscript{\textpm0.6}
& 24.80\textsubscript{\textpm0.2}
& 10.00\textsubscript{\textpm0.2}
& 1.9235\textsubscript{\textpm0.8}
& 567.52

& 73.51\textsubscript{\textpm0.5}
& -18.00\textsubscript{\textpm0.6}
& 10.80\textsubscript{\textpm0.2}
& 11.52\textsubscript{\textpm0.1}
& 2.4649\textsubscript{\textpm0.3}
& 233.23

\\

\midrule

\multicolumn{2}{c}{UniCLUN}
& 70.46\textsubscript{\textpm0.6}
& -23.07\textsubscript{\textpm0.3}
& 4.80\textsubscript{\textpm0.2}
& 17.24\textsubscript{\textpm0.4}
& 2.3542\textsubscript{\textpm0.2}
& 137.41

& 71.24\textsubscript{\textpm0.2}
& -19.72\textsubscript{\textpm0.3}
& 19.60\textsubscript{\textpm0.5}
& 18.78\textsubscript{\textpm0.4}
& 2.7132\textsubscript{\textpm0.4}
& 157.48

\\
\rowcolor{gray!20}
\multicolumn{2}{c}{UG-CLU}
& \textbf{75.27}\textsubscript{\textpm0.4}
& \underline{-12.68}\textsubscript{\textpm0.1}
& \textbf{0.00}\textsubscript{\textpm0.0}
& \textbf{0.00}\textsubscript{\textpm0.0}
& \textbf{1.3661}\textsubscript{\textpm0.3}
& 190.28

& \textbf{76.49}\textsubscript{\textpm0.3}
& \textbf{-13.74}\textsubscript{\textpm0.3}
& \textbf{2.40}\textsubscript{\textpm0.1}
& \textbf{1.00}\textsubscript{\textpm0.1}
& \textbf{1.5933}\textsubscript{\textpm0.1}
& 176.03

\\ 
\bottomrule
\end{tabular}}
\end{table*}

\subsection{Performance on Task-aware CLU}

The experimental results in \textbf{Table.\,\ref{tab: cifar10-unlearn-task}} indicate that in the task-aware setting, since the data distribution of unlearning tasks is entirely consistent with historical learning tasks and the task sequence structure is simple, most MU methods can achieve ideal unlearning efficacy that UA drops to 0, and the success rate of MIA is 0, fully meeting privacy protection requirements. It is worth noting that DER++, which typically performs state-of-the-art in CL, does not necessarily remain optimal when combined with MU methods. ER-ACE demonstrates the best synergistic effect and maintains stable knowledge retention capabilities when combined with various MU methods, highlighting the matching mechanism between CL and MU methods in CLU systems. Notably, while the specialized method CLPU-DER++ achieves good unlearning results, its reliance on task-level model snapshots significantly increases computational overhead. In contrast, the proposed UG-CLU method not only ensures complete unlearning efficacy but also achieves the highest learning accuracy, with the closest KL divergence to the oracle model's output distribution, while maintaining high computational efficiency.

\subsection{Performance on Task-agnostic CLU for class-wise unlearning}

Under the task-agnostic CLU framework proposed in this paper, for complex scenarios involving cross-task class unlearning (including both intra-task and cross-task unlearning sequences), the experimental results in \textbf{Tables.\,\ref{tab: cifar10-unlearn-class},\ref{tab: tinyimagenet-unlearn-class}} demonstrate that traditional methods such as GA and L2UL struggle to balance unlearning efficacy and model performance when faced with cross-interference from knowledge iterations. Among the six MU methods, only SalUn achieves stable unlearning across datasets, but its random label strategy causes the output distribution to significantly deviate from the oracle model. A comparison of CL methods reveals that the soft-label replay mechanism DER++ exhibits the best knowledge consolidation capability. Our UG-CLU method demonstrates adaptability in complex knowledge management scenarios, effectively unlearning across all scenarios while maintaining model generalization ability. The most direct evidence is that our model consistently achieves the lowest KL divergence from the oracle model's output distribution, successfully balancing privacy protection, model performance, and computational cost.

\begin{table}[tb]
  \centering
  \caption{Performance summary of \textbf{task-agnostic CLU} on CIFAR-10 with interclass confusion using ResNet-18. This table includes evaluations of Joint and ER-FT baselines with 5 MU methods paired with optimal CL methods combinations, CLU approach UniCLUN, and our proposed UG-CLU. The result format is given by $a_{\pm b}$ with mean $a$ and standard deviation $b$ over 5 independent trials. Top-performing values are \textbf{boldfaced} with secondary leaders \underline{underlined}. RTE is reported in minutes.}
  \label{tab: cifar10-unlearn-confusion}
  \resizebox{0.49\textwidth}{!}{
  \begin{tabular}{cccccccc}
    \toprule
\multicolumn{2}{c}{\textbf{Method}} 
& \multicolumn{6}{c}{\textbf{Interclass Confusion}}  \\

\cmidrule(r){3-8} 

\textbf{CL} & \textbf{MU} & \textbf{LA} $\uparrow$ & \textbf{FM} $\uparrow$ & \textbf{UA} $\downarrow$ & \textbf{CA} $\uparrow$ & \textbf{KL} $\downarrow$ & \textbf{RTE} \\

\midrule

\multicolumn{2}{c}{Joint-FT}
& 94.58\textsubscript{\textpm0.3}
& -
& 0.00\textsubscript{\textpm0.0}
& 95.2\textsubscript{\textpm0.1}
& 0.1825\textsubscript{\textpm0.0}
& -

\\

ER & FT
& 77.75\textsubscript{\textpm0.4}
& -10.46\textsubscript{\textpm0.1}
& 24.48\textsubscript{\textpm0.2}
& 36.96\textsubscript{\textpm0.1}
& 2.6246\textsubscript{\textpm0.2}
& 49.04

\\

\midrule

DER++ & GA
& 57.74\textsubscript{\textpm1.6}
& -28.35\textsubscript{\textpm1.1}
& 4.44\textsubscript{\textpm0.2}
& 56.00\textsubscript{\textpm1.2}
& 2.8801\textsubscript{\textpm0.6}
& 60.03

\\

ER-ACE & NegGrad+
& \underline{81.37}\textsubscript{\textpm0.2}
& -3.34\textsubscript{\textpm0.1}
& 3.92\textsubscript{\textpm0.1}
& \underline{72.40}\textsubscript{\textpm0.2}
& \underline{1.3721}\textsubscript{\textpm0.0}
& 46.14

\\

DER++ & SalUn
& 75.27\textsubscript{\textpm0.2}
& -1.96\textsubscript{\textpm0.1}
& \underline{2.72}\textsubscript{\textpm0.1}
& 65.48\textsubscript{\textpm0.3}
& 2.0577\textsubscript{\textpm0.1}
& 59.95

\\

ER-EWC & SCRUB
& 81.35\textsubscript{\textpm0.1}
& -4.18\textsubscript{\textpm0.0}
& 12.04\textsubscript{\textpm0.1}
& 70.32\textsubscript{\textpm0.2}
& 1.3755\textsubscript{\textpm0.0}
& 44.57

\\

ER-ACE & L2UL
& 77.37\textsubscript{\textpm0.1}
& \textbf{-1.37}\textsubscript{\textpm0.0}
& 9.44\textsubscript{\textpm0.1}
& 65.08\textsubscript{\textpm0.2}
& 1.7228\textsubscript{\textpm0.1}
& 45.39

\\

\midrule

\multicolumn{2}{c}{UniCLUN}
& 73.32\textsubscript{\textpm0.1}
& -14.3\textsubscript{\textpm0.2}
& 12.34\textsubscript{\textpm0.2}
& 65.68\textsubscript{\textpm0.2}
& 2.1391\textsubscript{\textpm0.1}
& 43.16

\\
\rowcolor{gray!20}
\multicolumn{2}{c}{UG-CLU}
& \textbf{82.42}\textsubscript{\textpm0.1}
& \underline{-1.45}\textsubscript{\textpm0.1}
& \textbf{2.20}\textsubscript{\textpm0.0}
& \textbf{78.36}\textsubscript{\textpm0.1}
& \textbf{1.2708}\textsubscript{\textpm0.1}
& 44.37

\\ 
\bottomrule
\end{tabular}}
\end{table}

\subsection{Performance on Task-agnostic CLU for random subset unlearning}

In the task-agnostic CLU scenario, we also design a random subset unlearning evaluation protocol for interclass confusion. This protocol requires the CLU agent to eliminate the memory of incorrect labels in the confusion set while maintaining the generalization and classification capabilities for the samples in the confusion set. As shown in the experimental results in \textbf{Table.\,\ref{tab: cifar10-unlearn-confusion}}, GA significantly degrades the overall model performance when unlearning the confusion set, while MU methods like SalUn and L2UL, which use incorrect labels for guidance, achieve suppression of the confusion set label accuracy. However, the injection of incorrect knowledge causes the samples to deviate from their original feature distribution, resulting in suboptimal performance on CA. Only unbiased guidance MU methods such as NegGrad+, SCRUB, and UniCLUN can balance the requirements of knowledge elimination and feature preservation. Our UG-CLU framework further incorporates manifold improvements for remaining data, achieving the best final LA and generalization performance on the confusion set. More importantly, the output distribution analysis shows that UG-CLU has the lowest KL divergence from the oracle model, confirming the precise control capability of our method during the knowledge elimination process.

\subsection{Ablation Studies}


We performe ablation experiments on the three key modules proposed in \textbf{Section.\,\ref{sec: method}}, with the results shown in \textbf{Table.\,\ref{tab: cifar10-ablation}}. The vanilla gradient descent iteration method for approximate CLU in our theory is the same as ER-NegGrad+. Starting from this, we further apply the improvements of our remain-preserved manifold. Incorporating implicit online Hessian approximation ($\S$.\,\ref{subsec: implicit hessian}) improve the model's final learning accuracy and achieve better preservation of the model's general performance, but interfering effective unlearning without help of adaptive coefficients and saliency mask. Introducing adaptive coefficients ($\S$.\,\ref{subsec: adaptive coef}) provide a right direction to unlearning and result in a slight boost to overall performance, and finally, adding the weight saliency mask ($\S$.\,\ref{subsec: saliency mask}) formed our complete UG-CLU method, achieving optimal CLU performance and the closest output distribution to the oracle model. This validates the effectiveness of these key modules in our design.

\begin{table}[tb]
  \centering
  \caption{Performance summary of ablation on components of UG-CLU. Evaluated on \textbf{task-agnostic CLU} on CIFAR-10 using ResNet-18. We start with the baseline ER-NegGrad+ and overlay the modules we designed. The result format is given by $a_{\pm b}$ with mean $a$ and standard deviation $b$ over 5 independent trials. RTE is reported in minutes.}
  \label{tab: cifar10-ablation}
  \resizebox{0.49\textwidth}{!}{
  \begin{tabular}{ccccccc}
    \toprule
\multirow{2}{*}{\textbf{Method}} 
& \multicolumn{6}{c}{\textbf{(+0,1),(+2,3),(-0),(+4,5),(+6,7),(-5),(+8,9),(-3)}}  \\

\cmidrule(r){2-7} 

 & \textbf{LA} $\uparrow$ & \textbf{FM} $\uparrow$ & \textbf{UA} $\downarrow$ & \textbf{MIA} $\downarrow$ & \textbf{KL} $\downarrow$ & \textbf{RTE} \\

\midrule

Joint-FT
& 96.41\textsubscript{\textpm0.0}
& -
& 0.00\textsubscript{\textpm0.0}
& 0.00\textsubscript{\textpm0.0}
& 0.2366\textsubscript{\textpm0.0}
& -

\\

ER-NegGrad+
& 85.38\textsubscript{\textpm0.5}
& -5.96\textsubscript{\textpm0.3}
& 0.27\textsubscript{\textpm0.0}
& 0.37\textsubscript{\textpm0.0}
& 1.6691\textsubscript{\textpm0.1}
& 43.41

\\
\midrule

+ \S.\,\ref{subsec: implicit hessian}
& 89.83\textsubscript{\textpm0.3}
& -5.97\textsubscript{\textpm0.3}
& 4.20\textsubscript{\textpm0.0}
& 5.61\textsubscript{\textpm0.5}
& 1.3412\textsubscript{\textpm0.2}
& 40.23

\\

+ \S.\,\ref{subsec: adaptive coef}
& 90.87\textsubscript{\textpm0.4}
& -4.87\textsubscript{\textpm0.2}
& 0.13\textsubscript{\textpm0.0}
& 0.07\textsubscript{\textpm0.0}
& 1.2605\textsubscript{\textpm0.1}
& 40.58

\\

\rowcolor{gray!20}
+ \S.\,\ref{subsec: saliency mask}
& 92.33\textsubscript{\textpm0.1}
& -3.11\textsubscript{\textpm0.1}
& 0.00\textsubscript{\textpm0.0}
& 0.00\textsubscript{\textpm0.0}
& 1.0247\textsubscript{\textpm0.1}
& 41.34

\\ 
\bottomrule
\end{tabular}}
\end{table}

\section{Conclusion and Limitation}

This study, through the systematic integration of CL and MU, has constructed a unified theoretical framework for CLU system, prompting the knowledge regulation capabilities of artificial intelligence systems in dynamic environments. By deriving a four-dimensional optimization direction that integrates knowledge acquisition, unlearning, consolidation of prior knowledge, and gradient saliency modulation from the KL divergence minimization objective, we have laid a mathematical foundation for CLU systems. Innovatively, we introduce a remain-preserved manifold constraint, leveraging the Hessian matrix of remaining data to modulate parameter update trajectories, theoretically validating its effectiveness in isolating the interference of new knowledge learning and target unlearning on existing representations, thereby mitigating catastrophic forgetting during knowledge updates. We design a fast-slow weight update technique to implicitly approximate Hessian modulation, combined with adaptive loss weighting coefficient and balanced weight saliency mask to form an efficient and scalable UG-CLU framework. Additionally, we break through the limitations of traditional task-aware unlearning, pioneering a task-agnostic CLU setup that supports cross-task category and random sample-level fine-grained operations. The effectiveness of the UG-CLU method is validated across multiple datasets and model architectures, providing a methodology system that combines theoretical rigor and practical feasibility for building lifelong learning systems. We look forward to advancing the application of CLU systems in real-world scenarios.

Despite the breakthroughs in CLU systems, this study still has limitations in theoretical completeness, system optimization, and application expansion: although we provide a general gradient update method for CLU systems, analyzing the convergence of gradient iterations and their impact on generalization in deep CLU systems remains challenging. Moreover, this paper employs only a basic reservoir sampling strategy for remaining data distribution, and future work could explore optimization mechanisms such as dynamic importance sampling. Additionally, experimental validation focuses on image classification tasks, and the applicability to natural language processing and multimodal scenarios remains to be verified. These research gaps provide directions for further refinement of CLU systems.



\bibliographystyle{IEEEtran}
\bibliography{ref.bib}

\appendices
\setcounter{section}{0}
\setcounter{figure}{0}
\makeatletter 
\renewcommand{\thefigure}{A\arabic{figure}}
\renewcommand{\thetable}{A\arabic{table}}
\makeatother
\setcounter{table}{0}
\setcounter{algorithm}{0}
\renewcommand{\thealgorithm}{A\arabic{algorithm}}
\setcounter{equation}{0}
\renewcommand{\theequation}{A\arabic{equation}}
\clearpage

\section{Detailed Proof}

\subsection{Proof of Steepest Descent in Section.\,3.4}

\begin{proof}
We form the following optimization problem for the steepest descent, which is to find the direction $\delta\theta=\theta_{k+1}-\theta_k$ that drives the objective function $F(\theta)$ descent fastest within a $\xi$-neighborhood of the current parameters $\theta_k$, and the $\xi$-neighborhood is rendered by the manifold metric $\rho(\cdot,\cdot)$:
\begin{equation}\label{eq: steepest descent rewrite (appendix)}
\delta \theta =\mathop{\arg\min}_{\delta \theta} F(\theta_k+\delta \theta)\quad \text{s.t.}\quad \rho(\theta_k,\theta_k+\delta \theta) \leq \xi.
\end{equation}
We introduce a Lagrange multiplier $\eta\geq0$ to construct the Lagrangian $\tilde{\mathcal L}$ for this optimization problem:
\begin{equation}
    \tilde{\mathcal L}(\delta \theta, \eta)=F(\theta_k+\delta \theta) + \eta(\rho(\theta_k,\theta_k+\delta \theta) - \xi).
\end{equation}
Using the Karush-Kuhn-Tucker (KKT) theorem, we can take the derivative of $\tilde{\mathcal L}$ $w.r.t.$ $\delta \theta$ and set it to zero:
\begin{equation}\label{eq: KKT (appendix)}
\begin{aligned}
    &\nabla_{\delta \theta}\tilde{\mathcal L}(\delta \theta, \eta)\\
    =&\nabla_{\delta \theta}F(\theta_k+\delta \theta) + \eta\nabla_{\delta \theta}\rho(\theta_k,\theta_k+\delta \theta)=0,
\end{aligned}
\end{equation}
where $\eta$ depends on $\xi$ and $\theta_k$ implicitly. We can rewrite \eqref{eq: KKT (appendix)} by variable substitution $\theta_{k+1}=\theta_k+\delta \theta$ and $\eta=1/{\alpha_k(\xi,\theta_k)}$.
\begin{equation}
\begin{aligned}
    &\nabla_{\theta_{k+1}}\tilde{\mathcal L}(\theta_{k+1}, \eta)\\
    =&\nabla_{\theta_{k+1}}F(\theta_{k+1}) + \frac{1}{\alpha_k(\xi,\theta_k)}\nabla_{\theta_{k+1}}\rho(\theta_{k},\theta_{k+1})
    =0.
\end{aligned}
\end{equation}
Thus, the original problem is transformed into an unconstrained optimization problem $w.r.t.$ $\theta_{k+1}$, where the neighborhood size is implicitly given by $\alpha_k$:
\begin{equation}
    \theta_{k+1} =\mathop{\arg\min}_{\theta_{k+1}} F(\theta_{k+1}) + \frac{1}{\alpha_k(\xi,\theta_k)}\rho(\theta_{k},\theta_{k+1}).
\end{equation}
\end{proof}

\subsection{Proof of \textbf{Proposition.\,1} in Section.\,4.2}

The optimization problem of approximate CLU is to find the steepest descent direction that minimizes the KL divergence with the optimal output within the vicinity of the current model $\theta_k$:
\begin{equation}\label{eq: optimization problem (appendix)}
\begin{aligned}
\theta_{k+1}
=&\mathop{\arg\min}\limits_{\theta_{k+1}} D_{\text{KL}}\left(p_z(\theta_*)||p_z(\theta_{k+1})\right)+\frac{1}{\alpha_k}\rho(\theta_k,\theta_{k+1})\\
=&\mathop{\arg\min}\limits_{\theta_{k+1}}
\underbrace{D_{\text{KL}}\left(p_{z^R}(\theta_*)||p_{z^R}(\theta_{k+1})\right)}_{(R(\theta_{k+1}))}p^R\\ 
&+\underbrace{D_{\text{KL}}\left(p_{z^L}(\theta_*)||p_{z^L}(\theta_{k+1})\right)}_{(L(\theta_{k+1}))}p^L\\
&+\underbrace{D_{\text{KL}}\left(p_{z^U}(\theta_*)||p_{z^U}(\theta_{k+1})\right)}_{(U(\theta_{k+1}))}p^U\\ 
&+\frac{1}{\alpha_k}\underbrace{\rho(\theta_k,\theta_{k+1})}_{(C(\theta_{k+1}))}, 
\end{aligned}
\end{equation}

\begin{prop}\label{prop: clu vanilla (appendix)}
Under the Euclidean manifold metric, $\rho(\theta_k,\theta_{k+1})=\frac12\lVert \theta_k-\theta_{k+1} \rVert^2$. Assuming that the current model satisfies \textbf{Assumption.\,4}. Let $H_*^R=\nabla^2\mathcal L^R(\theta_*)$ denote the Hessian of the oracle model on the remaining set and $H_{k}^L=\nabla^2\mathcal L^L(\theta_{k}), H_{k}^U=\nabla^2\mathcal L^U(\theta_{k})$ denote the Hessian of the current model on the request sets, respectively. Then, the steepest descent direction that minimizes \eqref{eq: optimization problem (appendix)} is approximately:
\begin{equation}\label{eq: clu vanilla (appendix)}
\begin{aligned}
    \theta_{k+1}-\theta_k:\approx-\alpha_k
    [
    &\underbrace{\nabla \mathcal L^R(\theta_k)}_{(R)}p^R\\
    +\underbrace{\frac12(H^L_kp^L + H^U_kp^U)(H_*^R)^{-1}}_{(S)}[&\underbrace{\nabla \mathcal L^L(\theta_k;\boldsymbol{1}-\boldsymbol{\varepsilon}^L_k)}_{(L)}] \\
    + \underbrace{\frac12(H^L_kp^L + H^U_kp^U)(H_*^R)^{-1}}_{(S)}[&\underbrace{\nabla \mathcal L^U(\theta_k;-\boldsymbol{\varepsilon}^U_k)}_{(U)}]
    ].
\end{aligned}
\end{equation}
\end{prop}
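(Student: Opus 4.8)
The plan is to read the update direction off the first-order stationarity condition of the penalized steepest-descent objective in \eqref{eq: optimization problem (appendix)}, and then to convert the resulting KL-gradients into computable quantities through a Taylor expansion at the inaccessible optimum $\theta_*$ together with the two stationarity identities that \emph{are} available. With the Euclidean metric $\rho=\frac12\lVert\theta_k-\theta_{k+1}\rVert^2$ we have $\nabla_{\theta_{k+1}}\rho=\theta_{k+1}-\theta_k$, so differentiating the objective in $\theta_{k+1}$ and equating to zero gives
\begin{equation}
\theta_{k+1}-\theta_k = -\alpha_k\big[p^R g^R(\theta_{k+1}) + p^L g^L(\theta_{k+1}) + p^U g^U(\theta_{k+1})\big],
\end{equation}
where $g^X(\theta):=\nabla_\theta D_{\mathrm{KL}}(p_{z^X}(\theta_*)\,\|\,p_{z^X}(\theta))$. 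The whole proposition thus reduces to approximating the three KL-gradients $g^R,g^L,g^U$.

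First I would establish the single fact that drives everything: for cross-entropy outputs $g^X(\theta)=\mathbb E_{y\sim p(\theta_*)}[\nabla_\theta\ell(\theta;(x^X,y))]$, so $g^X(\theta_*)=\nabla_\theta\sum_y p(y\mid x;\theta)=0$ and its Jacobian at $\theta_*$ is the Fisher matrix, which I identify with $H_*^X$ under Assumption~2. For the remaining term I would use the stronger modeling statement that $\theta_*$ is well trained on $\mathcal D^R$, so its soft outputs match the hard labels and $g^R(\theta_{k+1})\approx\nabla\mathcal L^R(\theta_k)$ directly (using $\theta_{k+1}\approx\theta_k$); this produces the bare retention gradient $(R)$ with no curvature modulation. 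For the learning and unlearning terms I would instead Taylor-expand $g^X(\theta_{k+1})\approx H_*^X(\theta_{k+1}-\theta_*)\approx H_k^X(\theta_k-\theta_*)$ (swapping $H_*^X\to H_k^X$ since $\theta_k\approx\theta_*$), leaving the single unknown displacement $\theta_k-\theta_*$.

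The crux is to solve for $\theta_k-\theta_*$ from the two optimality conditions. Assumption~4 at $\theta_k$ gives $\nabla\mathcal L^R(\theta_k)+\nabla\mathcal L^L(\theta_k;\boldsymbol\varepsilon_k^L)+\nabla\mathcal L^U(\theta_k;\boldsymbol\varepsilon_k^U)=0$, while optimality of $\theta_*$ gives $\nabla\mathcal L^R(\theta_*)+\nabla\mathcal L^L(\theta_*;\boldsymbol 1)=0$. Using the weight-linearity identity $\nabla\mathcal L^L(\theta_k;\boldsymbol 1-\boldsymbol\varepsilon_k^L)=\nabla\mathcal L^L(\theta_k;\boldsymbol 1)-\nabla\mathcal L^L(\theta_k;\boldsymbol\varepsilon_k^L)$ and $\nabla\mathcal L^U(\theta_k;-\boldsymbol\varepsilon_k^U)=-\nabla\mathcal L^U(\theta_k;\boldsymbol\varepsilon_k^U)$, the first condition rewrites the target gradient sum as $\nabla\mathcal L^L(\theta_k;\boldsymbol 1-\boldsymbol\varepsilon_k^L)+\nabla\mathcal L^U(\theta_k;-\boldsymbol\varepsilon_k^U)=\nabla\mathcal L^L(\theta_k;\boldsymbol 1)+\nabla\mathcal L^R(\theta_k)$. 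Taylor-expanding both right-hand gradients about $\theta_*$ and invoking the $\theta_*$ stationarity identity cancels their constant parts, leaving $(H_*^R+H_*^L)(\theta_k-\theta_*)$; hence $\theta_k-\theta_*\approx(H_*^R+H_*^L)^{-1}[\nabla\mathcal L^L(\theta_k;\boldsymbol 1-\boldsymbol\varepsilon_k^L)+\nabla\mathcal L^U(\theta_k;-\boldsymbol\varepsilon_k^U)]$. Substituting this into the $(L)$ and $(U)$ contributions $p^L H_k^L(\theta_k-\theta_*)+p^U H_k^U(\theta_k-\theta_*)$ and assembling with the $(R)$ term reproduces \eqref{eq: clu vanilla (appendix)}.

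The main obstacle — and the place where the argument is genuinely only approximate — is the final simplification of the curvature factor: the clean derivation produces $(H_*^R+H_*^L)^{-1}$, whereas the stated result carries $\frac12(H_*^R)^{-1}$. Bridging the two requires the local approximation $H_*^L\approx H_*^R$, i.e.\ $H_*^R+H_*^L\approx 2H_*^R$, which is defensible only because $\theta_k$, $\theta_{k+1}$, and $\theta_*$ all lie in a small neighborhood where, under the remain-preserving philosophy, the dominant curvature is that of the remaining set. I would therefore spend most of the care verifying the chain of local approximations ($g^X(\theta_*)=0$, Fisher $\approx$ Hessian, $\theta_{k+1}\approx\theta_k\approx\theta_*$, and $H_*^L\approx H_*^R$), since the algebra linking the two stationarity conditions is routine once these are in place; I would also flag explicitly that the asymmetric handling of the retention term (kept as a bare gradient) versus the learning and unlearning terms (routed through $\theta_k-\theta_*$ to expose the saliency factor) is a deliberate modeling choice rather than a forced consequence of the expansion.
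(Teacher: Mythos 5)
Your proposal is correct and follows essentially the same route as the paper's proof: the same decomposition into per-partition KL-gradient terms plus the metric term, the same score-zero and Fisher--Hessian identifications at $\theta_*$, the same bare retention gradient for the remaining part, and the same use of the two stationarity conditions (Assumption 4 at $\theta_k$ and optimality of $\theta_*$) to solve for $\Delta_k=\theta_k-\theta_*$ and expose the saliency factor. The one point where you diverge is actually a point in your favor: where you derive $(H_*^R+H_*^L)^{-1}$ and then explicitly invoke $H_*^L\approx H_*^R$ to reach $\frac12(H_*^R)^{-1}$, the paper arrives at the factor $2\nabla^2\mathcal L^R(\theta_*)$ by writing the remaining Hessian in place of the learning Hessian when Taylor-expanding $\nabla\mathcal L^L(\theta_*;\boldsymbol 1)$ around $\theta_k$ --- i.e., the approximation you flagged as the crux is exactly the step the paper performs implicitly, so your account is the more transparent version of the same argument.
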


\begin{proof}
We can decompose the original optimization problem into four parts: $R(\theta_{k+1})$, $L(\theta_{k+1})$, $U(\theta_{k+1})$, and $C(\theta_{k+1})$.

(\textbf{Part \uppercase\expandafter{\romannumeral1}}) First, we solve the remaining part $R(\theta_{k+1})$ by taking the first-order approximation at $\theta_k$:
\begin{equation}\label{eq: solved R theta_k+1 in euclidean (appendix)}
\begin{aligned}
    &R(\theta_{k+1})\\
    =&R(\theta_k) + \nabla R(\theta_k)^\top(\theta_{k+1}-\theta_k) \\
    =&R(\theta_k) - \mathbb{E}_{p(z^R;\theta_*)}\left[\nabla\log p(z^R;\theta_{k})\right]^\top(\theta_{k+1}-\theta_k) \\
    =&R(\theta_k) +\left[\nabla\mathcal L^R(\theta_k)\right]^\top(\theta_{k+1}-\theta_k).
\end{aligned}
\end{equation}

(\textbf{Part \uppercase\expandafter{\romannumeral2}}) Next, we solve the learning part $L(\theta_{k+1})$:
\begin{align}\label{eq: unsolved L theta_k+1 in euclidean (appendix)}
    L(\theta_{k+1})=L(\theta_k) + \nabla L(\theta_k)^\top(\theta_{k+1}-\theta_k).
\end{align}
We denote $\nabla L(\theta_k) = -\mathbb{E}_{p(z^L;\theta_*)}\left[\nabla \log p(z^L;\theta_{k})\right]=G^L(\theta_k)$, and then expand $G^L(\theta_*)$ at $\theta_k$:
\begin{equation}\label{eq: unsolved diff L theta_k in euclidean (appendix)}
\begin{aligned}
    &G^L(\theta_*)\\
    =&G^L(\theta_k)-\nabla G^L(\theta_k)(\theta_k-\theta_*) \\
    \Rightarrow &G^L(\theta_k)\\
    =&G^L(\theta_*)+\nabla G^L(\theta_k)(\theta_k-\theta_*) \\
    =&-\mathbb{E}_{p_{z^L}(\theta_*)}\left[\nabla \log p_{z^L}(\theta_{*})\right]-\mathbb{E}_{p_{z^L}(\theta_*)}\left[\nabla^2 \log p_{z^L}(\theta_{k})\right]\Delta_k \\
    =&0+H^L_k\Delta_k,
\end{aligned}
\end{equation}
where $H^L_k =-\mathbb{E}_{p(z^L;\theta_*)}\left[\nabla^2 \log p(z^L;\theta_k)\right]$ is the Hessian $w.r.t.$ the current model $\theta_*$ at learning set, and $\Delta_k=\theta_k-\theta_*$ is the difference between the current model $\theta_k$ and the optimal model $\theta_*$.
We cannot directly obtain the parameter difference $\Delta_k$ and need to estimate it. Recalling the assumption that 
\begin{equation}\label{eq: theta k argmin lu (apendix)}
\theta_{k} =\mathop{\arg\min}\limits_\theta \mathcal{L}^R(\theta) + \mathcal L^{L}(\theta;\boldsymbol{\varepsilon}^{L}_{k})+ \mathcal L^{U}(\theta;\boldsymbol{\varepsilon}^{U}_{k})
\end{equation}
\begin{equation}\label{eq: theta 0 argmin lu (apendix)}
\theta_{0} =\mathop{\arg\min}\limits_\theta \mathcal{L}^R(\theta) + \mathcal L^{L}(\theta;\boldsymbol{0})+ \mathcal L^{U}(\theta;\boldsymbol{1})
\end{equation}
\begin{equation}\label{eq: theta * argmin lu (apendix)}
\theta_{*} =\mathop{\arg\min}\limits_\theta \mathcal{L}^R(\theta) + \mathcal L^{L}(\theta;\boldsymbol{1})+ \mathcal L^{U}(\theta;\boldsymbol{0})
\end{equation}
We can utilize the optimality of $\theta_k$ on the weighted function to take the derivative $w.r.t.$ $\theta_k$ and set it to zero:
\begin{equation}
\begin{aligned}
    0=&\nabla \mathcal{L}^R(\theta_k) + \nabla \mathcal L^{L}(\theta_k;\boldsymbol{\varepsilon}^{L}_{k})+ \nabla \mathcal L^{U}(\theta_k;\boldsymbol{\varepsilon}^{U}_{k}) \\
    =&\left[\nabla \mathcal L^R(\theta_*) + \nabla^2 L^R(\theta_*)\Delta_k+o(\Delta_k)\right]\\
    &+ \nabla \mathcal L^{L}(\theta_k;\boldsymbol{\varepsilon}^{L}_{k}) 
    + \nabla \mathcal L^{U}(\theta_k;\boldsymbol{\varepsilon}^{U}_{k}) \\
    =& - \nabla \mathcal L^{L}(\theta_*;\boldsymbol{1}) + \nabla^2 \mathcal L^R(\theta_*)\Delta_k \\
    & + \nabla \mathcal L^{L}(\theta_k;\boldsymbol{\varepsilon}^{L}_{k}) 
    + \nabla \mathcal L^{U}(\theta_k;\boldsymbol{\varepsilon}^{U}_{k})\\
    = & - \left[\nabla \mathcal L^{L}(\theta_k;\boldsymbol{1}) - \nabla^2 \mathcal L^R(\theta_*)\Delta_k + o(\Delta_k) \right] \\
    & + \nabla^2 \mathcal L^R(\theta_*)\Delta_k +o(\Delta_k) \\
    & + \nabla \mathcal L^{L}(\theta_k;\boldsymbol{\varepsilon}^{L}_{k}) 
    + \nabla \mathcal L^{U}(\theta_k;\boldsymbol{\varepsilon}^{U}_{k})\\
    \approx& 2 \nabla^2 \mathcal L^R(\theta_*)\Delta_k
    - \nabla \mathcal L^{L}(\theta_k;\boldsymbol{1}-\boldsymbol{\varepsilon}^{L}_{k}) 
    + \nabla \mathcal L^{U}(\theta_k;\boldsymbol{\varepsilon}^{U}_{k})
    .
\end{aligned}
\end{equation}
Since $\theta_*$ minimizes $\mathcal{L}^R(\theta) + \mathcal L^{L}(\theta;\boldsymbol{1})$, $\nabla \mathcal{L}^R(\theta) + \mathcal L^{L}(\theta;\boldsymbol{1})=0$. By performing the Taylor expansion and dropping $o(\Delta_k)$ terms, we have
\begin{equation}
\begin{aligned}\label{eq: delta_k in euclidean (appendix)}
    &\Rightarrow \Delta_k  \\
    &\approx \frac12 \left[\nabla^2 \mathcal L^R(\theta_*)\right]^{-1}\left[\nabla \mathcal L^{L}(\theta_k;\boldsymbol{1}-\boldsymbol{\varepsilon}^{L}_{k}) 
    + \nabla \mathcal L^{U}(\theta_k;-\boldsymbol{\varepsilon}^{U}_{k})\right]\\
    &=\frac12\left(H^R_*\right)^{-1}\left[\nabla \mathcal L^{L}(\theta_k;\boldsymbol{1}-\boldsymbol{\varepsilon}^{L}_{k}) 
    + \nabla \mathcal L^{U}(\theta_k;-\boldsymbol{\varepsilon}^{U}_{k})\right].
\end{aligned}
\end{equation}
By plugging \eqref{eq: delta_k in euclidean (appendix)} into \eqref{eq: unsolved diff L theta_k in euclidean (appendix)}, we can get
\begin{equation}
\begin{aligned}\label{eq: solved diff L theta_k in euclidean (appendix)}
    &\nabla L(\theta_k)\\
    =\,& G^L(\theta_k) \\
    \approx\,& \frac12H^L_k\left(H^R_*\right)^{-1}\left[\nabla \mathcal L^{L}(\theta_k;\boldsymbol{1}-\boldsymbol{\varepsilon}^{L}_{k}) 
    + \nabla \mathcal L^{U}(\theta_k;-\boldsymbol{\varepsilon}^{U}_{k})\right].
\end{aligned}
\end{equation}

(\textbf{Part \uppercase\expandafter{\romannumeral3}}) We also solve the unlearning part $U(\theta_{k+1})$ with similar pipeline in solving $L(\theta_{k+1})$:
\begin{align}\label{eq: unsolved U theta_k+1 in euclidean (appendix)}
    U(\theta_{k+1})=U(\theta_k) + \nabla U(\theta_k)^\top(\theta_{k+1}-\theta_k).
\end{align}
We denote $\nabla U(\theta_k) = -\mathbb{E}_{p(z^U;\theta_*)}\left[\nabla \log p(z^U;\theta_{k})\right]=G^U(\theta_k)$, and then have:
\begin{equation}\label{eq: unsolved diff U theta_k in euclidean (appendix)}
\begin{aligned}
    G^U(\theta_*)=H^U_k\Delta_k,
\end{aligned}
\end{equation}
Bring in the result from \eqref{eq: delta_k in euclidean (appendix)}, we have:
\begin{equation}
\begin{aligned}\label{eq: solved diff U theta_k in euclidean (appendix)}
    &\nabla U(\theta_k)\\
    =\,& G^U(\theta_k) \\
    \approx\,& \frac12H^U_k\left(H^R_*\right)^{-1}\left[\nabla \mathcal L^{L}(\theta_k;\boldsymbol{1}-\boldsymbol{\varepsilon}^{L}_{k}) 
    + \nabla \mathcal L^{U}(\theta_k;-\boldsymbol{\varepsilon}^{U}_{k})\right].
\end{aligned}
\end{equation}

(\textbf{Part \uppercase\expandafter{\romannumeral4}}) Finally, we derive the constraint $C(\theta_{k+1})$ as follows,
\begin{align}\label{eq: solved C theta_k+1 in euclidean (appendix)}
    C(\theta_{k+1})
    =& C(\theta_k)+ \nabla C(\theta_k)^\top(\theta_{k+1}-\theta_k)\nonumber \\
    =& C(\theta_k)+ 2(\theta_{k+1}-\theta_k)^\top(\theta_{k+1}-\theta_k).
\end{align}
Substituting \eqref{eq: solved R theta_k+1 in euclidean (appendix)}, \eqref{eq: unsolved L theta_k+1 in euclidean (appendix)}, \eqref{eq: unsolved U theta_k+1 in euclidean (appendix)}, and \eqref{eq: solved C theta_k+1 in euclidean (appendix)} to each part in \eqref{eq: optimization problem (appendix)}, and take the derivative $w.r.t.$ $\theta_{k+1}$ of the minimization problem to derive the optimal solution, we have
\begin{equation}
\begin{aligned}
    0=&\nabla R(\theta_{k+1})p^R + \nabla L(\theta_{k+1})p^L \\
    &+ \nabla U(\theta_{k+1})p^U + \frac{1}{2\alpha_k}\nabla C(\theta_{k+1})
\end{aligned}
\end{equation}
Utilize the results from \eqref{eq: solved diff L theta_k in euclidean (appendix)} and \eqref{eq: solved diff U theta_k in euclidean (appendix)}, we have 
\begin{equation}
\begin{aligned}
    &\nabla\mathcal L^R(\theta_k)p^R\\
    &+\frac12H^L_k\left(H^R_*\right)^{-1}\left[\nabla \mathcal L^{L}(\theta_k;\boldsymbol{1}-\boldsymbol{\varepsilon}^{L}_{k}) 
    + \nabla \mathcal L^{U}(\theta_k;-\boldsymbol{\varepsilon}^{U}_{k})\right]p^L\\
    &+\frac12H^U_k\left(H^R_*\right)^{-1}\left[\nabla \mathcal L^{L}(\theta_k;\boldsymbol{1}-\boldsymbol{\varepsilon}^{L}_{k}) 
    + \nabla \mathcal L^{U}(\theta_k;-\boldsymbol{\varepsilon}^{U}_{k})\right]p^U\\
    &+\frac{1}{\alpha_k}(\theta_{k+1}-\theta_k)\\
    &=0.
\end{aligned}
\end{equation}
We thus conclude that
\begin{equation}\label{eq: solved steepest problem in euclidean (appendix)}
\begin{aligned}
    \Rightarrow \theta_{k+1}-\theta_k:\approx-\alpha_k
    [
    &\underbrace{\nabla \mathcal L^R(\theta_k)}_{(R)}p^R\\
    +\underbrace{\frac12(H^L_kp^L + H^U_kp^U)(H_*^R)^{-1}}_{(S)}[&\underbrace{\nabla \mathcal L^L(\theta_k;\boldsymbol{1}-\boldsymbol{\varepsilon}^L_k)}_{(L)}] \\
    + \underbrace{\frac12(H^L_kp^L + H^U_kp^U)(H_*^R)^{-1}}_{(S)}[&\underbrace{\nabla \mathcal L^U(\theta_k;-\boldsymbol{\varepsilon}^U_k)}_{(U)}]
    ].
\end{aligned}
\end{equation}
\end{proof}

\subsection{Proof of \textbf{Proposition.\,2} in Section.\,4.3}
\begin{prop}\label{prop: clu remain (appendix)}
    Using the model output KL divergence on the remaining set as the manifold metric, $\rho(\theta_k,\theta_{k+1})=D^R_{\text{KL}}\left(p_{z^R}(\theta_k)||p_{z^R}(\theta_{k+1}))\right)$. Assuming that the current model satisfies \textbf{Assumption.\,4}. Let $H_k^R=\nabla^2\mathcal L^R(\theta_k)$ represent the Hessian w.r.t. $\theta_k$ on the remaining set, then the steepest descent direction that minimizes \eqref{eq: optimization problem (appendix)} is approximately:
\begin{equation}\label{eq: clu remain (appendix)}
\begin{aligned}    
\Rightarrow\theta_{k+1}-\theta_k:\approx 
&-\frac{\alpha_k}{p^R+1}        
\underbrace{(H_k^R)^{-1}}_{(R)} \cdot\\          
&\underbrace{\frac12(H^L_kp^L + H^U_kp^U)(H_*^R)^{-1}}_{(S)}\cdot\\    
&[
    \underbrace{\nabla \mathcal L^L(\theta_k;\boldsymbol{1}-\boldsymbol{\varepsilon}^L_k)}_{(L)} + 
    \underbrace{\nabla \mathcal L^U(\theta_k;-\boldsymbol{\varepsilon}^U_k)}_{(U)}      
].
\end{aligned}
\end{equation}
\end{prop}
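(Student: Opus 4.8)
The plan is to mirror the four-part decomposition used in the proof of Proposition 1, writing the objective in \eqref{eq: optimization problem (appendix)} as $p^R R(\theta_{k+1}) + p^L L(\theta_{k+1}) + p^U U(\theta_{k+1}) + \frac{1}{\alpha_k}C(\theta_{k+1})$, and then changing only the two pieces where the manifold metric actually enters: the remaining term $(a)=R(\theta_{k+1})$ and the constraint term $(d)=C(\theta_{k+1})$. The learning and unlearning parts will be carried over essentially verbatim, so the whole effort concentrates on showing that, under the KL metric, both $(a)$ and $(d)$ contribute the \emph{same} curvature $H_k^R$, which is what turns the Euclidean identity into the natural-gradient-style preconditioner $(H_k^R)^{-1}$.

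First I would import Parts II--III of the earlier proof without modification. The expressions for $\nabla L(\theta_k)$ and $\nabla U(\theta_k)$ in \eqref{eq: solved diff L theta_k in euclidean (appendix)} and \eqref{eq: solved diff U theta_k in euclidean (appendix)}, together with the estimate of $\Delta_k=\theta_k-\theta_*$ in \eqref{eq: delta_k in euclidean (appendix)}, rely only on a first-order expansion of the learning/unlearning KL terms and on the optimality condition for $\theta_k$ from Assumption 4; none of this references $\rho$. Hence the linear-in-$\delta\theta$ contribution of $(b)$ and $(c)$ is still $\frac{1}{2}(p^L H_k^L + p^U H_k^U)(H_*^R)^{-1}[\nabla\mathcal{L}^L(\theta_k;\mathbf{1}-\boldsymbol{\varepsilon}_k^L)+\nabla\mathcal{L}^U(\theta_k;-\boldsymbol{\varepsilon}_k^U)]^\top\delta\theta$.

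The genuinely new content is the second-order treatment of $(a)$ and $(d)$. For $R(\theta_{k+1})$, the remain-preserving property $\nabla\mathcal{L}^R(\theta_k)\approx\nabla\mathcal{L}^R(\theta_*)\approx 0$ kills the linear term of its expansion about $\theta_k$, leaving the quadratic $\frac{1}{2}\delta\theta^\top\nabla^2 R(\theta_k)\delta\theta$, and the Fisher--Hessian identity for a well-fit model gives $\nabla^2 R(\theta_k)=\mathbb{E}_{p_{z^R}(\theta_*)}[-\nabla^2\log p_{z^R}(\theta_k)]\approx H_k^R$. For $C(\theta_{k+1})=D^R_{\text{KL}}(p_{z^R}(\theta_k)\|p_{z^R}(\theta_{k+1}))$, both its value and its gradient vanish at $\theta_{k+1}=\theta_k$, while its Hessian there is exactly the Fisher information of the remaining output distribution, again equal to $H_k^R$; thus $C(\theta_{k+1})\approx\frac{1}{2}\delta\theta^\top H_k^R\delta\theta$.

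Finally I would assemble the resulting quadratic-plus-linear objective, differentiate with respect to $\theta_{k+1}$, and set the gradient to zero: the two curvature terms combine into $(p^R+\frac{1}{\alpha_k})H_k^R\delta\theta$, and balancing against the linear term yields $\delta\theta\approx-\frac{\alpha_k}{\alpha_k p^R+1}(H_k^R)^{-1}\cdot\frac{1}{2}(p^L H_k^L+p^U H_k^U)(H_*^R)^{-1}[\nabla\mathcal{L}^L(\theta_k;\mathbf{1}-\boldsymbol{\varepsilon}_k^L)+\nabla\mathcal{L}^U(\theta_k;-\boldsymbol{\varepsilon}_k^U)]$, which is \eqref{eq: clu remain (appendix)} (the prefactor matching the form $\alpha_k/(\alpha_k p^R+1)$ used in the corollaries, the constant $\alpha_k$ being absorbed into the neighborhood weight in the stated version). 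The main obstacle I anticipate is the double invocation of the Fisher--Hessian equivalence: I must argue carefully that \emph{both} $\nabla^2 R(\theta_k)$ and the KL-metric Hessian collapse to the same $H_k^R$, which hinges on the remain-preserving assumption $\nabla\mathcal{L}^R\approx 0$ and on the model outputs lying close enough to the true remaining distribution that the expected-log-likelihood Hessian agrees with the empirical-loss Hessian; controlling the discarded $o(\delta\theta)$ and $o(\Delta_k)$ remainders under Assumptions 2 and 4 is the delicate bookkeeping.
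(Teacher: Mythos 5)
Your proposal follows essentially the same route as the paper's own proof: import Parts II--III (the learning/unlearning expansions and the $\Delta_k$ estimate) unchanged, expand both $R(\theta_{k+1})$ and $C(\theta_{k+1})$ to second order using $\nabla\mathcal{L}^R(\theta_k)\approx 0$ and the Fisher--Hessian equivalence so that each contributes the curvature $H_k^R$, then differentiate and rearrange. Your derived prefactor $\alpha_k/(\alpha_k p^R+1)$ is in fact the one consistent with the algebra and with the paper's Corollaries 1--2 (the proposition's stated $\alpha_k/(p^R+1)$ appears to be a minor typo in the paper), so your treatment of that point is correct.
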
 

\begin{proof}
Now, the steepest descent optimization problem for approximate CLU is as follows:
\begin{equation}
\begin{aligned}\label{eq: steepest problem in KL (appendix)}
&\theta_{k+1}\\
=&\mathop{\arg\min}\limits_{\theta_{k+1}} D_{\text{KL}}\left(p_z(\theta_*)||p_z(\theta_{k+1}))\right)\\
&+\frac{1}{\alpha_k}D_{\text{KL}}\left(p_{z^R}(\theta_k)||p_{z^R}(\theta_{k+1}))\right) \\
=&\mathop{\arg\min}\limits_{\theta_{k+1}}
\underbrace{D_{\text{KL}}\left(p_{z^R}(\theta_*)||p_{z^R}(\theta_{k+1}))\right)}_{(R(\theta_{k+1}))}p^{\mathcal Q}\\
&+ \underbrace{D_{\text{KL}}\left(p_{z^L}(\theta_*)||p_{z^L}(\theta_{k+1}))\right)}_{(L(\theta_{k+1}))}p^L \\
&+ \underbrace{D_{\text{KL}}\left(p_{z^U}(\theta_*)||p_{z^U}(\theta_{k+1}))\right)}_{(U(\theta_{k+1}))}p^U \\
&+ \frac{1}{\alpha_k}\underbrace{D_{\text{KL}}\left(p_{z^R}(\theta_k)||p_{z^R}(\theta_{k+1}))\right)}_{(C(\theta_{k+1}))}
\end{aligned} 
\end{equation}
The result of learning $L(\theta_{k+1})$ and unlearning part $U(\theta_{k+1})$ is the same as that in the Euclidean distance metric. And the remaining part $R(\theta_{k+1})$ and the constraint $C(\theta_{k+1})$ vary due to the output KL divergence metric $D^R_{\text{KL}}$. Note that $\nabla\mathcal L^R(\theta_{k+1})\approx\nabla\mathcal L^R(\theta_k)\approx\nabla\mathcal L^R(\theta_0)\approx0$. This enables us to take the second-order Taylor expansion at $\theta_k$ for the remaining part and the constraint.
\begin{equation}\label{eq: unsolved R theta_k+1 in KL (appendix)}
\begin{aligned}
    R(\theta_{k+1})
    &=R(\theta_k) + \nabla R(\theta_k)^\top(\theta_{k+1}-\theta_k)\\
    &+\frac12(\theta_{k+1}-\theta_k)^\top\nabla^2 R(\theta_k)(\theta_{k+1}-\theta_k) 
\end{aligned}
\end{equation}
\begin{equation}\label{eq: solved diff R theta_k in KL (appendix)}
    \nabla R(\theta_k)=- \mathbb{E}_{p(z^R;\theta_*)}\left[\nabla\log p(z^R;\theta_{k})\right] = \nabla \mathcal L^R(\theta_k)\approx 0
\end{equation}
\begin{equation}\label{eq: solved sec diff R theta_k in KL (appendix)}
    \nabla^2 R(\theta_k)=- \mathbb{E}_{p(z^R;\theta_*)}\left[\nabla^2\log p(z^R;\theta_{k})\right] = \nabla^2 \mathcal L^R(\theta_k)
\end{equation}
Substituting \eqref{eq: solved diff R theta_k in KL (appendix)} and \eqref{eq: solved sec diff R theta_k in KL (appendix)} into \eqref{eq: unsolved R theta_k+1 in KL (appendix)}, we can derive the remaining part:
\begin{equation}\label{eq: solved R theta_k+1 in KL (appendix)}
    R(\theta_{k+1})
    \approx R(\theta_k) + \frac12(\theta_{k+1}-\theta_k)^\top H^R_k (\theta_{k+1}-\theta_k)
\end{equation}
As for the constraint, we have 
\begin{equation}\label{eq: unsolved C theta_k+1 in KL (appendix)}
\begin{aligned}
    C(\theta_{k+1})
    &=C(\theta_k) + \nabla C(\theta_k)^\top(\theta_{k+1}-\theta_k)\\
    &+\frac12(\theta_{k+1}-\theta_k)^\top\nabla^2 C(\theta_k)(\theta_{k+1}-\theta_k) 
\end{aligned}
\end{equation}
\begin{equation}\label{eq: solved diff C theta_k in KL (appendix)}
    \nabla C(\theta_k)=- \mathbb{E}_{p(z^R;\theta_k)}\left[\nabla\log p(z^R;\theta_{k})\right] = 0
\end{equation}

\begin{equation}\label{eq: solved sec diff C theta_k in KL (appendix)}
\begin{aligned}
    \nabla^2 C(\theta_k)&=- \mathbb{E}_{p(z^R;\theta_k)}\left[\nabla^2\log p(z^R;\theta_{k})\right] \\
    &= F^R_k \approx H^R_k = \nabla^2 \mathcal L^R(\theta_k)
\end{aligned}
\end{equation}
Substituting \eqref{eq: solved diff C theta_k in KL (appendix)}and \eqref{eq: solved sec diff C theta_k in KL (appendix)} into \eqref{eq: unsolved C theta_k+1 in KL (appendix)}, we get the constraint
\begin{equation}\label{eq: solved C theta_k+1 in KL (appendix)}
    C(\theta_{k+1})
    \approx C(\theta_k) + \frac12(\theta_{k+1}-\theta_k)^\top H^R_k (\theta_{k+1}-\theta_k).
\end{equation}
Bringing \eqref{eq: solved diff L theta_k in euclidean (appendix)}, \eqref{eq: solved diff U theta_k in euclidean (appendix)}, \eqref{eq: solved R theta_k+1 in KL (appendix)}, and \eqref{eq: solved C theta_k+1 in KL (appendix)} into \eqref{eq: steepest problem in KL (appendix)}, and take the derivative w.r.t. $\theta_{k+1}$ of the minimization problem to derive the optimal solution, we have
\begin{equation}
\begin{aligned}
    0=&\nabla R(\theta_{k+1})p^R + \nabla L(\theta_{k+1})p^L \\
    &+ \nabla U(\theta_{k+1})p^U + \frac{1}{\alpha_k}\nabla C(\theta_{k+1})
\end{aligned}
\end{equation}
\begin{equation}
\begin{aligned}
    &\Rightarrow H^R_k (\theta_{k+1}-\theta_k)p^R\\
    &+\frac12H^L_k\left(H^R_*\right)^{-1}\left[\nabla \mathcal L^{L}(\theta_k;\boldsymbol{1}-\boldsymbol{\varepsilon}^{L}_{k}) 
    + \nabla \mathcal L^{U}(\theta_k;-\boldsymbol{\varepsilon}^{U}_{k})\right]p^L\\
    &+\frac12H^U_k\left(H^R_*\right)^{-1}\left[\nabla \mathcal L^{L}(\theta_k;\boldsymbol{1}-\boldsymbol{\varepsilon}^{L}_{k}) 
    + \nabla \mathcal L^{U}(\theta_k;-\boldsymbol{\varepsilon}^{U}_{k})\right]p^U\\
    &+\frac{1}{\alpha_k} H^R_k (\theta_{k+1}-\theta_k)\\
    &=0.
\end{aligned} 
\end{equation}
By rearranging the terms, we get
\begin{equation}\label{eq: solved steepest problem in KL (appendix)}
\begin{aligned}    
\Rightarrow\theta_{k+1}-\theta_k:\approx 
&-\frac{\alpha_k}{p^R+1}        
\underbrace{(H_k^R)^{-1}}_{(R)} \cdot\\          
&\underbrace{\frac12(H^L_kp^L + H^U_kp^U)(H_*^R)^{-1}}_{(S)}\cdot\\    
&[
    \underbrace{\nabla \mathcal L^L(\theta_k;\boldsymbol{1}-\boldsymbol{\varepsilon}^L_k)}_{(L)} + 
    \underbrace{\nabla \mathcal L^U(\theta_k;-\boldsymbol{\varepsilon}^U_k)}_{(U)}      
].
\end{aligned}
\end{equation}

\end{proof}

\subsection{Proof of \textbf{Proposition.\,3} in Section.\,5.2}
\begin{prop}\label{prop: implicit hessian approx (appendix)} 
For implicit online Hessian approximation in fast update, suppose $\beta_k,\delta_k$ is small, $\beta_k<\sqrt{\delta_k/|\nabla \mathcal L^R(\theta_k)-[\nabla \mathcal L^R(\theta_k)]^2|}$, $\mathcal L^R$ is $\mu$-smooth, i.e., $\lVert \nabla \mathcal L^R(\theta) - \nabla \mathcal L^R(\theta') \rVert_2\leq \mu\lVert\theta-\theta' \rVert_2$, and there exist an $\zeta_k$-neighborhood $\mathcal N(\theta_k^R,\zeta_k)$ of the optimal model parameter $\theta_k^R=\mathop{\arg\min}_{\theta^{\mathcal Q}_{k}}\mathcal L^R(\theta^{\mathcal Q}_k)$, which includes $\theta_k$ and $\theta_k^{\mathcal Q}$. Then, the iterative update term approximately is,
\begin{equation}\label{eq: implicit hessian update  (appendix)}
\begin{aligned}
    \theta_k-\theta_k^R :\approx &\beta_k^2 \left[ \nabla^2\mathcal L^R(\theta_k)\right]^{-1}\nabla \mathcal L^{\mathcal Q}(\theta_k)\\
    =&\beta_k^2 (H_k^R)^{-1}\nabla \mathcal L^{\mathcal Q}(\theta_k).
\end{aligned}
\end{equation}
\end{prop}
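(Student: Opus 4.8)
The plan is to characterize $\theta_k^R$ through its first-order optimality condition and then propagate the task step into a Taylor expansion of $\nabla\mathcal L^R$, treating $\beta_k$ as the small parameter. Since $\theta_k^R$ minimizes $\mathcal L^R$ over the neighborhood $\mathcal N(\theta_k^R,\zeta_k)$, it satisfies $\nabla\mathcal L^R(\theta_k^R)=0$; the assumption that this neighborhood contains both $\theta_k$ and $\theta_k^{\mathcal Q}$ guarantees that all relevant Hessians are evaluated at mutually close points, so that $\nabla^2\mathcal L^R(\theta_k^{\mathcal Q})\approx\nabla^2\mathcal L^R(\theta_k)=H_k^R$ up to an error governed by $\mu$-smoothness. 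First I would write $\theta_k^R=\theta_k^{\mathcal Q}+\Delta$ and expand $0=\nabla\mathcal L^R(\theta_k^{\mathcal Q}+\Delta)$, solving for the fine-tuning displacement $\theta_k^{\mathcal Q}-\theta_k^R$ in terms of $\nabla\mathcal L^R(\theta_k^{\mathcal Q})$ and $H_k^R$, which already produces the inverse Hessian $(H_k^R)^{-1}$ characteristic of converging to a local minimum.

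The second step is to re-expand $\nabla\mathcal L^R(\theta_k^{\mathcal Q})$ about $\theta_k$ using $\theta_k^{\mathcal Q}=\theta_k-\beta_k\nabla\mathcal L^{\mathcal Q}(\theta_k)$. Invoking the remain-preserving property established earlier, namely $\nabla\mathcal L^R(\theta_k)\approx 0$, the $O(1)$ contribution drops out, and the leading surviving content is driven by the interaction between the task displacement $-\beta_k\nabla\mathcal L^{\mathcal Q}(\theta_k)$ and the curvature of $\mathcal L^R$. I would then assemble $\theta_k-\theta_k^R=(\theta_k-\theta_k^{\mathcal Q})+(\theta_k^{\mathcal Q}-\theta_k^R)$ and collect powers of $\beta_k$: the $O(\beta_k)$ bare task step must cancel against the $O(\beta_k)$ part of the fine-tuning displacement. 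This forced cancellation is precisely why the result appears at order $\beta_k^2$ rather than $\beta_k$, and why an inverse Hessian multiplies the task gradient instead of a bare Hessian. Finally I would use $\mu$-smoothness to bound the Taylor remainders over $\mathcal N(\theta_k^R,\zeta_k)$ and invoke the hypothesis $\beta_k<\sqrt{\delta_k/\lvert\nabla\mathcal L^R(\theta_k)-[\nabla\mathcal L^R(\theta_k)]^2\rvert}$ to certify that the residual remaining-gradient term, once scaled by $\beta_k^2$, is dominated by $\delta_k$ and can be discarded, leaving exactly $\beta_k^2(H_k^R)^{-1}\nabla\mathcal L^{\mathcal Q}(\theta_k)$.

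I expect the delicate cancellation of the first-order terms to be the main obstacle. Because the $O(\beta_k)$ contributions annihilate, the claim lives entirely in the second-order remainder, so the bookkeeping must be carried out carefully enough to show that what survives is the clean inverse-Hessian-modulated task gradient rather than a higher-derivative artifact, and that the neglected remain-gradient and cubic-remainder contributions are uniformly small in the stated neighborhood. Getting the error budget to close — tying the $\delta_k$ bound and the $\mu$-smoothness constant together so that the dropped terms are genuinely $o(\beta_k^2)$ while the retained term is exactly $\beta_k^2(H_k^R)^{-1}\nabla\mathcal L^{\mathcal Q}(\theta_k)$ — is where I anticipate the real work, since the $O(\beta_k)$ near-degeneracy leaves little slack in the approximation.
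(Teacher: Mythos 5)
Your skeleton (first-order optimality of $\theta_k^R$, Taylor expansions, $\mu$-smoothness to control remainders, the neighborhood assumption to keep all expansion points close) matches the paper's, but your route stalls one step short of the claim, and the step it is missing is exactly the one you flagged as "the real work." Carry your cancellation bookkeeping out honestly: with $g=\nabla\mathcal L^{\mathcal Q}(\theta_k)$, $H=H_k^R$, $\mathbf T=\nabla^3\mathcal L^R(\theta_k)$, and $\nabla\mathcal L^R(\theta_k)\approx 0$, your two steps give $\theta_k^{\mathcal Q}-\theta_k^R\approx[\nabla^2\mathcal L^R(\theta_k^{\mathcal Q})]^{-1}\nabla\mathcal L^R(\theta_k^{\mathcal Q})$ with $\nabla\mathcal L^R(\theta_k^{\mathcal Q})\approx-\beta_k Hg+\tfrac{\beta_k^2}{2}\mathbf T[g,g]$ and $[\nabla^2\mathcal L^R(\theta_k^{\mathcal Q})]^{-1}\approx H^{-1}+\beta_k H^{-1}\mathbf T[g]H^{-1}$; assembling, the $O(\beta_k)$ terms cancel exactly as you predict, but what survives is, up to the constants of the expansion,
\begin{equation*}
\theta_k-\theta_k^R\;\approx\;-\tfrac{\beta_k^2}{2}\,H^{-1}\,\mathbf T[g,g],
\end{equation*}
an inverse Hessian applied to the \emph{third-derivative tensor contracted with two copies of the task gradient} --- not $\beta_k^2H^{-1}g$. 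These are structurally different objects (quadratic versus linear in $g$), and no error-budget argument converts one into the other; moreover, if you keep a nonzero residual $r=\nabla\mathcal L^R(\theta_k)$, an additional term $H^{-1}r$ enters that has nothing to do with the task step. So the "higher-derivative artifact" you worried about is genuinely what your plan leaves behind.

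The paper's proof reaches the stated formula by a different mechanism, and it is precisely the hypothesis you misallocated. The paper expands $0=\nabla\mathcal L^R(\theta_k^R)$ directly at $\theta_k$, keeps the unknown displacement aggregated inside the linear term $H_k^R(\theta_k^R-\theta_k)$ (no decomposition, hence no cancellation to manage), approximates the quadratic tensor term by substituting the known initialization displacement $\theta_k^{\mathcal Q}-\theta_k=-\beta_k g$, rewrites that contraction as an elementwise square $\beta_k^2\,g^2$, and then invokes the condition $\beta_k<\sqrt{\delta_k/\lvert\nabla\mathcal L-[\nabla\mathcal L]^2\rvert}$ to replace $g^2$ by $g$ at cost $o(\delta_k)$. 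This square-to-linear replacement is the only place in the whole argument where a term linear in $\nabla\mathcal L^{\mathcal Q}(\theta_k)$ at order $\beta_k^2$ can emerge; solving the remaining linear equation $0\approx H_k^R(\theta_k^R-\theta_k)+\beta_k^2 g$ then yields the proposition. In your plan, that hypothesis is demoted to "certifying that the residual remaining-gradient term can be discarded," which is not its function --- the residual remaining gradient is controlled separately by $\mu$-smoothness, via $\lVert\nabla\mathcal L^R(\theta_k)\rVert=\lVert\nabla\mathcal L^R(\theta_k)-\nabla\mathcal L^R(\theta_k^R)\rVert\le\mu\zeta_k$. Without the square-to-linear device, your derivation terminates at the tensor expression above and cannot produce $\beta_k^2(H_k^R)^{-1}\nabla\mathcal L^{\mathcal Q}(\theta_k)$.
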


\begin{proof}
The objective function of implicit Hessian approximation can be formulated as:
\begin{equation}\label{eq: fast update (appendix)}
\mathop{\min}_{\theta^{\mathcal Q}_{k}}\mathcal L^R(\theta_k^{\mathcal Q})\quad \text{s.t.}\quad \theta_k^{\mathcal Q}=\theta_k-\beta_k\nabla \mathcal L^{\mathcal Q}(\theta_k),
\end{equation}
We need to get the optimal parameter $\theta_k^R$ that minimizes \eqref{eq: fast update (appendix)}, which means $0 = \frac{\partial\mathcal L^R(\theta_k^R)}{\partial \theta_k^R}$. We can take the Taylor expansion at $\theta_k$,
\begin{equation}
\begin{aligned}\label{eq: unsolved single iha (appendix)}
    0 &= \frac{\partial\mathcal L^R(\theta_k^R)}{\partial \theta_k^R} = \nabla \mathcal L^R(\theta_k^R) \\
    &= \nabla \mathcal L^R(\theta_k) + H_k^R(\theta_k^R-\theta_k) \\
    &+ (\theta_k^R-\theta_k)^\top \otimes \mathbf{T} \otimes (\theta_k^R-\theta_k) + o(\zeta_k)
\end{aligned}
\end{equation}
where $H_k^R=\nabla^2 \mathcal L^R(\theta_k)$ and $\mathbf T$ represent the Hessian matrix and the third-order symmetric tensor on the remaining set, respectively, and $\otimes$ denotes the Kronecker product.

From (\textbf{A2}) and (\textbf{A3}), we can reduce the first-order term to $o(\mu\zeta_k)$,
\begin{equation}\label{eq: first-order in iha (appendix)}
    \lVert \nabla \mathcal L^R(\theta^R_k) - \nabla \mathcal L^R(\theta_k) \rVert_2\leq \mu\lVert\theta^R_k-\theta_k \rVert_2 \leq \mu\zeta_k.
\end{equation}

To simplify the second-order term with (\textbf{A1}), we have 
\begin{align}\label{eq: second-order in iha (appendix)}
    &(\theta_k^R-\theta_k)^\top \otimes \mathbf{T} \otimes (\theta_k^R-\theta_k) \nonumber \\
    =& (\theta_k^{\mathcal Q}-\theta_k)^\top \otimes \mathbf{T} \otimes (\theta_k^{\mathcal Q}-\theta_k) + o(\epsilon_k) \nonumber \\
    =& \mathbf{C} \odot (\theta_k^{\mathcal Q}-\theta_k)^2 + o(\epsilon_k) \nonumber \\
    \approx & \beta^2\left(\nabla \mathcal L^{\mathcal Q}(\theta_k)\right)^2 + o(\epsilon_k) \nonumber \\
    =& \beta^2 \nabla \mathcal L^{\mathcal Q}(\theta_k) + o(\delta_k) + o(\epsilon_k) 
\end{align}

Bringing \eqref{eq: first-order in iha (appendix)} and \eqref{eq: second-order in iha (appendix)} into \eqref{eq: unsolved single iha (appendix)}, we have 
\begin{equation}
\begin{aligned}
    0 &= \nabla \mathcal L^R(\theta_k^R) \\
    &\approx H_k^R(\theta_k^R-\theta_k) + \beta^2 \nabla \mathcal L^{\mathcal Q}(\theta_k) \\
    &+ o(\delta_k) + o(\epsilon_k) + o(\mu\epsilon_k) + o(\epsilon_k^2)
\end{aligned}
\end{equation}
Then, we can derive
\begin{equation}
    \theta_k-\theta_k^R \approx \beta_k^2 (H_k^R)^{-1}\nabla \mathcal L^{\mathcal Q}(\theta_k).
\end{equation}

\end{proof}
\section{Algorithm}
We present the algorithm of our proposed UG-CLU in \textbf{Algorithm.\,\ref{alg: ug-clu}}.

\begin{algorithm}[htb]
    \caption{The Algorithm of Proposed UG-CLU}
    \label{alg: ug-clu}
\begin{algorithmic}[1]
    \STATE {\bfseries Input:} Sequential tasks $\{\mathcal T_t\}_{t=1}^T$, memory buffer $\mathcal B^R$, model $\theta$, outer loop learning rate $\alpha$, inner loop iteration number $K_{\text{in}}$, outer loop iteration number $K_{\text{out}}$, initial inner loop learning rate for learning $\beta^L$, unlearning $\beta^U$, and remaining $\beta^R$, temperature scalar $\lambda^{\mathcal Q}$, weight saliency mask threshold $\gamma$ 
    \FOR{$t=1$ {\bfseries to} $T$}
        \STATE {\bfseries Initialize:} $\mathcal T_t=(\mathcal D_t, \mathcal Q_t), \theta_0=\theta_t$.
        \IF{$\mathcal Q_t = U$}
            \STATE Remove all related samples from memory buffer
        \ENDIF    
        \STATE Sample task sample batch {\bfseries from} $\mathcal D^{\mathcal Q}$
        \STATE Compute $G^R_{0}=|\nabla \mathcal L^R(\theta_k)|$
        \FOR{$k=1$ {\bfseries to} $K_{\text{out}}$}
        \IF{$\mathcal Q_t = L$}
            \STATE Compute adaptive coefficients $\tilde{\boldsymbol \varepsilon}^L_{k-1}$
            \STATE Compute $G^{\mathcal Q}_{k-1}=|\nabla \mathcal L^L(\theta_{k-1},\boldsymbol{1}-\tilde{\boldsymbol{\varepsilon}}^L_{k-1}))|$
            \STATE Compute weight saliency mask by $\mathbf{m}=\mathbf{I}[G^{\mathcal Q}_{k-1}(G^R_{0})^{-1} \geq \gamma]$
            \STATE $\theta_{k-1}^{\mathcal Q}=\theta_{k-1}-\beta^L[\mathbf m\odot\nabla\mathcal L^L(\theta_{k-1};\boldsymbol{1}-\tilde{\boldsymbol{\varepsilon}}^L_{k-1})]$
        \ELSE
            \STATE Compute adaptive coefficients $\tilde{\boldsymbol \varepsilon}^U_{k-1}$
            \STATE Compute $G^{\mathcal Q}_{k-1}=|\nabla \mathcal L^U(\theta_{k-1},-\tilde{\boldsymbol{\varepsilon}}^U_{k-1}))|$
            \STATE Compute weight saliency mask by $\mathbf{m}=\mathbf{I}[G^{\mathcal Q}_{k-1}(G^R_{0})^{-1} \geq \gamma]$
            \STATE $\theta_{k-1}^{\mathcal Q}=\theta_{k-1}-\beta^U[\mathbf m\odot\nabla\mathcal L^U(\theta_{k-1};-\tilde{\boldsymbol{\varepsilon}}^U_{k-1})]$
        \ENDIF
            \STATE $\theta^R_0=\theta_{k-1}^{\mathcal Q}$
            \FOR{$k'=1$ {\bfseries to} $K_{\text{in}}$}
                \STATE Sample remaining sample batch {\bfseries from} memory buffer $\mathcal B^R$
                \STATE $\theta^R_{k'}=\theta^R_{k'-1}-\beta^R\nabla\mathcal L^R(\theta^R_{k'-1})$
        \ENDFOR
        \STATE $\theta_k=\theta_{k-1}-\alpha(\theta_{k-1}-\theta_{K_{\text{in}}}^{\mathcal Q})$
        \STATE Update memory buffer by task sample batch
    \ENDFOR

    \ENDFOR
\end{algorithmic}
\end{algorithm}
\section{Evaluation Metrics}

\begin{itemize}
    \item[$\bullet$] Learning accuracy (\textbf{LA}): We use LA to quantify the model's knowledge acquisition capability during CL process. This metric is calculated as the average classification accuracy of the final model on the test sets of all seen classes. Specifically, let $a_{t,j} \in [0,1]$ denote the classification accuracy on the classes appearing in the $j$-th task after the incremental learning of the $t$-th task $(j \leq t)$. The output space for calculating $a_{t,j}$ includes all previously seen classes. \textit{For CLU systems, a higher LA is desirable}. The LA metric can be calculated as follows:
    $$
        \mathrm{LA}=\frac{1}{T}\sum^T_{t=1}a_{T,t}
    $$
    \item[$\bullet$] Forget measure (\textbf{FM}): By measuring the degree of knowledge forgetting during the knowledge iteration process of the CLU system, we assess the stability of the agent. We evaluate forgetting by considering the difference $f_{T,j}$ between the final model's performance on all classes and its best past performance. \textit{For CLU systems, it is generally considered that, under similar LA conditions, a higher FM is better}. The specific calculation of FM is as follows:
    $$
        f_{T,j}=\max\limits_{i\in\{1,\dots,T\}}(a_{i,j}-a_{T,j})
    $$
    $$
        \mathrm{FM}=\frac{1}{T-1}\sum_{t=1}^{T-1}f_{T,t}
    $$
    \item[$\bullet$] Unlearning accuracy (\textbf{UA}): The effectiveness of the unlearning method is evaluated by assessing the model's average accuracy on all unlearning data $\mathcal D^U$. We consider the highest privacy leakage risk, meaning that we begin evaluating the average accuracy on the target unlearning data immediately after the unlearning request is initiated and the unlearning task is executed. This evaluation is repeated after each subsequent task until the sequence ends, and the highest accuracy for each class or sample is averaged to determine the UA. \textit{For CLU systems, a lower UA is preferable, indicating better privacy protection}.
    \item[$\bullet$] Membership inference attack (\textbf{MIA}): Following \cite{Foster2023FastMU}, we use a prediction-based membership inference attack to evaluate the privacy protection of knowledge in the CLU system. We first need to train an adversarial classifier to predict whether a particular example was contained in the training dataset. The predictions of the remaining dataset and the testing dataset by the unlearned model are collected. Based on these predictions, we calculate a prediction metric to train the attack classifier. Specifically, a Logistic Regression classifier is trained on the remaining prediction metrics labeled as ‘1’ and the testing prediction metrics labeled as ‘0’. This attack classifier is then applied to the unlearning prediction metrics to predict the membership of these unlearning samples. The success rate of the membership inference attack on $\mathcal D^U$ is quantified by the true positive rate predicted by our classifier, where $\mathrm{MIA} = TP/N_f$. Here, $TP$ represents the count of unlearning samples still identified as training samples, and $N_f$ is the size of the unlearning dataset. For the prediction metric, we considered prediction entropy, modified prediction entropy\cite{Foster2023FastMU}, and logit scaling from LiRA~\cite{carlini2022membership}, ultimately selecting the modified prediction entropy, which yielded the highest attack accuracy, as the prediction metric. Moreover, we consider the highest privacy leakage risk by measuring the MIA for unlearning data across all subsequent tasks and taking the average of the highest values as the final MIA. For CLU systems, a lower MIA is preferable, indicating better privacy protection.
    \item[$\bullet$] Clean accuracy (\textbf{CA}): For interclass confusion unlearning~\cite{Goel2022TowardsAE} in the task-agnostic CLU setting, we evaluate the final model’s average accuracy on the true labels of the confusion set. \textit{A higher value of this metric indicates better performance}.
    \item[$\bullet$] Output KL divergence with the oracle model (\textbf{KL}): We collect the predicted class probabilities of the final model and the oracle model on the complete dataset and then compute the KL divergence sample-wise. It is important to note that the KL divergence between the output distributions of oracle models with different random initializations is not necessarily zero due to the stochastic nature of the training algorithm. \textit{An ideal approximate CLU system aims to minimize the KL divergence between its output distribution and that of the optimal model}.
\end{itemize}
\section{Implementation Details}

For the CIFAR-10~\cite{Krizhevsky2009LearningML} dataset using ResNet-18~\cite{He2015DeepRL}, we employ the SGD optimizer with a weight decay of $5\times10^{-4}$, a batch size of $128$, 50 epochs for the learning task, and a default of 400 steps for the unlearning task. Our UG-CLU method uses a fixed outer loop learning rate of $\alpha=1.0$ and an inner loop iteration number of $K_{in}=1$. UG-CLU searches for the inner loop learning rate within the range $[0.1, 0.01]$ for learning, $[0.01, 0.001]$ for unlearning, and $[0.1, 0.01]$ for remaining tasks. The temperature scalar $\lambda^{\mathcal Q}$ is searched within $[0.0, 2.0]$, and the threshold $\gamma$ within $[0.5, 3.0]$. Experiments are conducted on 1 RTX 4090.

For the TinyImageNet~\cite{Le2015TinyIV} dataset, Swin-T~\cite{Liu2021SwinTH} is initialized from \texttt{torchvision} weights pre-trained on ImageNet. We use the AdamW optimizer~\cite{loshchilov2019decoupled} with a weight decay of $5\times10^{-2}$, a batch size of $64$, 10 epochs for the learning task, and a default of 150 steps for the unlearning task. UG-CLU searches for the inner loop learning rate within the range $[0.1, 0.01]$ for learning, $[0.01, 0.001]$ for unlearning, and $[0.1, 0.01]$ for remaining tasks. The temperature scalar $\lambda^{\mathcal Q}$ is searched within $[0.0, 2.0]$, and the threshold $\gamma$ within $[0.5, 3.0]$. Experiments are conducted on 1 RTX 4090.

\section{Additional Experimental Results}

\begin{figure*}
    \centering
    \includegraphics[width=\textwidth]{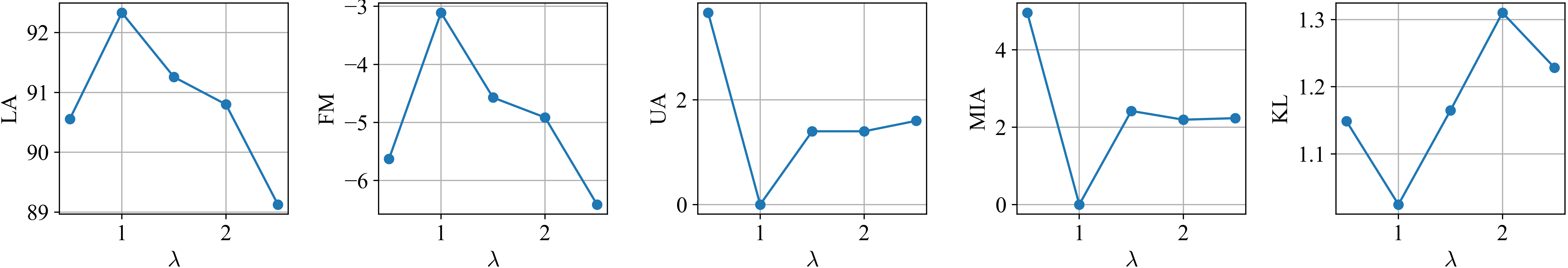}
    \caption{Sensitivity analysis of threshold $\lambda$ in weight saliency mask, evaluated on task-agnostic CLU setting, CIFAR-10 using ResNet-18.}
    \label{fig: metrics lambda}
\end{figure*}

\subsection{Sensitivity analysis on $\lambda$}
We first conducted a sensitivity analysis on the threshold $\lambda$ in the weight saliency mask. The trends of all metrics in task-agnostic CLU as $\lambda$ varies are shown in \textbf{Figure.\,\ref{fig: metrics lambda}}. When $\lambda$ is around $1$, the UG-CLU system achieves the best performance.
Notably, when $\lambda$ fluctuates, the relative amplitude of changes in the evaluation metrics is small, confirming the robustness of the algorithm to threshold selection.


\end{document}